\documentclass[twoside]{article}

\usepackage[preprint]{aistats2027}
\usepackage{float}

\usepackage[round]{natbib}

\let\cite\citep

\def\safedef#1{%
   \ifx#1\undefined
      \expandafter\def\expandafter#1%
   \else
      \errmessage{The \string#1 is defined already}%
      \expandafter\def\expandafter\tmp
   \fi
}

\usepackage{amsthm,amsmath,bbm,amsfonts,amssymb}
\usepackage{dsfont} 
\usepackage{mathtools}
\usepackage{commath}
\usepackage{eqnarray}
\mathtoolsset{showonlyrefs=false}
\allowdisplaybreaks 

\usepackage{xspace}
\usepackage[T1]{fontenc}
\usepackage[utf8]{inputenc}
\usepackage[usenames,dvipsnames]{xcolor} 

\usepackage{hyperref,url}

\usepackage{wrapfig}
\usepackage[export]{adjustbox}
\usepackage{graphicx,tabularx}
\usepackage{multirow,hhline}
\usepackage{booktabs}
\usepackage{pbox} 
\usepackage{algorithm,algorithmic}

\usepackage[export]{adjustbox}

\usepackage{enumitem}
\usepackage[normalem]{ulem}

{\color{kjsaved}}%

\usepackage{framed}
\usepackage[most]{tcolorbox}
\definecolor{kjgray}{rgb}{.7,.7,.7}

\newtheoremstyle{kjstyle}
{1ex} 
{\topsep} 
{\itshape} 
{} 
{\bfseries} 
{.} 
{.5em} 
{} 

\newtheoremstyle{kjstyle2}
{.0em} 
{.0em} 
{\itshape} 
{} 
{\bfseries} 
{.} 
{.5em} 
{} 

\newtheoremstyle{kjstylenoitalic}
{1ex} 
{\topsep} 
{} 
{} 
{\bfseries} 
{.} 
{.5em} 
{} 

\tcbset{kjboxstyle/.style={title={},breakable,colback=white,enhanced jigsaw,boxrule=1.3pt,sharp corners,colframe=kjgray,boxsep=0pt,coltitle={black},attach title to upper={},left=.8ex,bottom=.4em}}    

\tcbset{kjboxstylec/.style={title={},breakable,colback=white,enhanced jigsaw,boxrule=1.3pt,sharp corners,colframe=kjgray,boxsep=0pt,coltitle={black},attach title to upper={},before skip=1.5ex,left=.8ex,bottom=.4em,top=0.4em,enlarge top by=-0.3em,enlarge bottom by=-0.3em}}    

\newtheorem{theorem}{Theorem}

\theoremstyle{kjstyle}
\AtBeginEnvironment{thm}{\begin{tcolorbox}[kjboxstyle]}%
  \AtEndEnvironment{thm}{\end{tcolorbox}}%
\theoremstyle{kjstyle}
\AtBeginEnvironment{lem}{\begin{tcolorbox}[kjboxstyle]}%
  \AtEndEnvironment{lem}{\end{tcolorbox}}%
\theoremstyle{kjstyle}
\AtBeginEnvironment{prop}{\begin{tcolorbox}[kjboxstyle]}%
  \AtEndEnvironment{prop}{\end{tcolorbox}}%
\theoremstyle{kjstyle}
\AtBeginEnvironment{cor}{\begin{tcolorbox}[kjboxstyle]}%
  \AtEndEnvironment{cor}{\end{tcolorbox}}%

\theoremstyle{kjstyle}
\AtBeginEnvironment{defn}{\begin{tcolorbox}[kjboxstyle]}%
  \AtEndEnvironment{defn}{\end{tcolorbox}}%

\theoremstyle{kjstyle}
\AtBeginEnvironment{ass}{\begin{tcolorbox}[kjboxstyle]}%
  \AtEndEnvironment{ass}{\end{tcolorbox}}%

\theoremstyle{kjstyle}
\AtBeginEnvironment{conj}{\begin{tcolorbox}[kjboxstyle]}%
  \AtEndEnvironment{conj}{\end{tcolorbox}}%

\theoremstyle{kjstyle}
\AtBeginEnvironment{question}{\begin{tcolorbox}[kjboxstyle]}%
  \AtEndEnvironment{question}{\end{tcolorbox}}%

\theoremstyle{kjstylenoitalic}
\AtBeginEnvironment{remark}{\begin{tcolorbox}[kjboxstyle]}%
  \AtEndEnvironment{remark}{\end{tcolorbox}}%
\theoremstyle{kjstylenoitalic}
\AtBeginEnvironment{openp}{\begin{tcolorbox}[kjboxstyle]}%
  \AtEndEnvironment{openp}{\end{tcolorbox}}%

\theoremstyle{kjstylenoitalic}
\AtBeginEnvironment{ex}{\begin{tcolorbox}[kjboxstyle]}%
  \AtEndEnvironment{ex}{\end{tcolorbox}}%

\theoremstyle{kjstylenoitalic}

\usepackage{mdframed}
\usepackage{lipsum}
\definecolor{kjgray}{rgb}{.7,.7,.7}
\makeatletter



\makeatletter
\renewcommand{\paragraph}{%
  \@startsection{paragraph}{4}%
  {\z@}{0.50ex \@plus 1ex \@minus .2ex}{-1em}%
  {\normalfont\normalsize\bfseries}%
}
\makeatother

\usepackage[customcolors,shade]{hf-tikz} 

\usepackage{tabularx} 
\newcolumntype{P}[1]{>{\centering\arraybackslash}p{#1}}
\newcolumntype{M}[1]{>{\centering\arraybackslash}m{#1}}

\def\ddefloop#1{\ifx\ddefloop#1\else\ddef{#1}\expandafter\ddefloop\fi}

\def\ddef#1{\expandafter\def\csname #1#1\endcsname{\ensuremath{\mathbb{#1}}}}
\ddefloop ABCDFGHIJKLMNORSTUWXYZ\ddefloop 

\def\ddef#1{\expandafter\def\csname c#1\endcsname{\ensuremath{\mathcal{#1}}}}
\ddefloop ABCDEFGHIJKLMNOPQRSTUVWXYZ\ddefloop

\def\ddef#1{\expandafter\def\csname b#1\endcsname{\ensuremath{{\mathbf{#1}}}}}
\ddefloop ABCDEFGHIJKLMNOPQRSTUVWXYZ\ddefloop  
\def\ddef#1{\expandafter\def\csname b#1\endcsname{\ensuremath{{\boldsymbol{#1}}}}}
\ddefloop abcdeghijklmnopqrtsuvwxyz\ddefloop  

\def\ddef#1{\expandafter\def\csname h#1\endcsname{\ensuremath{\hat{#1}}}}
\ddefloop ABCDEFGHIJKLMNOPQRSTUVWXYZabcdefghijklmnopqrsuvwxyz\ddefloop 
\def\ddef#1{\expandafter\def\csname hc#1\endcsname{\ensuremath{\hat{\mathcal{#1}}}}}
\ddefloop ABCDEFGHIJKLMNOPQRSTUVWXYZ\ddefloop
\def\ddef#1{\expandafter\def\csname hb#1\endcsname{\ensuremath{\hat{\mathbf{#1}}}}}
\ddefloop ABCDEFGHIJKLMNOPQRSTUVWXYZ\ddefloop %
\def\ddef#1{\expandafter\def\csname hb#1\endcsname{\ensuremath{\hat{\boldsymbol{#1}}}}}
\ddefloop abcdefghijklmnopqrstuvwxyz\ddefloop %

\def\ddef#1{\expandafter\def\csname t#1\endcsname{\ensuremath{\tilde{#1}}}}
\ddefloop ABCDEFGHIJKLMNOPQRSTUVWXYZabcdefgijklmnpqtsuvwxyz\ddefloop 
\def\ddef#1{\expandafter\def\csname tc#1\endcsname{\ensuremath{\tilde{\mathcal{#1}}}}}
\ddefloop ABCDEFGHIJKLMNOPQRSTUVWXYZ\ddefloop
\def\ddef#1{\expandafter\def\csname tb#1\endcsname{\ensuremath{\tilde{\mathbf{#1}}}}}
\ddefloop ABCDEFGHIJKLMNOPQRSTUVWXYZ\ddefloop
\def\ddef#1{\expandafter\def\csname tb#1\endcsname{\ensuremath{\tilde{\boldsymbol{#1}}}}}
\ddefloop abcdefghijklmnopqrstuvwxyz\ddefloop %

\def\ddef#1{\expandafter\def\csname bar#1\endcsname{\ensuremath{\bar{#1}}}}
\ddefloop ABCDEFGHIJKLMNOPQRSTUVWXYZabcdefghijklmnopqrtsuvwxyz\ddefloop
\def\ddef#1{\expandafter\def\csname barc#1\endcsname{\ensuremath{\bar{\mathcal{#1}}}}}
\ddefloop ABCDEFGHIJKLMNOPQRSTUVWXYZ\ddefloop
\def\ddef#1{\expandafter\def\csname barb#1\endcsname{\ensuremath{\bar{\mathbf{#1}}}}}
\ddefloop ABCDEFGHIJKLMNOPQRSTUVWXYZ\ddefloop
\def\ddef#1{\expandafter\def\csname barb#1\endcsname{\ensuremath{\bar{\boldsymbol{#1}}}}}
\ddefloop abcdefghijklmnopqrstuvwxyz\ddefloop %

\def\ddef#1{\expandafter\def\csname war#1\endcsname{\ensuremath{\overline{#1}}}}
\ddefloop ABCDEFGHIJKLMNOPQRSTUVWXYZabcdefghijklmnopqrtsuvwxyz\ddefloop
\def\ddef#1{\expandafter\def\csname warc#1\endcsname{\ensuremath{\overline{\mathcal{#1}}}}}
\ddefloop ABCDEFGHIJKLMNOPQRSTUVWXYZ\ddefloop
\def\ddef#1{\expandafter\def\csname warb#1\endcsname{\ensuremath{\overline{\mathbf{#1}}}}}
\ddefloop ABCDEFGHIJKLMNOPQRSTUVWXYZ\ddefloop
\def\ddef#1{\expandafter\def\csname warb#1\endcsname{\ensuremath{\overline{\boldsymbol{#1}}}}}
\ddefloop abcdefghijklmnopqrstuvwxyz\ddefloop %

\def\epsilon{\varepsilon}

\usepackage{pgffor}
\def\greeksymbols{alpha,beta,gamma,gam,delta,dt,eps,epsilon,zeta,eta,theta,th,iota,kappa,kap,lambda,lam,mu,nu,xi,pi,rho,sigma,sig,tau,phi,chi,psi,omega,om,Gamma,Gam,Delta,Dt,Theta,Th,Lambda,Lam,Pi,Sigma,Sig,Phi,Psi,Omega,Om}
\def\greeksymbolsnoeta{alpha,beta,gamma,gam,delta,dt,eps,epsilon,zeta,theta,th,iota,kappa,kap,lambda,lam,mu,nu,xi,pi,rho,sigma,sig,tau,phi,chi,psi,omega,om,Gamma,Gam,Delta,Dt,Theta,Th,Lambda,Lam,Pi,Sigma,Sig,Phi,Psi,Omega,Om} 

\foreach \x in \greeksymbolsnoeta{\expandafter\xdef\csname b\x\endcsname{\noexpand\ensuremath{\noexpand\boldsymbol{\csname \x\endcsname}}}}

\foreach \x in \greeksymbols{\expandafter\xdef\csname h\x\endcsname{\noexpand\ensuremath{\noexpand\hat{\csname \x\endcsname}}}}
\foreach \x in \greeksymbolsnoeta{\expandafter\xdef\csname hb\x\endcsname{\noexpand\ensuremath{\noexpand\hat{\noexpand\boldsymbol{ \csname \x\endcsname}}}}}

\foreach \x in \greeksymbols{\expandafter\xdef\csname bar\x\endcsname{\noexpand\ensuremath{\noexpand\bar{\csname \x\endcsname}}}}
\foreach \x in \greeksymbolsnoeta{%
\expandafter\xdef\csname barb\x\endcsname{\noexpand\ensuremath{\noexpand\bar{\noexpand\boldsymbol{ \csname \x\endcsname}}}}
}

\foreach \x in \greeksymbols{\expandafter\xdef\csname t\x\endcsname{\noexpand\ensuremath{\noexpand\tilde{\csname \x\endcsname}}}}
\foreach \x in \greeksymbolsnoeta{\expandafter\xdef\csname tb\x\endcsname{\noexpand\ensuremath{\noexpand\tilde{\noexpand\boldsymbol{ \csname \x\endcsname}}}}}

\providecommand{\normz}[2][-1]{
\ensuremath{\mathinner{
\ifthenelse{\equal{#1}{-1}}{ 
\!\left\|#2\right\|}{}
\ifthenelse{\equal{#1}{0}}{ 
\|#2\|}{}
\ifthenelse{\equal{#1}{1}}{ 
\bigl\|#2\bigr\|}{}
\ifthenelse{\equal{#1}{2}}{ 
\Bigl\|#2\Bigr\|}{}
\ifthenelse{\equal{#1}{3}}{ 
\biggl\|#2\biggr\|}{}
\ifthenelse{\equal{#1}{4}}{ 
\Biggl\|#2\Biggr\|}{}
}} 
}  

\providecommand{\floor}[2][-1]{
\ensuremath{\mathinner{
\ifthenelse{\equal{#1}{-1}}{ 
\!\left\lfloor#2\right\rfloor}{}
\ifthenelse{\equal{#1}{0}}{ 
\lfloor#2\rfloor}{}
\ifthenelse{\equal{#1}{1}}{ 
\!\bigl\lfloor#2\bigr\rfloor}{}
\ifthenelse{\equal{#1}{2}}{ 
\!\Bigl\lfloor#2\Bigr\rfloor}{}
\ifthenelse{\equal{#1}{3}}{ 
\!\biggl\lfloor#2\biggr\rfloor}{}
\ifthenelse{\equal{#1}{4}}{ 
\!\Biggl\lfloor#2\Biggr\rfloor}{}
}} 
}

\providecommand{\ceil}[2][-1]{
\ensuremath{\mathinner{
\ifthenelse{\equal{#1}{-1}}{ 
\!\left\lceil#2\right\rceil}{}
\ifthenelse{\equal{#1}{0}}{ 
\lceil#2\rceil}{}
\ifthenelse{\equal{#1}{1}}{ 
\!\bigl\lceil#2\bigr\rceil}{}
\ifthenelse{\equal{#1}{2}}{ 
\!\Bigl\lceil#2\Bigr\rceil}{}
\ifthenelse{\equal{#1}{3}}{ 
\!\biggl\lceil#2\biggr\rceil}{}
\ifthenelse{\equal{#1}{4}}{ 
\!\Biggl\lceil#2\Biggr\rceil}{}
}} 
}

\definecolor{mygrn}{rgb}{0,.8,0}
\definecolor{myred}{rgb}{.8,0,0}

\DeclareMathOperator{\PP}{\mathbb{P}}

\DeclareMathOperator*{\argmax}{arg~max}

\DeclarePairedDelimiterX{\inp}[2]{\langle}{\rangle}{#1, #2}

\newcommand\declareop[3]{%
  \newcommand#1{%
    \mskip\muexpr\medmuskip*#2\relax
    {#3}%
    \mskip\muexpr\medmuskip*#2\relax
}}
\declareop\capprox{1}{{\sr{\const}{\approx}}} 
\declareop\logapprox{1}{{\sr{\mathsf{log}}{\approx}}} 

\def\const{\mathsf{const}}

\usepackage{pifont}
\newcommand{\sr}{\stackrel}

\makeatletter
\newcommand{\vast}{\bBigg@{3}}
\newcommand{\Vast}{\bBigg@{4}}
\makeatother

\newenvironment{talign*}
 {\csname align*\endcsname}
 {\endalign}

\def\chrulefill{\leavevmode\leaders\hrule height 0.7ex depth \dimexpr0.4pt-0.7ex\hfill\kern0pt}

\newcommand{\kj}[1]{{\color{RedOrange}[KJ: #1]}}
\newcommand{\guide}[1]{{\color{Violet}[#1]~}}
{\color{kjsavedrevised}}%

\newcommand{\stkout}[1]{\ifmmode\text{\sout{\ensuremath{#1}}}\else\sout{#1}\fi}

\usepackage[toc,page,header]{appendix}
\usepackage{minitoc}
\usepackage{silence}
\theoremstyle{definition}

\newtheorem{corollary}{Corollary}
\newtheorem{proposition}{Proposition}
\newtheorem{lemma}{Lemma}

\newif\ifFINAL

\FINALtrue 

\ifFINAL
   
  \def\guide#1{}
  \def\kj#1{} 
  \def\kjnew#1{}
\else
  \usepackage{transparent}
  \usepackage[inline]{showlabels}
  \usepackage{rotating}
  \renewcommand{\showlabelfont}%
  {\transparent{0.8}\scriptsize\bf\slshape\color{Lavender}}
\fi

\begin{document}

\doparttoc 
\faketableofcontents 

\setlength{\abovedisplayskip}{4pt}%
\setlength{\belowdisplayskip}{4pt}%
\setlength{\abovedisplayshortskip}{4pt}%
\setlength{\belowdisplayshortskip}{4pt}


%

%

\runningauthor{Li et al.}
\twocolumn[

\aistatstitle{Halve-Then-Greedy: Adaptive Sampling for Posterior Success \\ with Near-Optimal Instance Dependence}



\aistatsauthor{
Yinan Li
\And
Kwang-Sung Jun
\And
Kapilan Balagopalan\textsuperscript{$\dagger$}
}

\aistatsaddress{
University of Arizona
\And
POSTECH CSE/GSAI
\And
University of Arizona
}

\aistatsauthor{
Tuan Nguyen\textsuperscript{$\dagger$} 
\And
Yao Zhao\textsuperscript{$\dagger$}
\And
Abbas Zaidi
\And
Houssam Nassif}

\aistatsaddress{University of Arizona
\And
University of Arizona
\And
Meta Inc.
\And
Meta Inc.}

\begin{center}
\small
$^\dagger$ Authors 3--5 are listed in alphabetical order.\par
\vspace{3pt}
Corresponding author: Yinan Li (\texttt{yinanli@arizona.edu}).
\end{center}
]

\begin{abstract}
  We study fixed-budget adaptive experimentation aimed at maximizing the
probability of obtaining sufficient posterior evidence that a recommended design's mean performance exceeds a target. Unlike best-arm identification, this objective does not require finding the best design.
For Gaussian bandits with a Gaussian prior, we propose \emph{Halve-Then-Greedy} (HTG), which combines a large initial balanced
allocation, successive halving, and posterior-greedy sampling.
We prove instance-dependent, finite-budget exponential bounds on
posterior-failure probability that capture both separation from unfavorable arms and the presence of multiple favorable arms.
On one-positive-arm families, we establish an
asymptotic sample-complexity lower bound that HTG matches up to a logarithmic factor. We further analyze uniform sampling and sequential halving, revealing regimes where they require more samples than HTG by linear and logarithmic factors, respectively.
Simulations illustrate HTG's sample-efficiency advantages and their dependence
on instance structure.
\end{abstract}

\section{Introduction}
\label{sec:introduction}


We consider an experiment comparing several candidate designs under a limited sampling budget. Instead of identifying the design with the largest mean performance, the
objective is to identify a design for which there is sufficient evidence that its mean performance exceeds a prespecified target. Posterior-probability criteria
provide a natural way to formalize this evidence requirement, and Bayesian
experimental design often evaluates how likely such criteria are to be
satisfied across repeated experiments \citep{hagar2025economical}.
We say that \emph{posterior success} occurs if, at the end of the fixed
budget, the posterior probability that the recommended design's mean
performance exceeds the target is at least a prescribed credibility level.
Our goal is to allocate observations adaptively to make posterior success
likely. We model this problem as a Gaussian multi-armed bandit with a Gaussian
prior; the formal setup is given in Section~\ref{sec:problem}.


This objective differs from several standard frequentist bandit
pure-exploration objectives. Best-arm identification (BAI) seeks to identify
an arm with the largest true mean under a fixed sampling budget
(\emph{fixed-budget} setting) or a prescribed identification-error tolerance
(\emph{fixed-confidence} setting)
\citep{audibert10best,karnin13almost,garivier16optimal,kaufmann16onthecomplexity}.
Thresholding bandits aim to classify arms according to their means relative
to a prescribed threshold \citep{locatelli16anoptimal}.
Good-arm identification seeks to identify one or more arms whose true means
meet a prescribed performance threshold, without requiring them to be optimal
\citep{kano2019good,katzsamuels20true,jourdan2026anytime}.
Among these, good-arm identification is the closest frequentist formulation
to our problem. The key difference is the success criterion: good-arm
identification defines correctness in terms of the identified arms' true
means, whereas posterior success requires the terminal posterior probability
that the recommended arm's mean exceeds the target to be at least a
prescribed credibility level.

Bayesian best-arm identification has been studied from several perspectives.
\citet{russo2020simple} analyzes the asymptotic rate at which the posterior
becomes confident about the identity of the true best arm, while
\citet{shang20fixed} develop fixed-confidence stopping and sample-complexity
guarantees for Bayesian sampling rules.
In the fixed-budget setting, prior work studies the probability of
misidentifying the best arm after a prescribed number of samples, with
performance averaged over the prior distribution on bandit instances
\citep{atsidakou2022bayesian,pmlr-v258-nguyen25a}.

More closely related to our threshold-based objective is the Bayesian
satisficing formulation of \citet{reverdy2016satisficing}, which considers
whether the current belief assigns sufficiently high probability to an
arm's mean exceeding a prescribed threshold, but evaluates performance
through cumulative expected satisficing regret.
A related exploration--validation approach uses adaptive sampling to select
a treatment and then conducts a separate A/B test to validate the selected
treatment \citep{pmlr-v300-zhang26c}.
Our objective instead concerns the probability of obtaining a posterior
certificate at the end of a fixed budget, with all collected observations
contributing to the terminal posterior and no separate validation test.


Our posterior success problem has two complementary sources of favorable
instance structure. Strongly unfavorable arms can be screened with relatively
few observations, allowing sampling to concentrate on more promising
candidates, and the presence of multiple favorable arms provides multiple
opportunities to obtain the required certificate. Motivated by these two
features, we propose \emph{Halve-Then-Greedy} (HTG), which combines a large
initial balanced allocation with successive halving and a final
posterior-greedy phase.

We establish instance-dependent, finite-budget exponential bounds on HTG's
posterior-failure probability that capture both sources of favorable
instance structure (Section~\ref{sec:guarantees}).
For every fixed profile of one-positive-arm instances, we derive an
asymptotic sample-complexity lower bound and show that HTG matches it up to a
logarithmic factor in the number of arms
(Section~\ref{sec:lower-bounds}).
Finally, we compare HTG theoretically and empirically with uniform sampling
and sequential halving
(Sections~\ref{sec:connections}--\ref{sec:experiments}).


\vspace{-0.5em}

\section{Problem Formulation}
\label{sec:problem}

We consider $K$ arms with unknown means
$\mu=(\mu_1,\ldots,\mu_K)\in\mathbb{R}^K$.
Before observing any data, the algorithm has the prior belief that the arm means
$\mu_1,\ldots,\mu_K$ are independently drawn from a common Gaussian distribution,
\begin{equation}
    \mu_i \sim \mathcal{N}(\mu_0,\sigma_0^2),
    \qquad i\in[K].
    \label{eq:prior}
\end{equation}
Conditional on the true mean vector $\mu$, observations are independent across
arms and samples, with
\begin{equation}
    X_{i,s}\sim\mathcal{N}(\mu_i,\sigma^2),
    \qquad i\in[K],\ s\ge1,
    \label{eq:observation-model}
\end{equation}
where the common variance $\sigma^2>0$ is known. Although the algorithm uses
the Gaussian prior to form its posterior beliefs, our performance guarantees
treat the true mean vector $\mu$ as fixed and evaluate probability only over
the observation noise and any randomization of the algorithm.

We take the target threshold to be zero without loss of generality. A general
threshold $\tau$ can be reduced to this setting by replacing
$X_{i,s}$, $\mu_i$, and $\mu_0$ with
$X_{i,s}-\tau$, $\mu_i-\tau$, and $\mu_0-\tau$, respectively.

An algorithm $\mathcal{A}$ sequentially allocates a fixed budget of $T$
observations across the $K$ arms and, after the final observation, outputs an
arm $J_T\in[K]$. Let $\mathcal{F}_T$ denote the information available at time $T$.
For a prescribed posterior credibility
tolerance $\delta\in(0,1/2)$, we define the posterior-success event as
\begin{equation}
    \mathrm{Succ}_T
    :=
    \left\{
        \mathbb{P}\!\left(
            \mu_{J_T}>0
            \,\middle|\,
            \mathcal{F}_T
        \right)
        \ge 1-\delta
    \right\},
    \label{eq:success-event}
\end{equation}
where the conditional probability is the posterior probability induced by the
Gaussian prior and observation model. Thus, success means that the posterior
probability that the output arm has mean above the target threshold is at least
$1-\delta$; it does not necessarily require that its true mean be positive.

For a fixed mean vector $\mu$, we evaluate an algorithm $\mathcal{A}$ through its
finite-budget failure probability
\[
    \mathbb{P}_{\mu}^{\mathcal{A}}
    \bigl(\mathrm{Succ}_T^c\bigr),
\]
where $\mathbb{P}_{\mu}^{\mathcal{A}}$ denotes probability under the fixed
instance $\mu$, accounting for both observation noise and any randomization of
the algorithm. Our goal is to design adaptive allocation strategies for which
this failure probability decreases rapidly with the budget $T$, with rates that
adapt to the difficulty of the underlying instance. 

\vspace{-0.5em}

\section{Halve-Then-Greedy}
\label{sec:htg-algorithm}



We first introduce the posterior quantities used by HTG. Let $\Phi$ denote
the standard normal cumulative distribution function, and define
$\lambda:=\sigma^2/\sigma_0^2$ and $z:=\Phi^{-1}(1-\delta)$. Write
$\overline X_{i,n}:=n^{-1}\sum_{s=1}^n X_{i,s}$. By Gaussian conjugacy,
after $n$ observations from arm $i$,
\[
    \mu_i\mid X_{i,1:n}
    \sim
    \cN(\widehat\mu_{i,n},\widehat\sigma_n^2),
\]
where
\[
    \widehat\mu_{i,n}
    =\frac{\lambda\mu_0+n\overline X_{i,n}}{n+\lambda},
    \qquad
    \widehat\sigma_n=\frac{\sigma}{\sqrt{n+\lambda}}.
\]
The corresponding posterior $Z$-score is
\[
    Z_{i,n}
    :=\frac{\widehat\mu_{i,n}}{\widehat\sigma_n}
    =\frac{\lambda\mu_0+n\overline X_{i,n}}
           {\sigma\sqrt{n+\lambda}},
\]
so that $\PP(\mu_i>0\mid X_{i,1:n})=\Phi(Z_{i,n})$.
A derivation of these posterior quantities is given in
Appendix~\ref{app:posterior}.

Let $N_i(t)$ denote the number of observations collected from arm $i$
after $t$ total observations. Define
$Q:=1+\ceil{\log_2K}$,
$n_\lambda:=\max\{1,\ceil{\lambda}\}$, and
$T_{\mathrm{cut}}:=3K\max\{n_\lambda,Q\}$.
HTG is given in Algorithm~\ref{alg:htg}. We use an arbitrary fixed deterministic rule to break ties.

\begin{algorithm}[H]
\caption{Halve-Then-Greedy (HTG)}
\label{alg:htg}
\begin{algorithmic}[1]
\REQUIRE Budget $T\ge K$

\IF{$T<T_{\mathrm{cut}}$}
    \STATE Sample arms round-robin until $T$ observations are collected.
\ELSE
    \STATE Set $S_1=[K]$ and $t=0$.
    \FOR{$r=1,\ldots,Q$}
        \STATE Set $k_r=|S_r|$ and
        \[
        q_r=
        \begin{cases}
        \floor{T/(3K)}, & r=1,\\
        \floor{T/(3Qk_r)}, & r\ge2.
        \end{cases}
        \]
        \STATE Sample each $i\in S_r$ an additional $q_r$ times.
        \STATE Update $t\leftarrow t+k_rq_r$.
        \IF{$r<Q$}
            \STATE Retain the $\ceil{k_r/2}$ arms in $S_r$
            with largest $Z_{i,N_i(t)}$; call the set $S_{r+1}$.
        \ENDIF
    \ENDFOR

    \WHILE{$t<T$}
        \STATE Choose
        $A_{t+1}\in
        \argmax_{i\in[K]} Z_{i,N_i(t)}$.
        \STATE Observe arm $A_{t+1}$ and set $t\leftarrow t+1$.
    \ENDWHILE
\ENDIF

\STATE \textbf{Output}
$J_T\in\argmax_{i\in[K]} Z_{i,N_i(T)}$. 
\label{line:recommendation}
\end{algorithmic}
\end{algorithm}

HTG consists of two allocation phases. The first phase uses successive
halving to progressively screen out less promising arms, with a deliberately
large initial batch. The second phase allocates the remaining budget greedily
according to each arm's current posterior probability of having positive mean.
Throughout both phases, the posterior distribution of each arm mean is updated
using all observations collected from that arm.

The active sets $S_r$ are used only during the halving phase. An arm removed
from an active set remains eligible for both the subsequent greedy phase and
the final recommendation, and all observations previously collected from that
arm continue to be used in its posterior. 
At the end
of the budget, HTG recommends the arm with the largest posterior $Z$-score
(line~\ref{line:recommendation} of Algorithm~\ref{alg:htg}).

Since $\PP(\mu_i>0\mid X_{i,1:n})=\Phi(Z_{i,n})$ and $\Phi$ is increasing,
this recommendation maximizes the posterior probability that the selected
arm has positive mean. Moreover, since $z=\Phi^{-1}(1-\delta)$, the success
event in~\eqref{eq:success-event} can be written as
\begin{equation}
    \mathrm{Succ}_T
    =
    \left\{
        Z_{J_T,N_{J_T}(T)}\ge z
    \right\}
    =
    \left\{
        \max_{i\in[K]} Z_{i,N_i(T)}\ge z
    \right\}.
    \label{eq:success-zmax}
\end{equation}

\vspace{-0.5em}


\textbf{Why this structure?}
HTG combines three complementary design choices. First, the initial halving stage samples every arm $q_1=\left\lfloor\frac{T}{3K}\right\rfloor$
times. This substantial balanced allocation provides broad coverage of the
arms and ensures that potentially favorable arms receive a nontrivial amount
of evidence before the adaptive allocation begins. As we show in
Section~\ref{sec:guarantees}, this feature allows HTG to benefit when several
arms have positive means.

Second, the subsequent halving stages direct observations toward increasingly
promising arms instead of continuing to allocate uniformly to clearly
unfavorable competitors. At the end of each halving stage, all arms in the
active set have the same total number of observations. Since the prior
parameters are common across arms, ranking these arms by their posterior
$Z$-scores is therefore equivalent to ranking them by their empirical means.
Thus, no separate statistics or posterior states are required for the
halving phase.

Finally, HTG reserves a constant fraction of the budget for posterior-greedy
sampling. The first halving batch uses at most $T/3$ observations, while each
of the remaining $Q-1$ stages uses at most $T/(3Q)$. Hence the total number
of observations used by the halving phase is at most 
$\frac{T}{3}
    +
    \frac{Q-1}{3Q}T
    <
    \frac{2T}{3}$, 
so at least $T/3$ observations remain for the greedy phase. These observations
are allocated to the arm with the largest current posterior probability of
having positive mean, and all $K$ original arms are eligible. This allows the
procedure to concentrate evidence after the initial screening phase rather
than spending the entire budget across elimination stages.

These three components play complementary roles: the initial balanced
allocation allows HTG to exploit the presence of multiple positive arms,
halving progressively removes clearly unfavorable arms from the active set,
and greedy sampling concentrates evidence on promising arms. Together, they
allow HTG to adapt both to instances with many positive arms and to instances
where unfavorable arms can be separated quickly. The next section quantifies this adaptivity
through instance-dependent finite-budget guarantees.

\section{Instance-Dependent Guarantees}
\label{sec:guarantees}

We now establish finite-budget guarantees for HTG. Throughout this paper,
assume that
\[
    \mu_\star:=\max_{i\in[K]}\mu_i>0.
\]
The bounds reveal two complementary ways in which an instance can be favorable.
First, arms with means far below the best positive mean require relatively few samples to reveal that they are substantially suboptimal, allowing the algorithm to shift its sampling effort elsewhere. Second, when several arms have positive means, there are several candidates that can yield the required posterior certificate, since posterior success requires sufficient evidence for only one of them.
We first introduce
complexities that capture these two forms of favorable instance structure and
then relate them to the quantity appearing in the HTG guarantee.

\subsection{Certification complexities}
\label{sec:certification-complexities}

For $x\in\mathbb R$, write $x_+:=\max\{x,0\}$. We begin with two
interpretable instance-dependent quantities. Define the \emph{capped-gap
complexity}
\begin{equation}
    H_{\mathrm{cap}}(\mu)
    :=
    \sum_{i=1}^{K}
    \frac{\sigma^2}
    {\max\{\mu_\star,\mu_\star-\mu_i\}^2},
    \label{eq:H-cap}
\end{equation}
and the \emph{positive-arm complexity}
\begin{equation}
    H_{\mathrm{pos}}(\mu)
    :=
    \frac{K\sigma^2}
    {\sum_{i=1}^{K}(\mu_i)_+^2}.
    \label{eq:H-pos}
\end{equation}


To see the meaning of the capped-gap complexity, let
$\Delta_i:=\mu_\star-\mu_i$ denote the usual gap between arm $i$ and the best
mean. Then $H_{\mathrm{cap}}(\mu)
    =
    \sum_{i=1}^{K}
    \frac{\sigma^2}{\max\{\mu_\star,\Delta_i\}^2}$.
Thus, the effective gap associated with arm $i$ is
$\gamma_i:=\max\{\mu_\star,\Delta_i\}$, meaning that the usual best-arm gap is capped
from below by $\mu_\star$. 
This is the reason for the name \emph{capped-gap complexity}. 


We use these to define the \emph{certification complexity}
\begin{equation}
    H_{\mathrm{cert}}(\mu)
    :=
    \min\bigl\{
        H_{\mathrm{cap}}(\mu),
        H_{\mathrm{pos}}(\mu)
    \bigr\}.
    \label{eq:H-cert}
\end{equation}

The two quantities capture different sources of favorable instance structure.
For an arm with $\mu_i\ge0$, we have $\Delta_i\le\mu_\star$, so its
contribution to $H_{\mathrm{cap}}$ is $\sigma^2/\mu_\star^2$. For a negative
arm, $\Delta_i=\mu_\star+|\mu_i|>\mu_\star$, and its contribution is
$\frac{\sigma^2}{(\mu_\star+|\mu_i|)^2}$.
Thus, arms far below the best positive mean incur a smaller cost: relatively
few samples are needed to reveal that they are substantially suboptimal. In
particular, an arm lying close to the target threshold does not incur the
potentially much larger cost $\sigma^2/\mu_i^2$. In this sense,
$H_{\mathrm{cap}}$ captures the difficulty of distinguishing promising arms
from suboptimal ones.

In contrast, $H_{\mathrm{pos}}$ captures the benefit of having multiple
positive arms. Since posterior success requires sufficient posterior evidence
for only one arm, having several positive arms provides several candidates
that can yield the required posterior certificate. For example, if exactly
$m$ arms have common positive mean $\Delta$ and the remaining arms are
nonpositive, then 
$H_{\mathrm{pos}}
=
\frac{K\sigma^2}{m\Delta^2}$. 
Hence, in this special case, $H_{\mathrm{pos}}$ decreases as the number of positive arms increases.

Neither $H_{\mathrm{cap}}$ nor $H_{\mathrm{pos}}$ uniformly dominates the
other. Their minimum in~\eqref{eq:H-cert} captures the two distinct sources
of favorable instance structure described above.
Section~\ref{sec:htg-interpretation} illustrates how these affect the
finite-budget guarantee for HTG.


The finite-budget analysis of HTG involves a closely related complexity.
Let
\[
    \mu_{(1)}\ge\mu_{(2)}\ge\cdots\ge\mu_{(K)}
\]
denote the ordered arm means, and recall
$Q=1+\lceil\log_2 K\rceil$ from Section~\ref{sec:htg-algorithm}. Define
\begin{equation}
\label{eq:H-HTG}
H_{\mathrm{HTG}}(\mu)
:=
\min\left\{
\begin{aligned}
&\sum_{i=1}^{K}\frac{\sigma^2}{\mu_i^2},
\qquad
\frac{K\sigma^2}{\sum_{i=1}^{K}(\mu_i)_+^2},
\\
&Q\max_{1\le r\le K}
\frac{r\sigma^2}
{\max\{\mu_\star,\mu_\star-\mu_{(r)}\}^2}
\end{aligned}
\right\}.
\end{equation}
Here and throughout, we use the convention
$\sigma^2/0^2:=+\infty$.
The three terms in~\eqref{eq:H-HTG} correspond to complementary guarantees
provided by the three structural components of HTG. 
The first is supplied by the final posterior-greedy phase.
The second term is exactly $H_{\mathrm{pos}}$ and captures the benefit of having several positive arms, since HTG allocates a nontrivial number of samples $q_1=\left\lfloor\frac{T}{3K}\right\rfloor$ to every arm before concentrating its sampling effort. The third term comes from the
successive-halving phase and adapts to the gaps of unfavorable arms from
the best positive mean.


Although $H_{\mathrm{HTG}}$ is the quantity that appears directly in our
finite-budget theorem, it is within a factor $Q$ of the simpler certification
complexity $H_{\mathrm{cert}}$. 


\begin{lemma}[Comparison of complexities]
\label{lem:complexity-comparison}
For every mean vector $\mu$ satisfying $\mu_\star>0$,
\begin{equation}
        H_{\mathrm{cert}}(\mu)
        \le
        H_{\mathrm{HTG}}(\mu)
        \le
        Q\,H_{\mathrm{cert}}(\mu).
    \label{eq:complexity-comparison}
\end{equation}
\end{lemma}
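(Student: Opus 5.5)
Write $A:=\sum_i\sigma^2/\mu_i^2$, $B:=H_{\mathrm{pos}}(\mu)$, and $C:=Q\max_{r}r\sigma^2/\gamma_{(r)}^2$, where $\gamma_{(r)}:=\max\{\mu_\star,\mu_\star-\mu_{(r)}\}$. Then $H_{\mathrm{HTG}}=\min\{A,B,C\}$ and $H_{\mathrm{cert}}=\min\{H_{\mathrm{cap}},B\}$. Both inequalities reduce to termwise comparisons, so I will prove each separately.

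\textbf{Lower bound.} It suffices to show that each of $A$, $B$, $C$ is at least $H_{\mathrm{cert}}$.

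For $B$ this is immediate, since $B\ge\min\{H_{\mathrm{cap}},B\}$.

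For $A$, I show $A\ge H_{\mathrm{cap}}$ termwise, i.e. $|\mu_i|\le\gamma_i=\max\{\mu_\star,\mu_\star-\mu_i\}$ for every $i$. If $\mu_i\ge0$, then $|\mu_i|=\mu_i\le\mu_\star$. If $\mu_i<0$, then $|\mu_i|<\mu_\star+|\mu_i|=\mu_\star-\mu_i$. The convention $\sigma^2/0^2=+\infty$ only helps here.

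For $C$, note that $\gamma_{(r)}$ is nondecreasing in $r$, because $\mu_{(r)}$ is nonincreasing. Hence
\[
H_{\mathrm{cap}}=\sum_{r=1}^K\frac{\sigma^2}{\gamma_{(r)}^2}\le K\,\frac{\sigma^2}{\gamma_{(1)}^2}.
\]
That is the wrong direction, so I bound differently. Let $M:=\max_r r\sigma^2/\gamma_{(r)}^2$. Each term satisfies $\sigma^2/\gamma_{(r)}^2\le M/r$, so $H_{\mathrm{cap}}\le M\sum_{r=1}^K 1/r=M\,\mathcal{H}_K$. It remains to check $\mathcal{H}_K\le Q=1+\lceil\log_2K\rceil$. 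This is the standard dyadic-block bound: $\sum_{r\le K}1/r\le 1+\log_2 K$, since each block $[2^j,2^{j+1})$ contributes at most $1$. Therefore $C=QM\ge H_{\mathrm{cap}}\ge H_{\mathrm{cert}}$.

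\textbf{Upper bound.} First, $H_{\mathrm{HTG}}\le B\le QB$ since $Q\ge1$. Second, $H_{\mathrm{HTG}}\le C$, so it suffices to show $M\le H_{\mathrm{cap}}$. Because $\gamma_{(s)}\le\gamma_{(r)}$ for $s\le r$,
\[
\frac{r\sigma^2}{\gamma_{(r)}^2}\le\sum_{s=1}^r\frac{\sigma^2}{\gamma_{(s)}^2}\le H_{\mathrm{cap}}.
\]
Hence $H_{\mathrm{HTG}}\le Q\min\{H_{\mathrm{cap}},B\}=Q\,H_{\mathrm{cert}}$.

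\textbf{Main obstacle.} The only delicate step is the harmonic-sum bound $\mathcal{H}_K\le Q$. I will verify it via the dyadic blocks: at most $\lceil\log_2 K\rceil$ blocks beyond the first term $1$ are needed, each contributing at most $1$. I will also check the edge cases $\gamma_{(r)}>0$ (which holds since $\mu_\star>0$) and $\mu_i=0$ (where the relevant term in $A$ is infinite). Everything else is ordering monotonicity.
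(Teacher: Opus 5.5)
Your proposal is correct and follows the paper's proof essentially step for step. The three ingredients match: the termwise inequality $|\mu_i|\le\max\{\mu_\star,\mu_\star-\mu_i\}$ for the first term; the monotonicity sandwich $M\le H_{\mathrm{cap}}\le M\sum_{r\le K}1/r\le QM$ for the halving term; and $Q\ge1$ for the positive-arm term. The only cosmetic difference is the harmonic-sum bound: you use dyadic blocks to get $1+\lfloor\log_2K\rfloor$, while the paper uses $1+\log K\le1+\log_2K$, and both are at most $Q$.
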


The proof is deferred to
Appendix~\ref{app:complexity-comparison}. 

\subsection{Finite-budget posterior certification}
\label{sec:finite-budget-guarantee}

We now state the main finite-budget guarantee for HTG. Recall from
Section~\ref{sec:htg-algorithm} that
$\lambda=\sigma^2/\sigma_0^2$,
$z=\Phi^{-1}(1-\delta)$, and
$T_{\mathrm{cut}}=3K\max\{n_\lambda,Q\}$.
Let
\[
    m:=\bigl|\{i\in[K]:\mu_i>0\}\bigr|,
    \qquad
    c:=\sqrt{2}\,z+\frac{\sqrt{\lambda}\,|\mu_0|}{\sigma},
\]
and define the instance-dependent offset
\begin{equation}
    T_0
    :=
    T_{\mathrm{cut}}
    +
    96m c^2 H_{\mathrm{HTG}}(\mu).
    \label{eq:T0}
\end{equation}
The quantities $m$, $H_{\mathrm{HTG}}(\mu)$, and $T_0$ are used only in the
analysis and are not inputs to HTG.

\begin{theorem}[Finite-budget posterior certification]
\label{thm:htg-main}
Under the Gaussian observation model~\eqref{eq:observation-model} with the common Gaussian prior~\eqref{eq:prior}, for every fixed mean vector $\mu$ with $\mu_\star>0$ and integer $T\ge K$, HTG satisfies
\begin{equation}
\label{eq:htg-main-bound}
    \mathbb{P}_{\mu}^{\mathrm{HTG}}
    \bigl(\mathrm{Succ}_T^c\bigr)
    \le
    \min\left\{
        1,\;
        2KT
        \exp\left(
            -\frac{(T-T_0)_+}
            {192H_{\mathrm{HTG}}(\mu)}
        \right)
    \right\}.
\end{equation}
\end{theorem}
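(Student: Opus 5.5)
The plan is to prove the bound separately for each of the three terms in the definition~\eqref{eq:H-HTG} of $H_{\mathrm{HTG}}$, and then take whichever term attains the minimum. We may assume $T\ge T_0$, since otherwise the right-hand side of~\eqref{eq:htg-main-bound} is at least $2KT\ge 1$. Because $T_0\ge T_{\mathrm{cut}}$, HTG then runs its halving and greedy phases. Failure means $\max_i Z_{i,N_i(T)}<z$. A first reduction: for any arm $i$ with $\mu_i>0$ and any $n\ge 2c^2\sigma^2/\mu_i^2$, the event $\overline X_{i,n}\ge\mu_i/2$ forces $Z_{i,n}\ge z$. To see this, note that
\[
Z_{i,n}\ge \frac{n\overline X_{i,n}}{\sigma\sqrt{2n}}-\frac{\sqrt\lambda|\mu_0|}{\sigma}
\]
once $n\ge\lambda$ (using $\lambda\mu_0\ge-\lambda|\mu_0|$ and $\sqrt{n+\lambda}\le\sqrt{2n}$); the definition of $c$ then closes this step. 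I will use a union bound over arms and sample counts, with the Gaussian tail $\mathbb P(|\overline X_{i,n}-\mu_i|\ge\epsilon)\le 2e^{-n\epsilon^2/(2\sigma^2)}$, applied uniformly over $n\le T$. This is the source of the $2KT$ prefactor.

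\emph{Positive-arm term.} After round $1$ every arm has $q_1=\lfloor T/(3K)\rfloor$ samples. Pick the positive arm $i^\ast$ with the largest $\mu_i^2$; then $\mu_{i^\ast}^2\ge\sum_j(\mu_j)_+^2/m$. Since $N_{i^\ast}(T)\ge q_1\approx T/(3K)$, the step above shows failure requires
\[
\overline X_{i^\ast,n}<\mu_{i^\ast}/2 \quad\text{for } n=N_{i^\ast}(T)\ge T/(3K).
\]
The exponent is then of order $T\mu_{i^\ast}^2/(K\sigma^2)\gtrsim T/(mH_{\mathrm{pos}})$. This falls short of the stated exponent by a factor $m$, so I need something sharper: exploit independence across the $m$ positive arms. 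Each positive arm $i$ independently fails with probability at most $e^{-q_1\mu_i^2/(8\sigma^2)}$ (up to the sample-count union bound), and failure requires all of them to fail, so the product gives the exponent $q_1\sum_i\mu_i^2/(8\sigma^2)\approx T/(24H_{\mathrm{pos}})$. The subtlety is that $N_i(T)$ is random; I handle it by requiring the bad event at some $n\in[q_1,T]$ for each arm, which costs a factor $T^m$. This should be absorbed by the offset $96mc^2H$: it appears to be precisely what converts $m\log T$ losses together with the $c^2$ threshold into the shift $T-T_0$. If that absorption fails, I fall back to using only the round-$1$ statistics (a fixed $n=q_1$) and monotonicity arguments.

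\emph{Halving term.} This is the standard sequential-halving analysis. Let $r$ be the index achieving the max, and consider arm $(1)$ with $\mu_{(1)}=\mu_\star$. In round $s\ge2$ each active arm gets $q_s\approx T/(3Qk_s)$ samples. Arm $(1)$ is eliminated in round $s$ only if at least half of the active arms have empirical means exceeding it. Following Karnin et al., this needs about $k_s/3$ arms whose empirical means deviate by $\gamma_j/2$, where $\gamma_j=\max\{\mu_\star,\Delta_j\}$. That event has probability bounded by roughly $e^{-q_s\gamma^2_{(k_s/4)}/(8\sigma^2)}$, giving the exponent $T/(Q\max_r r\sigma^2/\gamma_{(r)}^2)$. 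The capped gap arises because arms with $\mu_j\ge0$ need not be outranked: if such an arm survives instead, it is itself positive-ish. To handle this cleanly, I would instead track \emph{the best surviving arm}: either it keeps mean $\ge\mu_\star/2$, or many arms with gap $\ge\mu_\star/2$ overtook it. After the final round a single arm with mean $\ge\mu_\star/2$ (or $\mu_\star$) has $\gtrsim T/(3Q)$ samples. The greedy phase then either keeps the maximum $Z$ above $z$ or samples it further, and the reduction step applies.

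\emph{Greedy term} $\sum\sigma^2/\mu_i^2$. This covers the greedy phase with at least $T/3$ samples. If failure occurs, every $Z_{i,N_i(T)}<z$. The greedy phase samples the arm of largest $Z$, so with high probability no arm with $\mu_i<0$ is sampled beyond about $c^2\sigma^2/\mu_i^2$ samples, since its $Z$ would sink below that of an arm whose $Z$ rises. The main obstacle is to formalize the counting: on the good event where all empirical means are within $|\mu_i|/2$ once $n\ge n_i:=8c^2\sigma^2/\mu_i^2$, each arm absorbs at most $n_i$ samples while failing to certify. This holds for positive arms by the reduction step, and for negative arms because their $Z$ becomes $\le -c'$, below the $Z$ of arm $(1)$, which is bounded below on the good event. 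Hence $T\le\sum_i n_i\lesssim c^2H$, contradicting $T\ge T_0$; the remaining probability is the deviation bound with exponent $\ge (T-T_0)/(192H)$. I expect reconciling the constants ($96$, $192$) and the factor $m$ across all three cases to be the delicate bookkeeping, and the greedy-phase comparison of $Z$-scores across arms with unequal counts to be the hardest conceptual step.
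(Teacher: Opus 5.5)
\textbf{The positive-arm term has a genuine gap.} Your containment $\mathrm{Succ}_T^c\subseteq\bigcap_{i:\mu_i>0}\{\exists n\ge q_1: Z_{i,n}<z\}$ is right, and so is the independence of these events across arms.

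The problem is how you handle the randomness of $N_i(T)$. You pay a union bound over $n$ separately for each positive arm. After multiplying over the $m$ arms, this overhead becomes $T^m$; if you sum the geometric tails instead, it becomes $\prod_i O(\sigma^2/\mu_i^2)$. The offset cannot absorb either one. $T_0$ does not depend on $T$, the theorem carries a single factor $2KT$, and $96mc^2H_{\mathrm{HTG}}(\mu)$ can be negligible because $c\to0$ when $\mu_0=0$ and $\delta\uparrow 1/2$.

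Concretely, take $\mu_1=\cdots=\mu_K=\sigma$, $\sigma_0=\sigma$, $\mu_0=0$, $\delta$ close to $1/2$, and $K=16$.
\begin{itemize}
\item Then $H_{\mathrm{HTG}}=H_{\mathrm{pos}}=1$ and $T_0\approx T_{\mathrm{cut}}=240$.
\item At $T=2500$ the theorem's right-hand side is below one.
\item Your bound is $T^{16}e^{-q_1\cdot 16/8}$ with $q_1=52$, which exceeds $e^{20}$.
\end{itemize}

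Your fallback does not work either. $Z_{i,n}$ is not monotone in $n$, so a certificate present after round $1$ can be destroyed by later halving or greedy samples. Failure is evaluated at the random terminal count $N_i(T)$, not at $q_1$.

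What is missing is a time-uniform maximal inequality whose overhead depends on neither $T$ nor $\mu_i$. The paper uses the nonnegative mean-one martingale $M_{i,n}=\exp(-\tfrac{u_i}{2}W_{i,n}-\tfrac{u_i^2 n}{8})$, where $u_i=\mu_i/\sigma$ and $W_{i,n}=\sigma^{-1}\sum_{s\le n}(X_{i,s}-\mu_i)$.
\begin{itemize}
\item For $n\ge n_0\ge\lambda$, $Z_{i,n}<z$ implies $W_{i,n}<c\sqrt n-u_i n$, hence $\log M_{i,n}>u_i^2n_0/4-c^2/2$.
\item Ville's inequality then gives $\mathbb P(\exists n\ge n_0: Z_{i,n}<z)\le\exp(c^2/2-u_i^2n_0/4)$.
\item The product over positive arms is $\exp(mc^2/2-(T-3K)/(12H_{\mathrm{pos}}))$.
\end{itemize}
It is this factor $e^{mc^2/2}$, not $T^m$, that the $m$ in $T_0$ absorbs.

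\textbf{The other two terms need a free radius and the right source for the capped gap.}

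In your greedy argument, the thresholds $n_i=8c^2\sigma^2/\mu_i^2$ do not grow with $T$. The complement of your good event therefore has constant-order probability, not $e^{-(T-T_0)/(192H)}$. The paper instead works on $\mathcal E_T(R)=\bigcap_{i\le K}\bigcap_{n\le T}\{|W_{i,n}|\le R\sqrt n\}$, which has probability at least $1-2KTe^{-R^2/2}$ with $R$ free. On $\mathcal E_T(R)\cap\mathrm{Succ}_T^c$:
\begin{itemize}
\item terminal failure gives $N_i(T)\le\sigma^2(R+c)^2/\mu_i^2$ for each $\mu_i>0$;
\item any positive arm has score at least $-\sqrt\lambda|\mu_0|/\sigma-R$, and comparing a selected negative arm $j$ against it gives at most $8\sigma^2(R+c)^2/\mu_j^2$ greedy pulls of $j$;
\item hence $T/3\le G\le 8(R+c)^2\sum_i\sigma^2/\mu_i^2$.
\end{itemize}
Only the greedy budget $G\ge T/3$ can be counted this way, since round $1$ alone gives every arm $\lfloor T/(3K)\rfloor$ samples. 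One then takes $R^2$ just below $(T-T_0)/(96h)$.

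In the halving term, the Karnin-style claim is false for arms with gap below $\mu_\star$. Removing $i_\star$ does not require many arms to deviate by $\gamma_j/2$: with ties, $i_\star$ is removed with constant probability and no deviation at all. The cap at $\mu_\star$ comes from terminal failure, not from concentration. Work on $\mathcal E_T(R)\cap\mathrm{Succ}_T^c$ and suppose $i_\star$ is removed at a stage with $s$ active arms and common count $n\ge T/(6Qs)$.
\begin{itemize}
\item Some retained arm has rank $r\ge\lceil s/2\rceil+1$, so $n(\mu_\star-\mu_{(r)})^2\le 4\sigma^2R^2$.
\item Since $n\le N_{i_\star}(T)$, failure gives $n\mu_\star^2\le\sigma^2(R+c)^2$.
\item Together, $T\le 48Q\,(R+c)^2\max_r r\sigma^2/\gamma_{(r)}^2$, where $\gamma_{(r)}=\max\{\mu_\star,\mu_\star-\mu_{(r)}\}$.
\end{itemize}
If $i_\star$ is never removed, $N_{i_\star}(T)\ge T/(6Q)$ handles that case directly.

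Your ``best surviving arm with mean $\ge\mu_\star/2$'' variant can be run the same way, but it loses a constant factor. Separately, your reduction threshold $2c^2\sigma^2/\mu_i^2$ is too small: at that $n$ your lower bound gives only $c/2-b_0<z$, and $8c^2\sigma^2/\mu_i^2$ is needed. Given the theorem's explicit constants $96$ and $192$, these constant-factor losses matter.
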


The proof is deferred to
Appendix~\ref{app:htg-main}. 
Theorem~\ref{thm:htg-main} shows that, after the finite-budget offset $T_0$,
the failure probability decreases exponentially at a rate governed by
$H_{\mathrm{HTG}}(\mu)$. By
Lemma~\ref{lem:complexity-comparison}, this rate complexity is within a factor
$Q=O(\log K)$ of the certification complexity $H_{\mathrm{cert}}(\mu)$.
The offset $T_0$ accounts for the budget required by the initial allocation through $T_{\mathrm{cut}}$, as well as the effect of the prior mean and the additional samples needed to reach the prescribed posterior credibility level.

As an immediate consequence, for any target failure probability
$\eta\in(0,1)$,
\begin{equation}
\label{eq:htg-sufficient-budget}
    T
    \ge
    T_0
    +
    192H_{\mathrm{HTG}}(\mu)
    \log\frac{2KT}{\eta}
    \quad\Longrightarrow\quad
    \mathbb{P}_{\mu}^{\mathrm{HTG}}
    \bigl(\mathrm{Succ}_T^c\bigr)
    \le \eta.
\end{equation}
Here $\delta$ determines the posterior credibility required for success,
whereas $\eta$ is the desired probability of failing to obtain such a
certificate under the fixed instance $\mu$. A fully explicit sufficient
budget with no occurrence of $T$ inside the logarithm is given in
Appendix~\ref{app:explicit-budget}.

\subsection{Interpretation and proof ideas}
\label{sec:htg-interpretation}

We illustrate how the two sources of favorable instance structure translate
into the guarantee of Theorem~\ref{thm:htg-main}, and then summarize the
main proof ideas. 
Detailed calculations for these examples, together with an additional
example involving a competing arm whose mean is close to zero, are given
in Appendix~\ref{app:htg-examples}.


\textbf{One positive arm with other arms strongly negative.}
Consider an instance with $\mu_1=a>0$ and $\mu_2=\cdots=\mu_K=-b$,
where $b\ge a\sqrt K$. Then
\[
    H_{\mathrm{cert}}(\mu)\asymp H_{\mathrm{HTG}}(\mu)
    \asymp \frac{\sigma^2}{a^2},
    \qquad
    H_{\mathrm{pos}}(\mu)=\frac{K\sigma^2}{a^2}.
\]
More precisely, both $H_{\mathrm{cert}}(\mu)$ and
$H_{\mathrm{HTG}}(\mu)$ lie between $\sigma^2/a^2$ and
$2\sigma^2/a^2$. Thus, the complexity $H_{\mathrm{HTG}}$ appearing in
Theorem~\ref{thm:htg-main} can be a factor of order $K$ smaller than
$H_{\mathrm{pos}}$ on this family. Moreover, the first term
in~\eqref{eq:H-HTG}, which arises from the final posterior-greedy phase, is
at most $2\sigma^2/a^2$, whereas the third term, which arises from the
successive-halving phase, is $Q\sigma^2/a^2$. Hence the final
posterior-greedy phase removes the additional factor $Q$ that would arise
from the halving-based term alone.

\textbf{Many positive arms.}
At the other extreme, suppose $\mu_1=\cdots=\mu_K=a>0$. Then
\[
    H_{\mathrm{cap}}(\mu)=\frac{K\sigma^2}{a^2},
    \qquad
    H_{\mathrm{HTG}}(\mu)=H_{\mathrm{pos}}(\mu)
    =\frac{\sigma^2}{a^2}.
\]
Thus, the complexity $H_{\mathrm{HTG}}$ appearing in
Theorem~\ref{thm:htg-main} is a factor $K$ smaller than
$H_{\mathrm{cap}}$ on this family. This reflects the fact that when many arms are positive, any of them can provide the posterior certificate required
for success. Although $H_{\mathrm{HTG}}$ itself is independent of $K$ in
this example, the total sufficient budget is not: here $m=K$, so the offset is
$T_0=T_{\mathrm{cut}}+96Kc^2\sigma^2/a^2$. 



\textbf{Proof ideas.}
The proof, given in Appendix~\ref{app:htg-main}, begins with a concentration
event that holds simultaneously over all arms and sample counts, and hence
also at the adaptive sample counts generated by HTG. Failure at the end of
the budget means that no arm is certified, that is,
$Z_{i,N_i(T)}<z$ for every $i\in[K]$. On the concentration event, this
imposes an upper bound on the total number of samples collected from each
positive arm. The final posterior-greedy phase and the successive-halving
phase then give separate budget-counting arguments, yielding the first and
third terms in~\eqref{eq:H-HTG}, respectively.

The second term follows from a separate argument. The initial allocation
ensures that every arm receives at least
$q_1=\lfloor T/(3K)\rfloor$ samples. Hence, if the algorithm fails, then
for every positive arm $i$ there must exist some $n\ge q_1$ such that
$Z_{i,n}<z$---namely, $n=N_i(T)$. We bound the probability of this event
uniformly over all $n\ge q_1$ for each positive arm. Since these events
depend only on the independent observation sequences of the individual
arms, their probabilities can be multiplied across positive arms. The
resulting exponent involves
$\sum_{i:\mu_i>0}\mu_i^2/\sigma^2=K/H_{\mathrm{pos}}(\mu)$, which yields
the second term in~\eqref{eq:H-HTG}. Combining the three bounds gives
Theorem~\ref{thm:htg-main}.

\section{Lower Bounds and Near-Optimality}
\label{sec:lower-bounds}

We next establish lower bounds on the probability of posterior failure
and compare them with the guarantee of Theorem~\ref{thm:htg-main}.
We consider families of instances with exactly one positive arm,
whose location is unknown, while allowing the negative means to differ
across arms.
As we will show, on each such family, the lower-bound complexity
agrees with $H_{\mathrm{cert}}$ up to universal constants.
By Lemma~\ref{lem:complexity-comparison}, $H_{\mathrm{HTG}}$ is
within a factor $Q=O(\log K)$ of $H_{\mathrm{cert}}$.
Thus, HTG's asymptotic sample complexity is within an $O(\log K)$
factor of the lower bound on these one-positive-arm families.

\subsection{Families with one positive arm}
\label{sec:one-positive-families}

Fix $\Delta>0$ and
$\varepsilon=(\varepsilon_1,\ldots,\varepsilon_K)\in(0,\infty)^K$.
For each $a\in[K]$, define the mean vector $\mu^{(a)}$ by
\begin{equation}
    \mu_i^{(a)}
    :=
    \begin{cases}
        \Delta, & i=a,\\
        -\varepsilon_i, & i\ne a,
    \end{cases}
    \qquad i\in[K].
    \label{eq:one-positive-family}
\end{equation}
Every instance in this family has one positive arm with mean $\Delta$.
The family varies its location $a$, while each remaining arm $i$
has mean $-\varepsilon_i$. The negative means need not be equal.

Two quantities will govern the lower bound:
\begin{equation}
    h
    :=
    \frac{\sigma^2}{\Delta^2},
    \qquad
    S
    :=
    \sum_{i=1}^{K}
    \frac{\sigma^2}{(\Delta+\varepsilon_i)^2}.
    \label{eq:lower-bound-complexities}
\end{equation}
Here $h$ is the single-arm inverse-square term, corresponding to the
complexity of obtaining evidence that a mean $\Delta$ exceeds zero,
even when the positive arm's location is known. The quantity $S$
is the sum of inverse-square terms over all possible locations of
the positive arm. Each term uses the difference
$\Delta+\varepsilon_i$ between the two possible means $\Delta$ and
$-\varepsilon_i$ for arm $i$. In this sense, $S$ reflects the
complexity associated with uncertainty about which arm is positive.

For each budget $T\ge K$, we consider adaptive algorithms that
collect exactly $T$ observations and sample every arm at least once.
The algorithms may be randomized and may depend on $T$.
At each round, an algorithm selects an arm using only previously
collected observations and its internal randomization.
All algorithms use the Gaussian prior~\eqref{eq:prior}, the prescribed
credibility tolerance $\delta$, and all observations collected from
each arm when forming its posterior.

We use the posterior-maximizing terminal recommendation from
Section~\ref{sec:htg-algorithm} (line~\ref{line:recommendation} of Algorithm~\ref{alg:htg}),
\[
    J_T\in\arg\max_{i\in[K]}Z_{i,N_i(T)}.
\]
This recommendation entails no loss for the posterior-success objective.
Indeed, since the posterior probability that arm $i$ has positive mean is
$\Phi(Z_{i,N_i(T)})$ and $\Phi$ is increasing, for any realized observation
history this recommendation maximizes the posterior probability that the
selected arm has positive mean. In particular, it succeeds whenever any
terminal recommendation could succeed, as characterized by
\eqref{eq:success-zmax}.

We call any algorithm satisfying the sampling, posterior, and
terminal-recommendation conditions described above an
\emph{admissible algorithm}.
An admissible algorithm does not know
the true mean vector $\mu$ and its sampling decisions may depend only on
the observations collected so far, the known problem parameters, and
its internal randomization.

Define the minimax failure probability over this family, 
\begin{equation}
    r_T^\star(\Delta,\varepsilon)
    :=
    \inf_{\mathcal A}
    \max_{a\in[K]}
    \mathbb{P}_{\mu^{(a)}}^{\mathcal A}
    \bigl(\mathrm{Succ}_T^c\bigr),
    \label{eq:one-positive-minimax-risk}
\end{equation}
where the infimum is over admissible algorithms. In words,
$r_T^\star(\Delta,\varepsilon)$ is the smallest achievable
worst-case failure probability over the $K$ possible locations
of the positive arm, for the fixed profile $(\Delta,\varepsilon)$.

Throughout this section, the family parameters $K$, $\Delta$, and
$\varepsilon$, the observation variance $\sigma^2$, the prior parameters
$(\mu_0,\sigma_0^2)$, and the posterior credibility tolerance $\delta$
are fixed as $T\to\infty$.

\subsection{An asymptotic lower bound}
\label{sec:asymptotic-lower-bound}

We now show that both $h$ and $S$ impose necessary sample-complexity
scales for posterior success.

\begin{theorem}[Search and certification lower bound]
\label{thm:one-positive-lower-bound}
For every $\Delta>0$ and $\varepsilon\in(0,\infty)^K$, consider the
family~\eqref{eq:one-positive-family} and the admissible algorithms
described in Section~\ref{sec:one-positive-families}.
With $h$ and $S$ defined in~\eqref{eq:lower-bound-complexities},
the minimax failure probability in
\eqref{eq:one-positive-minimax-risk} satisfies
\begin{equation}
    \limsup_{T\to\infty}
    -\frac{1}{T}\log r_T^\star(\Delta,\varepsilon)
    \le
    \frac{1}{2\max\{h,S\}}.
    \label{eq:one-positive-lower-bound}
\end{equation}
\end{theorem}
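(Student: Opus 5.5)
The plan is to prove the two halves of the bound, rate $\le 1/(2h)$ and rate $\le 1/(2S)$, separately. Both use a one-sided change of measure from a reference instance under which failure has probability bounded below by a constant that does not depend on the algorithm. As the reference I will take the all-negative instance $\mu^{0}:=(-\varepsilon_1,\ldots,-\varepsilon_K)$, or a small perturbation of it.

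\textbf{Step 1 (failure is likely under all-negative instances).}
For an arm $i$ with mean $\nu<0$, consider the arm-wise event
\[
E_i:=\{Z_{i,n}<z\ \text{for all } n\ge1\}.
\]
This is the event that the random walk $\lambda\mu_0+\sum_{s\le n}X_{i,s}$, which has negative drift $\nu$, stays below the curve $z\sigma\sqrt{n+\lambda}$ for all $n$. Its probability $p_i(\nu)$ is strictly positive. One way to see this is to condition on the first few samples being very negative and then apply the strong law or a maximal inequality to the remainder. On $\bigcap_i E_i$, every terminal $Z_{i,N_i(T)}$ is below $z$, whatever the adaptive counts are. By independence across arms,
\[
\mathbb{P}_{\mu^0}^{\mathcal A}(\mathrm{Succ}_T^c)\ \ge\ c_0:=\prod_i p_i(-\varepsilon_i)>0,
\]
uniformly over admissible $\mathcal A$ and $T$. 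The same bound holds, with a slightly smaller constant, for instances in which one coordinate is replaced by $-\epsilon'$ for small $\epsilon'>0$.

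\textbf{Step 2 (change of measure with pathwise control).}
For two instances $\nu,\nu'$ that differ only in arm $a$, the log-likelihood ratio of the history is
\[
L_T=\sum_{s\le N_a(T)}\ell(X_{a,s}),
\]
where, under $\nu'$, each increment has mean $(\nu_a-\nu'_a)^2/(2\sigma^2)$ and is Gaussian. Write $L_T=N_a(T)(\nu_a-\nu_a')^2/(2\sigma^2)+M_{N_a(T)}$, where $M_n$ is a Gaussian random walk indexed by the arm's own samples. A maximal inequality gives $\mathbb{P}(\max_{n\le T}|M_n|>T^{3/4})\le e^{-\Omega(\sqrt T)}$. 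For any event $F$, the standard inequality is
\[
\mathbb{P}_\nu(F)\ \ge\ e^{-x}\bigl(\mathbb{P}_{\nu'}(F\cap\{L_T\le x\})\bigr).
\]

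\textbf{Step 3 (search term $S$).}
Take $\nu'=\mu^0$. Since $\sum_a N_a(T)=T$ and $\sum_a w_a=1$ for $w_a:=\sigma^2/((\Delta+\varepsilon_a)^2S)$, on every sample path some $a$ satisfies $N_a(T)\le Tw_a$. Hence
\[
\sum_a\mathbb{P}_{\mu^0}\bigl(\mathrm{Succ}_T^c,\ N_a(T)\le Tw_a\bigr)\ge c_0,
\]
so some $a$ (depending on $\mathcal A$ and $T$) has this probability at least $c_0/K$. On that event, and off the martingale-deviation event,
\[
L_T\le Tw_a(\Delta+\varepsilon_a)^2/(2\sigma^2)+T^{3/4}=T/(2S)+T^{3/4}.
\]
Applying Step 2 with $\nu=\mu^{(a)}$ gives
\[
\max_a\mathbb{P}_{\mu^{(a)}}(\mathrm{Succ}_T^c)\ \ge\ e^{-T/(2S)-T^{3/4}}\bigl(c_0/K-e^{-\Omega(\sqrt T)}\bigr).
\]
Taking the infimum over $\mathcal A$ and then $\limsup$ of $-\tfrac1T\log$ yields the rate bound $1/(2S)$.

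\textbf{Step 4 (certification term $h$).}
Fix $a$, for instance $a=1$, and let $\nu'$ equal $\mu^{(a)}$ except that $\nu'_a=-\epsilon'$. By Step 1, $\mathbb{P}_{\nu'}(\mathrm{Succ}_T^c)\ge c_0'>0$. Use the trivial bound $N_a(T)\le T$, so that $L_T\le T(\Delta+\epsilon')^2/(2\sigma^2)+T^{3/4}$ off the deviation event. Step 2 then gives the rate bound $(\Delta+\epsilon')^2/(2\sigma^2)$, and letting $\epsilon'\downarrow0$ gives $1/(2h)$. Combining Steps 3 and 4 gives $\min\{1/(2h),1/(2S)\}=1/(2\max\{h,S\})$.

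\textbf{Main obstacle.}
I expect the main difficulty to be Step 1 together with the pathwise bookkeeping in Step 3. Step 1 requires the constant $c_0$ to be independent of the algorithm even though arms may be sampled only a few times. The arm-wise "never certified for any $n$" events handle this, but their positivity must be checked carefully, including the offset $\lambda\mu_0$. Step 3 must avoid the factor-$1/c_0$ loss that a naive KL, data-processing or Markov argument would introduce. The path-wise pigeonhole, which costs only a factor $1/K$ that disappears at exponential scale, is what I would rely on to avoid that loss.
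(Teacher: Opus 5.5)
Your proposal is correct. Its skeleton matches the paper's:
\begin{itemize}
\item the same never-certified event $\{Z_{i,n}<z\ \forall n\ge1\}$ under an all-negative reference instance. The paper makes $p(v)$ explicit via the martingale $\exp(2vY_{i,m}/\sigma)$ together with a very negative first sample, which is your ``condition on the first samples'' idea. Like the paper, you implicitly use that admissible algorithms sample every arm at least once.
\item the same weighted pigeonhole with $w_a\propto\sigma^2/(\Delta+\varepsilon_a)^2$ for the $S$ term.
\item the same perturbation of arm $a$ from $\Delta$ to a small negative mean for the $h$ term, with $T\to\infty$ taken before the perturbation is sent to zero.
\end{itemize}

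Where you genuinely differ is the change-of-measure mechanism. The paper conditions the reference measure on the failure-forcing event $E$. This gives $\mathsf Q_T$ with $\mathsf Q_T(\mathrm{Succ}_T^c)=1$ and $\mathrm{KL}(\mathsf Q_T\Vert P_\nu^T)\le\log(1/\mathbb P_\nu(E))$. It then uses the lossless bound $\log(1/P_{\nu'}^T(\mathrm{Succ}_T^c))\le\mathrm{KL}(\mathsf Q_T\Vert P_{\nu'}^T)$ and pigeonholes over the conditional expected counts $\mathbb E_\nu[N_a(T)\mid E]$. Because conditioning on $E$ destroys the martingale structure, it controls $\mathbb E_{\mathsf Q_T}[M_a(T)]$ through the Donsker--Varadhan entropy inequality.

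You instead keep the unconditioned reference measure and pigeonhole pathwise, paying a factor $1/K$ in probability. You then bound the history log-likelihood ratio pathwise with a maximal inequality at scale $T^{3/4}$. This also avoids the $1/c_0$ loss of a naive data-processing argument, as you intended.

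\begin{itemize}
\item \textbf{What your route buys.} It is more elementary: no entropy inequality, and no conditioning on events that involve unobserved samples. All its losses (the $1/K$, the $T^{3/4}$, the additive deviation probability) vanish at exponential scale, which is all the $\limsup$ statement requires.
\item \textbf{What the paper's route buys.} It gives a clean, explicit finite-budget inequality $\exp[-(\sqrt{T/(2S)}+\sqrt{C_0})^2]$, valid for every $T\ge K$. The paper reuses it for the sample-complexity corollary (for example, to show $T_\eta\to\infty$) and for the equal-positive-means appendix. Yours is nontrivial only once $e^{-\Omega(\sqrt T)}<c_0/K$, unless you tune the deviation threshold to $\log(1/c_0)$, which would also reduce your correction to order $\sqrt T$.
\end{itemize}

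One small correction. The constant for the perturbed instance is not ``slightly smaller'' in any uniform sense: $p(-\epsilon')\to0$ as $\epsilon'\downarrow0$, since a driftless walk eventually crosses $z\sigma\sqrt{n+\lambda}$ almost surely. This is harmless because you fix $\epsilon'$, let $T\to\infty$, and only then send $\epsilon'\downarrow0$, just as the paper notes that $C_{a,\beta}$ need not stay bounded.
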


The proof, including finite-budget lower bounds from which
\eqref{eq:one-positive-lower-bound} follows, is deferred to
Appendix~\ref{app:lower-bounds}.

Theorem~\ref{thm:one-positive-lower-bound} applies to every fixed
profile $(\Delta,\varepsilon)$. Here, failure refers to the event
$\mathrm{Succ}_T^c$. For any admissible algorithm, the asymptotic
exponent of its worst-case failure probability over the family
cannot exceed the bound in~\eqref{eq:one-positive-lower-bound}.
At each budget $T$, the instance attaining the worst-case failure
probability may depend on both the algorithm and $T$.

In sample-complexity terms, achieving failure probability at most
$\eta$ on every instance in the family requires
\begin{equation}
    T
    \ge
    \bigl(2-o_\eta(1)\bigr)\max\{h,S\}
    \log\frac{1}{\eta},
    \qquad \eta\downarrow0,
    \label{eq:one-positive-sample-lower-bound}
\end{equation}
where $o_\eta(1)\to0$ as $\eta\downarrow0$, with all family,
observation, and prior parameters, as well as $\delta$, held fixed.
The derivation of~\eqref{eq:one-positive-sample-lower-bound} from
Theorem~\ref{thm:one-positive-lower-bound} is given in
Appendix~\ref{app:lower-bound-sample-complexity}.

\textbf{Proof idea.}
Fix an admissible algorithm.
Under any auxiliary instance whose means are all negative, we
construct an event $E$ on which $Z_{i,n}<z$ for every arm $i$
and every $n\ge1$. Its probability is positive and does not depend
on the budget or algorithm. Since every arm is sampled at least once,
\eqref{eq:success-zmax} implies that $E$ guarantees posterior failure.

For the term $S$, take
$\nu=(-\varepsilon_1,\ldots,-\varepsilon_K)$ and consider the
observed history under $\nu$ conditional on $E$. Since the algorithm
still collects exactly $T$ observations, the conditional expected
sample counts sum to $T$. A weighted averaging argument shows that
there exists $a\in[K]$ satisfying
\[
    \mathbb E_\nu[N_a(T)\mid E]
    \le
    \frac{T}{S}\,
    \frac{\sigma^2}{(\Delta+\varepsilon_a)^2}.
\]
Passing from $\nu$ to $\mu^{(a)}$ changes only the mean of arm $a$,
from $-\varepsilon_a$ to $\Delta$.
A subsequent change-of-measure argument gives the upper bound $1/(2S)$ on the
asymptotic exponent of the worst-case posterior-failure probability
over the family.

For the term $h$, fix a location $a$ and replace its positive mean
$\Delta$ by $-\beta$, where $\beta>0$, leaving the other means
unchanged. For each fixed $\beta$, a similar change-of-measure
argument using $N_a(T)\le T$ gives, as $T\to\infty$, the upper bound
$(\Delta+\beta)^2/(2\sigma^2)$ on the failure exponent. Letting $\beta\downarrow0$ afterwards yields $1/(2h)$.
Combining the bounds $1/(2S)$ and $1/(2h)$ gives
\eqref{eq:one-positive-lower-bound}.

\subsection{Comparison with the upper bound}
\label{sec:lower-bound-comparison}

We now compare the lower-bound complexity $\max\{h,S\}$ with
$H_{\mathrm{cert}}$ on one-positive-arm families.

\begin{lemma}[Certification complexity on one-positive-arm families]
\label{lem:one-positive-complexity}
For the family~\eqref{eq:one-positive-family} and every $a\in[K]$,
\begin{equation}
    \max\{h,S\}
    \le
    H_{\mathrm{cert}}(\mu^{(a)})
    \le
    h+S
    \le
    2\max\{h,S\}.
    \label{eq:one-positive-complexity-comparison}
\end{equation}
\end{lemma}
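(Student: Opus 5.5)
We compute both complexities for $\mu:=\mu^{(a)}$ directly. Here $\mu_\star=\Delta$, and $m=1$ since only arm $a$ is positive. This gives
\[
    H_{\mathrm{pos}}(\mu)=\frac{K\sigma^2}{\Delta^2}=Kh .
\]
For $H_{\mathrm{cap}}$, arm $a$ has gap $0$, so its effective gap is $\max\{\Delta,0\}=\Delta$ and it contributes $\sigma^2/\Delta^2=h$. Each arm $i\neq a$ has gap $\Delta+\varepsilon_i>\Delta$, so it contributes $\sigma^2/(\Delta+\varepsilon_i)^2$. Therefore
\[
    H_{\mathrm{cap}}(\mu)=h+\sum_{i\ne a}\frac{\sigma^2}{(\Delta+\varepsilon_i)^2}
    = h+S-\frac{\sigma^2}{(\Delta+\varepsilon_a)^2}.
\]

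\textbf{Upper bound.} Since $H_{\mathrm{cert}}\le H_{\mathrm{cap}}$, the display above gives $H_{\mathrm{cert}}(\mu)\le h+S$. The last inequality $h+S\le 2\max\{h,S\}$ is trivial.

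\textbf{Lower bound.} We must show that both $H_{\mathrm{cap}}(\mu)$ and $H_{\mathrm{pos}}(\mu)$ are at least $\max\{h,S\}$.

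For $H_{\mathrm{cap}}$: it is at least $h$, because the arm-$a$ term alone equals $h$ and all terms are nonnegative. It is at least $S$, because $\sigma^2/(\Delta+\varepsilon_a)^2<\sigma^2/\Delta^2=h$, using $\varepsilon_a>0$.

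For $H_{\mathrm{pos}}=Kh$: it is at least $h$ since $K\ge1$. It is at least $S$ because each of the $K$ terms of $S$ satisfies $\sigma^2/(\Delta+\varepsilon_i)^2\le \sigma^2/\Delta^2=h$, so $S\le Kh$.

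Taking the minimum of the two gives $H_{\mathrm{cert}}(\mu)\ge\max\{h,S\}$, which completes the chain of inequalities in~\eqref{eq:one-positive-complexity-comparison}.

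There is no real obstacle here. The only point needing care is the swap between the arm-$a$ term of $S$, namely $\sigma^2/(\Delta+\varepsilon_a)^2$, and the arm-$a$ term of $H_{\mathrm{cap}}$, namely $h$. This swap only increases the sum, and that is exactly what makes the bounds $H_{\mathrm{cap}}\ge S$ and $H_{\mathrm{cap}}\le h+S$ hold.
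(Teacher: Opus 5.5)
Your proposal is correct and matches the paper's proof: the same computations $H_{\mathrm{pos}}(\mu^{(a)})=Kh$ and $H_{\mathrm{cap}}(\mu^{(a)})=h+S-\sigma^2/(\Delta+\varepsilon_a)^2$, then the same termwise comparisons with $h$. The only difference is cosmetic. The paper first uses $H_{\mathrm{cap}}\le Kh=H_{\mathrm{pos}}$ to get $H_{\mathrm{cert}}=H_{\mathrm{cap}}$, whereas you lower-bound both terms of the minimum separately, using $S\le Kh$.
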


The proof is deferred to
Appendix~\ref{app:one-positive-complexity}.
Lemma~\ref{lem:one-positive-complexity} shows that, on these
one-positive-arm families, the lower-bound complexity
$\max\{h,S\}$ and the certification complexity
$H_{\mathrm{cert}}$ agree up to universal constants.
By Lemma~\ref{lem:complexity-comparison},
$H_{\mathrm{HTG}}$ is within a factor $Q=O(\log K)$ of
$H_{\mathrm{cert}}$. Together with
Theorem~\ref{thm:htg-main}, this shows that HTG's asymptotic
sample complexity is within an $O(\log K)$ factor of the lower
bound on these one-positive-arm families. In this sense, HTG is near-optimal up to
a logarithmic factor in its sample complexity on this class of instances.


The one-positive-arm families considered above include the
``one positive arm with other arms strongly negative'' example
from Section~\ref{sec:htg-interpretation} as a special case,
while also allowing arbitrary negative means.
As a complement, Appendix~\ref{app:equal-positive-exponent}
analyzes the ``many positive arms'' example. When all arms have
the same positive mean, every admissible algorithm has the exact
asymptotic posterior-failure exponent
$1/(2H_{\mathrm{pos}}(\mu))$, with $K$ and all model parameters
held fixed.

\section{Connections and Comparisons}
\label{sec:connections}

We first compare HTG with two standard fixed-budget sampling strategies,
uniform sampling and sequential halving, and then discuss a frequentist
interpretation of the posterior quantities used by HTG.

\subsection{Uniform sampling and sequential halving}
\label{sec:baseline-comparisons}

We compare HTG with uniform sampling and sequential halving (SH)
\citep{karnin13almost}, changing only the sampling rule.
All methods use the same posterior updates from
Section~\ref{sec:htg-algorithm}, with all observations collected from an arm used
to update its posterior. At the end of the budget, every method recommends 
$J_T\in\arg\max_{i\in[K]} Z_{i,N_i(T)}$. 
As discussed in Section~\ref{sec:one-positive-families}, this recommendation is optimal for the posterior-success objective: whenever any arm satisfies the posterior-success criterion~\eqref{eq:success-event}, the recommended arm does as well.
For elimination-based sampling rules, eliminated arms remain eligible for the final recommendation. 

\textbf{Uniform sampling.}
Uniform sampling allocates the budget round-robin, so each arm receives either
$n=\lfloor T/K\rfloor$ or $n+1$ observations. Recall
$c=\sqrt{2}\,z+\sqrt{\lambda}|\mu_0|/\sigma$. If $n\ge\lambda$, then
\begin{equation}
\label{eq:uniform-finite-budget}
    \mathbb P_\mu^{\mathrm{Uniform}}
    \bigl(\mathrm{Succ}_T^c\bigr)
    \le
    \exp\left[
        -\frac12
        \left(
            \frac{\mu_\star}{\sigma}\sqrt n-c
        \right)_+^2
    \right].
\end{equation}
Moreover, uniform sampling has the exact asymptotic failure exponent
\begin{equation}
\label{eq:uniform-exponent}
    \lim_{T\to\infty}
    -\frac1T
    \log
    \mathbb P_\mu^{\mathrm{Uniform}}
    \bigl(\mathrm{Succ}_T^c\bigr)
    =
    \frac{1}{2H_{\mathrm{pos}}(\mu)}.
\end{equation}
\eqref{eq:uniform-exponent} shows that uniform sampling exploits the presence of multiple positive
arms through $H_{\mathrm{pos}}$, but cannot redirect observations away from
unfavorable arms. Exact formulas and proofs are given in
Appendix~\ref{app:uniform-sampling}. 

\textbf{Sequential halving.}
SH divides the budget approximately equally across
$Q-1=\lceil\log_2 K\rceil$ elimination stages
\citep{karnin13almost}. Starting with all $K$ arms, each stage samples the
active arms equally and retains the upper half according to their posterior
$Z$-scores. Since the active arms have the same cumulative sample count at
each elimination, this ranking is equivalent to ranking them by their
empirical means. 

For the posterior-success objective in~\eqref{eq:success-event}, SH satisfies a finite-budget guarantee whose leading instance-dependent complexity is
\begin{equation}
\label{eq:H-SH}
    H_{\mathrm{SH}}(\mu)
    :=
    (Q-1)
    \min\left\{
        \sum_{i=1}^{K}\frac{\sigma^2}{\mu_i^2},
        \frac{K\sigma^2}{\mu_\star^2}
    \right\}.
\end{equation}
The precise finite-budget guarantee, algorithm specification, and proof are
given in Appendix~\ref{app:sequential-halving}.

The leading complexities of the three methods can now be compared directly.
Uniform sampling has complexity $H_{\mathrm{pos}}(\mu)$ by
\eqref{eq:uniform-exponent}, and SH has complexity
$H_{\mathrm{SH}}(\mu)$. By the definition of $H_{\mathrm{HTG}}$ in
\eqref{eq:H-HTG}, we have, 
\begin{equation}
\label{eq:baseline-complexity-comparison}
    H_{\mathrm{HTG}}(\mu)
    \le H_{\mathrm{pos}}(\mu),
    \qquad
    H_{\mathrm{HTG}}(\mu)
    \le \frac{H_{\mathrm{SH}}(\mu)}{Q-1},
    \quad K\ge2.
\end{equation}

\eqref{eq:baseline-complexity-comparison} shows that, at the level of the leading instance-dependent complexities appearing
in our guarantees, HTG is no larger than either baseline and can be strictly
smaller by adapting its allocation to favorable instance structure.
Instance families on which HTG has strictly smaller sample complexity than
uniform sampling and SH are given in
Appendix~\ref{app:baseline-separation}, and we compare the three methods
empirically in Section~\ref{sec:experiments}.

\vspace{-0.5em}

\subsection{Frequentist interpretation}
\label{sec:frequentist-interpretation}

The posterior-based rules have a frequentist interpretation in terms of
one-sided Gaussian tests. Consider the flat-prior limit
$\sigma_0^2\to\infty$ with $\mu_0$ fixed, so that $\lambda\to0$. In this limit,
\[
    Z_{i,n}
    =
    \frac{\sqrt n\,\overline X_{i,n}}{\sigma},
    \qquad
    p_{i,n}:=1-\Phi(Z_{i,n}).
\]
For a fixed arm and deterministic sample count, $p_{i,n}$ is the usual
one-sided Gaussian $p$-value for testing $H_{0,i}:\mu_i\le0$ against
$H_{1,i}:\mu_i>0$. Moreover, in the flat-prior limit, the posterior
probability of positivity satisfies
$\mathbb P(\mu_i>0\mid X_{i,1:n})=1-p_{i,n}$.

Under this interpretation, HTG's halving phase retains the active arms with the smallest
current $p$-values, its greedy phase samples an arm with the smallest current
$p$-value, and its terminal recommendation selects an arm with the smallest
terminal $p$-value. The posterior-success criterion becomes
\[
    \mathrm{Succ}_T
    =
    \left\{
        \min_{i\in[K]}p_{i,N_i(T)}\le\delta
    \right\}.
\]
Equivalently, at least one one-sided Gaussian lower confidence bound
$\overline X_{i,N_i(T)}-z\sigma/\sqrt{N_i(T)}$ is nonnegative.
A derivation of these identities under a flat prior is given in
Appendix~\ref{app:frequentist-interpretation}.

\vspace{-0.5em}

\section{Experiments and Discussion}
\label{sec:experiments}

\textbf{Setup.}
We compare HTG, uniform sampling, and posterior-based sequential halving
(SH), discussed in Section~\ref{sec:baseline-comparisons}. All methods use the same posterior, incorporating all collected
observations, and recommend the arm with the largest terminal posterior
$Z$-score over all $K$ arms. We set $K=32$, $\sigma^2=1$, the working
prior to $\mathcal{N}(0,1)$, and $\delta=0.01$. We estimate
$\mathbb{P}^{\mathcal{A}}_\mu(\mathrm{Succ}_T)$ using $10{,}000$ 
repetitions per instance and budget. Within each repetition, the $n$th pull of arm $i$ yields the same
simulated observation for every method. Success is evaluated after exactly $T$ samples. We use $16$
approximately logarithmically spaced budgets from $576$ to $65{,}536$,
rounded to multiples of $K$.

\begin{figure}[h]
    \centering
    \includegraphics[width=\linewidth]{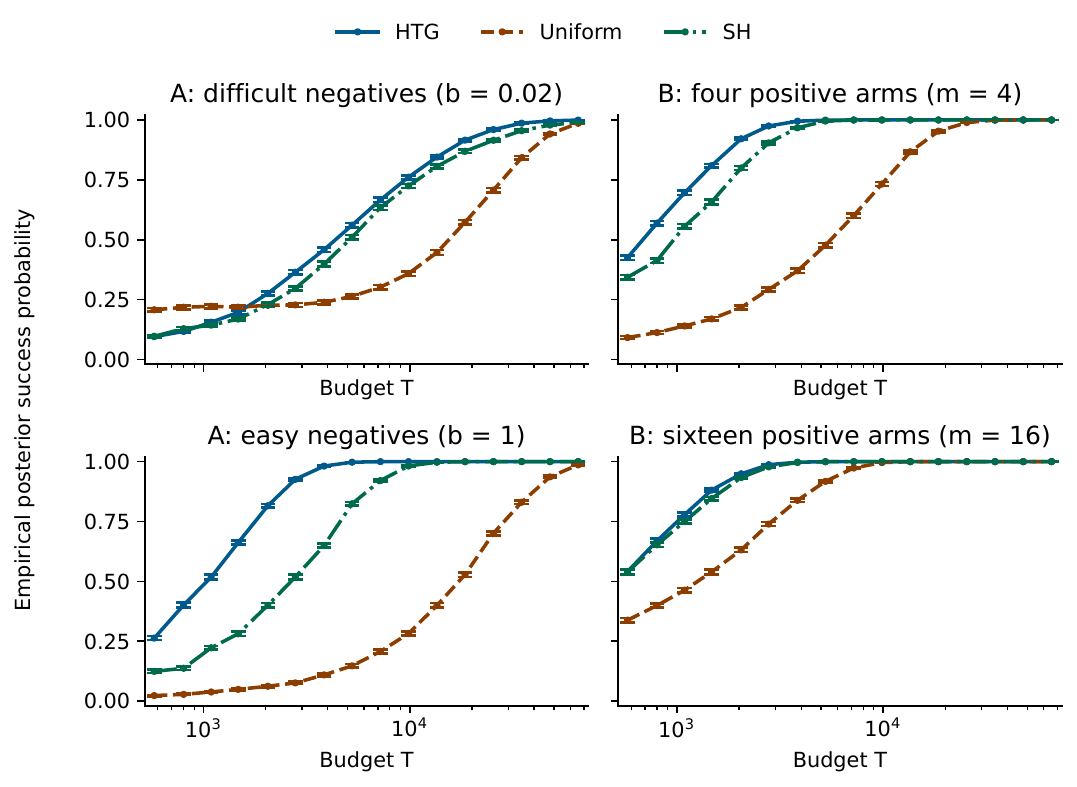}
    \caption{Posterior success probability versus budget.
    Left column: one arm has mean $0.1$, and the remaining arms have
    mean $-b$, with $b=0.02$ (top) and $b=1$ (bottom).
    Right column: $m$ arms have mean $0.1$, and the remaining arms have
    mean $-1$, with $m=4$ (top) and $m=16$ (bottom).
    Error bars show pointwise $95\%$ Wilson Monte Carlo confidence
    intervals based on $10{,}000$ repetitions.}
    \label{fig:ppos-experiments}
\end{figure}

\textbf{Negative competitors.}
The left column of Figure~\ref{fig:ppos-experiments} examines the
separation effect illustrated by the single-positive-arm example in
Section~\ref{sec:htg-interpretation}. Increasing $b$ reduces
$H_{\mathrm{cap}}$ while leaving $H_{\mathrm{pos}}$ unchanged.
HTG benefits substantially from easier negative competitors, and its
advantage over both baselines is more pronounced in the well-separated
setting. 

\textbf{Multiple positive arms.}
The right column examines the multiple-opportunity effect illustrated
by the many-positive-arm example in Section~\ref{sec:htg-interpretation}.
This behavior is consistent with the role of $H_{\mathrm{pos}}$,
which decreases in proportion to $1/m$ when the positive means are
equal and fixed: additional favorable arms provide more opportunities
to obtain a certificate.

Appendix~\ref{app:experiments} extends both comparisons over a broader
range of instance parameters. It also reports sample allocation
trajectories and comparisons across the number of arms. The latter use
the same instance family as Appendix~\ref{app:baseline-separation},
where uniform sampling and SH require
$\Omega(K)$ and $\Omega(\log K)$ times as many samples as HTG,
respectively. Empirical scalings with $K$ are consistent with the theoretical rates established on this family. 

\textbf{Conclusion.}
We proposed HTG for fixed-budget posterior success, exploiting the two sources of favorable instance structure summarized by
$H_{\mathrm{cert}}=\min\{H_{\mathrm{cap}},H_{\mathrm{pos}}\}$.
Its finite-budget exponential failure bound is governed by a complexity within
$O(\log K)$ of $H_{\mathrm{cert}}$, and our lower bounds establish
near-optimal asymptotic sample complexity on one-positive-arm families.
An interesting extension, motivated by experiments that compare multiple candidate treatments against a shared control \citep{villar2015multi}, is posterior certification relative to an unknown control mean, for example
requiring
$\mathbb P(\mu_{J_T}-\mu_c>\tau\mid\mathcal F_T)\ge1-\delta$.
This introduces a new allocation tradeoff between learning the common control
mean $\mu_c$ and gathering evidence for promising candidate arms.

\bibliographystyle{abbrvnat_lastname_first_overleaf}
\bibliography{library-overleaf}

\clearpage
\appendix
\thispagestyle{empty}

\onecolumn

\aistatstitle{
Supplementary Materials}

\vspace{-4em}
\part{APPENDIX} 
\parttoc 

\section{Related Work}
\label{app:related-work}

\textbf{Best-arm identification.}
Frequentist best-arm identification (BAI) seeks an arm with the largest
true mean. Fixed-budget methods minimize identification error under a
prescribed sampling budget
\citep{audibert10best,karnin13almost,carpentier16tight}, whereas
fixed-confidence methods minimize expected sample size subject to an
identification-error tolerance
\citep{kaufmann16onthecomplexity,garivier16optimal}.
Approximate identification and simple regret relax exact optimality. \citet{zhao23revisiting} analyze Sequential Halving for
returning approximately optimal arms and obtaining small simple regret.
Elimination methods are not restricted to exact BAI, and their
performance need be analyzed under the objective of interest.
Our comparisons evaluate uniform sampling and Sequential Halving directly
under posterior success, using the same posterior updates and terminal
recommendation rule as HTG, instead of transferring exact-BAI guarantees.

\medskip
\noindent
\textbf{Thresholding and good-arm identification.}
Thresholding bandits classify arms relative to a prescribed threshold,
allowing a specified precision around it \citep{locatelli16anoptimal}.
Good-arm identification seeks one or more acceptable arms without
requiring optimality
\citep{kano2019good,katzsamuels20true,jourdan2026anytime}.
The benefit of multiple acceptable arms is studied by
\citet{katzsamuels20true} and, more generally, in pure exploration with
multiple correct answers \citep{degenne19pure}.
Related sequential tests determine whether any arm lies on a specified
side of a threshold
\citep{kaufmann2018sequential,tabata2020bad}.
These problems define correctness through the true means. Our success
event instead concerns terminal posterior evidence: a truly favorable
recommendation can fail the credibility criterion, and an unfavorable
recommendation can satisfy it. 

\medskip
\noindent
\textbf{The connection to APGAI.}
APGAI is an anytime good-arm identification algorithm with fixed-budget
and fixed-confidence guarantees \citep{jourdan2026anytime}.
Its sampling index is
$\sqrt{N_i(t)}(\overline X_{i,N_i(t)}-\theta)$, where $\theta$ is the target
threshold; the authors attribute this index to the
$\mathrm{APT}_{\mathrm P}$ rule of \citet{tabata2020bad}.
At threshold zero, HTG's flat-prior score is
$Z_{i,N_i(t)}=\sqrt{N_i(t)}\,\overline X_{i,N_i(t)}/\sigma$.
With common observation variance, these indices have the same maximizers
for a given history.
The complete algorithms nevertheless differ: HTG uses a horizon-dependent
balanced allocation and halving phase before posterior-greedy sampling,
and evaluates a posterior certificate after exactly the prescribed budget.
All original arms remain eligible during its greedy phase and final
recommendation. 
Our contribution is HTG
sampling strategy that combines balanced allocation, successive halving,
and posterior-greedy sampling, together with its guarantees for terminal
posterior success.

\medskip
\noindent
\textbf{Bayesian bandits and best-arm identification.}
Posterior sampling originates with \citet{thompson33onthelikelihood}
and has been extensively analyzed for cumulative regret
\citep{agrawal12analysis,russo14learning}.
For Bayesian BAI, \citet{russo2020simple} studies the asymptotic decay of
posterior probability assigned to incorrect best-arm identities, holding
the true instance fixed.
\citet{shang20fixed} develop fixed-confidence stopping and sample-complexity
guarantees for Bayesian sampling rules.
Bayesian fixed-budget BAI instead evaluates best-arm misidentification
probability averaged over a prior on instances
\citep{atsidakou2022bayesian,pmlr-v258-nguyen25a};
prior-averaged fixed-confidence guarantees are studied by
\citet{jang2024fixed}.
Recent work also permits abstention and controls incorrect non-abstaining
recommendations under an abstention budget \citep{huang2026bayesian}.
These criteria should be distinguished from our sampling probability of
terminal posterior failure: we neither require identifying the best arm
nor treat abstention as a separately budgeted outcome.

\medskip
\noindent
\textbf{Bayesian satisficing.}
The $(M,\delta)$-satisficing formulation of
\citet{reverdy2016satisficing} is a conceptual predecessor.
An arm is satisfactory if its mean exceeds a threshold $M$, and is regarded
as sufficiently satisfactory when the current belief assigns probability at
least $1-\delta$ to this event.
The posterior criterion is related to ours.
The difference is the objective: their performance measure is cumulative
satisficing regret over the sequence of decisions, whereas we ask only whether
the recommended arm satisfies the posterior criterion at the end of a fixed
sampling budget.

\medskip
\noindent
\textbf{Bayesian experimental design and probability of success.}
Bayesian experimental design selects experiments by maximizing expected
utility \citep{chaloner1995bayesian}. If the utility is the indicator that a
prescribed posterior criterion is satisfied, its expected utility is the
prior-averaged probability that the experiment achieves posterior success.
Related notions of assurance similarly evaluate, before an experiment, the
probability that a specified success criterion will ultimately be met
\citep{o2005assurance}.
\citet{hagar2025economical} study how posterior probabilities vary across repeated sampling under the assumed data-generating model and use these sampling distributions to choose sample sizes and posterior decision thresholds.
Our contribution is not only the use of posterior-probability success criteria, but the adaptive allocation of a fixed budget across multiple
arms and instance-dependent guarantees for achieving terminal posterior
success. A related fixed-budget exploration--validation design adaptively selects a
treatment and then uses a separate A/B test for validation
\citep{pmlr-v300-zhang26c}.
In contrast, all observations collected by HTG contribute to the terminal
posterior used to determine posterior success.


\section{Gaussian posterior calculations}
\label{app:posterior}

We derive the posterior quantities used in
Section~\ref{sec:htg-algorithm}. Recall that
\[
    \mu_i\sim\cN(\mu_0,\sigma_0^2),
    \qquad
    X_{i,s}\mid\mu_i\sim\cN(\mu_i,\sigma^2),
\]
and define $\lambda:=\sigma^2/\sigma_0^2$.

Consider any realized history in which arm $i$ has been observed $n$
times, with observations $X_{i,1},\ldots,X_{i,n}$. Since the sampling decisions depend only on previously observed data and
internal randomization, the posterior update for $\mu_i$ is proportional to
its prior density times the likelihood of its observed rewards. By Bayes' rule,
\[
    p(\mu_i\mid X_{i,1:n})
    \propto
    \exp\left\{
    -\frac{(\mu_i-\mu_0)^2}{2\sigma_0^2}
    -\frac{1}{2\sigma^2}
    \sum_{s=1}^n(X_{i,s}-\mu_i)^2
    \right\}.
\]
Collecting the terms that depend on $\mu_i$ gives
\[
    p(\mu_i\mid X_{i,1:n})
    \propto
    \exp\left\{
    -\frac{n+\lambda}{2\sigma^2}
    \left(\mu_i-\widehat\mu_{i,n}\right)^2
    \right\}.
\]
Therefore,
\[
    \mu_i\mid X_{i,1:n}
    \sim
    \cN\left(
    \widehat\mu_{i,n},
    \frac{\sigma^2}{n+\lambda}
    \right).
\]
Writing
$\widehat\sigma_n:=\sigma/\sqrt{n+\lambda}$, the posterior
$Z$-score is
\[
    Z_{i,n}
    :=
    \frac{\widehat\mu_{i,n}}{\widehat\sigma_n}
    =
    \frac{\lambda\mu_0+n\overline X_{i,n}}
         {\sigma\sqrt{n+\lambda}}.
\]
Hence,
\[
    \PP(\mu_i>0\mid X_{i,1:n})
    =
    \Phi(Z_{i,n}).
\]


\section{Proof of the Complexity Comparison}
\label{app:complexity-comparison}

\begin{proof}[Proof of Lemma~\ref{lem:complexity-comparison}]
For convenience, define
\[
    H_{\mathrm{thr}}
    :=
    \sum_{i=1}^K\frac{\sigma^2}{\mu_i^2},
    \qquad
    H_{2,\mathrm{cap}}
    :=
    \max_{1\le r\le K}
    \frac{r\sigma^2}
    {\max\{\mu_\star,\mu_\star-\mu_{(r)}\}^2},
\]
so that
\[
    H_{\mathrm{HTG}}
    =
    \min\{H_{\mathrm{thr}},H_{\mathrm{pos}},
    QH_{2,\mathrm{cap}}\}.
\]

We first compare $H_{\mathrm{cap}}$ with the first and third terms above.
Since
\[
    \max\{\mu_\star,\mu_\star-\mu_i\}\ge |\mu_i|
\]
for every $i$, with the convention $\sigma^2/0^2=+\infty$, we have
\begin{equation}
    H_{\mathrm{cap}}
    \le H_{\mathrm{thr}}.
    \label{eq:app-Hcap-Hthr}
\end{equation}

Next, let
\[
    a_r
    :=
    \frac{\sigma^2}
    {\max\{\mu_\star,\mu_\star-\mu_{(r)}\}^2},
    \qquad r\in[K].
\]
Since $\mu_{(1)}\ge\cdots\ge\mu_{(K)}$, the sequence $(a_r)_{r=1}^K$
is nonincreasing. Therefore,
\[
    H_{2,\mathrm{cap}}
    =
    \max_{1\le r\le K} r a_r
    \le
    \sum_{r=1}^K a_r
    =
    H_{\mathrm{cap}}.
\]
Conversely, $a_r\le H_{2,\mathrm{cap}}/r$ for every $r$, and hence
\[
    H_{\mathrm{cap}}
    =
    \sum_{r=1}^K a_r
    \le
    H_{2,\mathrm{cap}}
    \sum_{r=1}^K\frac1r.
\]
Using
\[
    \sum_{r=1}^K\frac1r
    \le 1+\log K
    \le 1+\log_2 K
    \le Q,
\]
where $Q=1+\lceil\log_2K\rceil$, we obtain
\begin{equation}
    H_{\mathrm{cap}}
    \le
    QH_{2,\mathrm{cap}}.
    \label{eq:app-Hcap-H2}
\end{equation}

Now,
\[
    H_{\mathrm{cert}}
    =
    \min\{H_{\mathrm{cap}},H_{\mathrm{pos}}\}.
\]
By~\eqref{eq:app-Hcap-Hthr} and~\eqref{eq:app-Hcap-H2},
\[
    H_{\mathrm{cert}}
    \le H_{\mathrm{thr}},
    \qquad
    H_{\mathrm{cert}}
    \le H_{\mathrm{pos}},
    \qquad
    H_{\mathrm{cert}}
    \le QH_{2,\mathrm{cap}}.
\]
Therefore,
\[
    H_{\mathrm{cert}}
    \le
    \min\{H_{\mathrm{thr}},H_{\mathrm{pos}},
    QH_{2,\mathrm{cap}}\}
    =
    H_{\mathrm{HTG}}.
\]

For the reverse inequality, since
$H_{2,\mathrm{cap}}\le H_{\mathrm{cap}}$ and $Q\ge1$,
\[
    H_{\mathrm{HTG}}
    \le QH_{2,\mathrm{cap}}
    \le QH_{\mathrm{cap}},
\]
while also
\[
    H_{\mathrm{HTG}}
    \le H_{\mathrm{pos}}
    \le QH_{\mathrm{pos}}.
\]
Hence
\[
    H_{\mathrm{HTG}}
    \le
    \min\{QH_{\mathrm{cap}},QH_{\mathrm{pos}}\}
    =
    QH_{\mathrm{cert}}.
\]

Combining the two bounds gives
\[
    H_{\mathrm{cert}}
    \le
    H_{\mathrm{HTG}}
    \le
    QH_{\mathrm{cert}}.
\]
\end{proof}

\section{Proof of the Finite-Budget Guarantee for HTG}
\label{app:htg-main}

\begin{proof}[Proof of Theorem~\ref{thm:htg-main}]
Fix a mean vector $\mu$ with $\mu_\star>0$. Throughout the proof, all
complexities are evaluated at this fixed instance, and we abbreviate
$\mathbb{P}_{\mu}^{\mathrm{HTG}}$ by $\mathbb{P}_{\mu}$. We work with the
independent potential-observation sequences $(X_{i,s})_{s\ge1}$ for
$i\in[K]$, of which HTG reveals prefixes through its adaptive allocation.

Let
\[
    \mathcal P:=\{i\in[K]:\mu_i>0\},
    \qquad
    \mathcal N:=\{i\in[K]:\mu_i<0\},
\]
and recall that
\[
    m=|\mathcal P|,
    \qquad
    b_0:=\frac{\sqrt{\lambda}\,|\mu_0|}{\sigma},
    \qquad
    c=\sqrt{2}\,z+b_0.
\]
For convenience, define
\[
    H_{\mathrm{thr}}
    :=
    \sum_{i=1}^K\frac{\sigma^2}{\mu_i^2},
    \qquad
    H_{2,\mathrm{cap}}
    :=
    \max_{1\le r\le K}
    \frac{r\sigma^2}
    {\max\{\mu_\star,\mu_\star-\mu_{(r)}\}^2},
\]
with the convention $\sigma^2/0^2=+\infty$, and write
\[
    h:=H_{\mathrm{HTG}}(\mu)
    =
    \min\{H_{\mathrm{thr}},H_{\mathrm{pos}},QH_{2,\mathrm{cap}}\}.
\]
Since $\mu_\star>0$, we have $m\ge1$ and $0<h<\infty$.
The offset in~\eqref{eq:T0} is
$T_0=T_{\mathrm{cut}}+96mc^2h$.

If $T\le T_0$, the right-hand side of~\eqref{eq:htg-main-bound} equals
one, so the claim is immediate. We therefore assume
$T>T_0\ge T_{\mathrm{cut}}$ throughout the remainder of the proof.

\medskip
\noindent
\textbf{Step 1: Allocation bounds and a common concentration event.}

Let $k_\ell=|S_\ell|$ and $q_\ell$ be the active-set size and the
additional per-arm sample count at halving stage $\ell$. Since
$T\ge T_{\mathrm{cut}}=3K\max\{n_\lambda,Q\}$, the first-stage count
satisfies
\begin{equation}
    n_0:=q_1
    =
    \left\lfloor\frac{T}{3K}\right\rfloor
    \ge n_\lambda
    =
    \max\{1,\lceil\lambda\rceil\}.
    \label{eq:htg-proof-initial-count}
\end{equation}
Consequently, every arm has at least $n_0\ge\lambda$ observations after
the first stage, and in particular $N_i(T)\ge n_0$ for every $i$.

The quantities inside the floors defining $q_\ell$ are at least one.
Using $\lfloor x\rfloor\ge x/2$ for $x\ge1$, we obtain
\begin{equation}
    q_\ell\ge\frac{T}{6Qk_\ell},
    \qquad \ell=1,\ldots,Q.
    \label{eq:htg-proof-stage-count}
\end{equation}
For $\ell=1$, this follows from the stronger inequality
$q_1\ge T/(6K)$.

The halving phase uses at most
$T/3+(Q-1)T/(3Q)<2T/3$ observations. Hence, if $G$ denotes the number
of observations allocated during the final greedy phase, then
\begin{equation}
    G\ge \frac{T}{3}.
    \label{eq:htg-proof-greedy-budget}
\end{equation}

For each arm and deterministic sample count, define
\[
    W_{i,n}
    :=
    \frac{1}{\sigma}\sum_{s=1}^n(X_{i,s}-\mu_i),
    \qquad n\ge1,
\]
and set $W_{i,0}=0$. Under $\mathbb P_\mu$, each $(W_{i,n})_{n\ge0}$
is a standard Gaussian random walk, and these walks are independent
across arms. For $R>0$, let
\[
    \mathcal E_T(R)
    :=
    \bigcap_{i=1}^K\bigcap_{n=1}^T
    \bigl\{|W_{i,n}|\le R\sqrt n\bigr\}.
\]
Since $W_{i,n}/\sqrt n$ is standard normal for each deterministic
pair $(i,n)$, a Gaussian tail bound and a union bound give
\begin{equation}
    \mathbb P_\mu\bigl(\mathcal E_T(R)^c\bigr)
    \le 2KT\exp(-R^2/2).
    \label{eq:htg-proof-concentration}
\end{equation}

Write
\[
    a_i(n)
    :=
    \frac{n\mu_i+\lambda\mu_0}{\sigma\sqrt{n+\lambda}},
    \qquad
    Z_{i,n}
    =
    a_i(n)+\frac{W_{i,n}}{\sqrt{n+\lambda}}.
\]
On $\mathcal E_T(R)$, simultaneously for all $i\in[K]$ and
$1\le n\le T$,
\begin{equation}
    |\overline X_{i,n}-\mu_i|
    \le \frac{\sigma R}{\sqrt n},
    \qquad
    |Z_{i,n}-a_i(n)|\le R.
    \label{eq:htg-proof-score-control}
\end{equation}
Because these inequalities hold at every deterministic count, they also
hold at every adaptive count encountered by HTG. No conditioning on an
adaptive sample count is needed.

\medskip
\noindent
\textbf{Step 2: Terminal failure bounds positive-arm sample counts.}

Fix $R>0$ and work on
$\mathcal E_T(R)\cap\mathrm{Succ}_T^c$ for this step and the next two
steps. By the terminal recommendation rule,
\[
    \mathrm{Succ}_T^c
    =
    \left\{
        \max_{i\in[K]}Z_{i,N_i(T)}<z
    \right\}.
\]
Let $i\in\mathcal P$ and put $n=N_i(T)$. Then $n\ge n_0\ge\lambda$
and $Z_{i,n}<z$. Using $W_{i,n}\ge-R\sqrt n$ gives
\[
    \frac{\mu_i}{\sigma}\sqrt n
    <
    R+z\sqrt{1+\frac{\lambda}{n}}
    -\frac{\lambda\mu_0}{\sigma\sqrt n}.
\]
Since $n\ge\lambda$,
\[
    z\sqrt{1+\frac{\lambda}{n}}\le\sqrt{2}\,z,
    \qquad
    \left|\frac{\lambda\mu_0}{\sigma\sqrt n}\right|\le b_0.
\]
Therefore,
\begin{equation}
    N_i(T)
    \le
    \frac{\sigma^2(R+c)^2}{\mu_i^2},
    \qquad i\in\mathcal P.
    \label{eq:htg-proof-positive-count}
\end{equation}
This implication uses failure at the terminal time. It does not assume
that a certificate obtained earlier remains valid after further
observations.

\medskip
\noindent
\textbf{Step 3: The greedy-phase bound.}

For this step, suppose $H_{\mathrm{thr}}<\infty$, so that no arm has
zero mean. Let $G_i$ denote the number of observations allocated to
arm $i$ during the final greedy phase. Thus,
$G=\sum_{i=1}^K G_i$. For each positive arm,
\begin{equation}
    G_i\le N_i(T)
    \le\frac{\sigma^2(R+c)^2}{\mu_i^2},
    \qquad i\in\mathcal P.
    \label{eq:htg-proof-positive-greedy}
\end{equation}

To bound the greedy selections of negative arms, choose any fixed
$p\in\mathcal P$. For every $n\ge1$,
\[
    a_p(n)
    =
    \frac{n\mu_p+\lambda\mu_0}{\sigma\sqrt{n+\lambda}}
    \ge -b_0.
\]
It follows from~\eqref{eq:htg-proof-score-control} that at every
greedy decision time $t$,
\[
    Z_{p,N_p(t)}\ge-b_0-R.
\]

Suppose that a negative arm $j$ is selected at such a time, and write
$n=N_j(t)$ for its count before the observation. Since the greedy
rule maximizes the posterior $Z$-score over all $K$ arms,
\[
    a_j(n)+R
    \ge Z_{j,n}
    \ge Z_{p,N_p(t)}
    \ge-b_0-R.
\]
Hence $a_j(n)\ge-b_0-2R$. On the other hand, $n\ge n_0\ge\lambda$
and $\mu_j<0$ imply
\[
    a_j(n)
    \le
    -\frac{|\mu_j|}{\sigma}\sqrt{\frac n2}+b_0.
\]
Combining these inequalities yields
\[
    n\le\frac{8\sigma^2(R+b_0)^2}{\mu_j^2}.
\]

Before its $g$-th greedy selection, arm $j$ has at least
$n_0+g-1\ge g$ observations. Applying the preceding bound at its
last greedy selection, and using $b_0\le c$, gives
\begin{equation}
    G_j
    \le
    \frac{8\sigma^2(R+c)^2}{\mu_j^2},
    \qquad j\in\mathcal N.
    \label{eq:htg-proof-negative-greedy}
\end{equation}
The same inequality holds trivially when $G_j=0$.

Summing~\eqref{eq:htg-proof-positive-greedy}
and~\eqref{eq:htg-proof-negative-greedy}, we obtain
\[
    G
    \le
    8H_{\mathrm{thr}}(R+c)^2.
\]
Together with~\eqref{eq:htg-proof-greedy-budget}, this proves
\begin{equation}
\begin{aligned}
    &\mathcal E_T(R)\cap\mathrm{Succ}_T^c\\
    &\qquad\Longrightarrow\quad
    T\le24H_{\mathrm{thr}}(R+c)^2.
\end{aligned}
    \label{eq:htg-proof-greedy-bound}
\end{equation}
When some arm has zero mean, $H_{\mathrm{thr}}=+\infty$, and this
branch is not needed.

\medskip
\noindent
\textbf{Step 4: The halving-phase bound.}

Fix a best arm $i_\star$ with $\mu_{i_\star}=\mu_\star$, assigning it
rank one when ordering arms with equal means. At the end of each
halving stage, all active arms have the same total sample count.
Since the prior parameters are common across arms, their ordering by
posterior $Z$-score is therefore the same as their ordering by empirical
mean.

First suppose that $i_\star$ is removed at some halving stage $\ell$.
Write $s=k_\ell$ for the number of active arms, and let
\[
    k:=\lceil s/2\rceil,
    \qquad
    r:=k+1.
\]
Let $n$ be the common sample count of the active arms at the end of
that stage. By~\eqref{eq:htg-proof-stage-count},
\begin{equation}
    n\ge q_\ell\ge\frac{T}{6Qs}.
    \label{eq:htg-proof-elimination-count}
\end{equation}

The $k$ retained arms all have empirical means at least as large as
that of $i_\star$. Since none of them is $i_\star$, at least one
retained arm $j$ has global rank at least $r$. In particular,
$\mu_j\le\mu_{(r)}$ and
$\overline X_{j,n}\ge\overline X_{i_\star,n}$. By
\eqref{eq:htg-proof-score-control},
\[
    \mu_\star-\mu_{(r)}
    \le
    \mu_\star-\mu_j
    \le
    \frac{2\sigma R}{\sqrt n}.
\]
Consequently,
\[
    n(\mu_\star-\mu_{(r)})^2\le4\sigma^2R^2.
\]
Also, $n\le N_{i_\star}(T)$, so
\eqref{eq:htg-proof-positive-count} gives
\[
    n\mu_\star^2\le\sigma^2(R+c)^2.
\]
Writing
$\gamma_{(r)}:=\max\{\mu_\star,\mu_\star-\mu_{(r)}\}$, we conclude
that
\[
    n\gamma_{(r)}^2\le4\sigma^2(R+c)^2.
\]
Combining this with~\eqref{eq:htg-proof-elimination-count} and
$s\le2r$ yields
\[
\begin{aligned}
    T
    &\le
    24Qs\frac{\sigma^2}{\gamma_{(r)}^2}(R+c)^2\\
    &\le
    48Q\frac{r\sigma^2}{\gamma_{(r)}^2}(R+c)^2\\
    &\le
    48QH_{2,\mathrm{cap}}(R+c)^2.
\end{aligned}
\]

Next suppose that $i_\star$ is never removed. After $Q-1$ halvings
the active set is a singleton, so $i_\star$ receives the final-stage
samples. By~\eqref{eq:htg-proof-stage-count},
\[
    N_{i_\star}(T)\ge q_Q\ge\frac{T}{6Q}.
\]
This also holds when $K=1$, in which case $Q=1$ and the first stage
is already a singleton stage. Applying
\eqref{eq:htg-proof-positive-count} gives
\[
    T
    \le
    6Q\frac{\sigma^2}{\mu_\star^2}(R+c)^2
    \le
    6QH_{2,\mathrm{cap}}(R+c)^2.
\]

Thus, in both cases,
\begin{equation}
\begin{aligned}
    &\mathcal E_T(R)\cap\mathrm{Succ}_T^c\\
    &\qquad\Longrightarrow\quad
    T\le48QH_{2,\mathrm{cap}}(R+c)^2.
\end{aligned}
    \label{eq:htg-proof-halving-bound}
\end{equation}
This argument allows zero means among the competing arms.

\medskip
\noindent
\textbf{Step 5: The positive-arm bound.}

This step does not condition on $\mathcal E_T(R)$. Instead, we use
the deterministic lower bound $N_i(T)\ge n_0$ from
\eqref{eq:htg-proof-initial-count} and an armwise crossing argument.

Fix $i\in\mathcal P$ and write $u_i:=\mu_i/\sigma>0$. For every
$n\ge n_0\ge\lambda$,
\[
\begin{aligned}
    Z_{i,n}<z
    &\Longrightarrow
    W_{i,n}
    <
    z\sqrt{n+\lambda}
    -u_i n-\frac{\lambda\mu_0}{\sigma}\\
    &\Longrightarrow
    W_{i,n}<c\sqrt n-u_i n.
\end{aligned}
\]
Define
\[
    M_{i,n}
    :=
    \exp\left(
        -\frac{u_i}{2}W_{i,n}
        -\frac{u_i^2n}{8}
    \right),
    \qquad n\ge0.
\]
The Gaussian increments of $W_{i,n}$ imply that $(M_{i,n})_{n\ge0}$
is a nonnegative mean-one martingale with respect to the armwise
observation filtration, with $M_{i,0}=1$.

Whenever $n\ge n_0$ and $Z_{i,n}<z$, we have
\[
\begin{aligned}
    \log M_{i,n}
    &>
    \frac{3u_i^2n}{8}-\frac{u_i c\sqrt n}{2}\\
    &\ge
    \frac{u_i^2n}{4}-\frac{c^2}{2}\\
    &\ge
    \frac{u_i^2n_0}{4}-\frac{c^2}{2}.
\end{aligned}
\]
The middle inequality follows from
$(u_i\sqrt n-2c)^2\ge0$.

For completeness, the maximal inequality used here follows by stopping
$M_{i,n}$ at
$\tau_a:=\inf\{n\ge0:M_{i,n}\ge a\}$, where $a>0$.
For every deterministic integer $L\ge0$, bounded stopping and
nonnegativity give
\[
    a\,\mathbb P_\mu(\tau_a\le L)
    \le
    \mathbb E_\mu[M_{i,\tau_a\wedge L}]
    =1.
\]
Letting $L\to\infty$ yields
$\mathbb P_\mu(\tau_a<\infty)\le\min\{1,a^{-1}\}$.
Applying this with
$a=\exp(u_i^2n_0/4-c^2/2)$ gives
\begin{equation}
\begin{aligned}
    &\mathbb P_\mu\bigl(
        \exists n\ge n_0:Z_{i,n}<z
    \bigr)\\
    &\qquad\le
    \min\left\{
        1,\,
        \exp\left(
            \frac{c^2}{2}-\frac{u_i^2n_0}{4}
        \right)
    \right\}.
\end{aligned}
    \label{eq:htg-proof-armwise-crossing}
\end{equation}

For each $i\in\mathcal P$, let
$B_i:=\{\exists n\ge n_0:Z_{i,n}<z\}$.
Since $N_i(T)\ge n_0$, terminal failure implies
\[
    \mathrm{Succ}_T^c
    \subseteq
    \bigcap_{i\in\mathcal P}B_i.
\]
Each event $B_i$ depends only on arm $i$'s potential-observation
sequence. These events are therefore independent across arms, even
though the terminal counts $N_i(T)$ need not be independent.
Using~\eqref{eq:htg-proof-armwise-crossing}, we obtain
\[
\begin{aligned}
    \mathbb P_\mu(\mathrm{Succ}_T^c)
    &\le
    \prod_{i\in\mathcal P}\mathbb P_\mu(B_i)\\
    &\le
    \exp\left(
        \frac{mc^2}{2}
        -\frac{n_0}{4}
        \sum_{i\in\mathcal P}\frac{\mu_i^2}{\sigma^2}
    \right).
\end{aligned}
\]
By the definition of $H_{\mathrm{pos}}$,
\[
    \sum_{i\in\mathcal P}\frac{\mu_i^2}{\sigma^2}
    =
    \frac{K}{H_{\mathrm{pos}}}.
\]
Since $n_0=\lfloor T/(3K)\rfloor\ge T/(3K)-1$, it follows that
\begin{equation}
\begin{aligned}
    &\mathbb P_\mu(\mathrm{Succ}_T^c)\\
    &\qquad\le
    \min\left\{
        1,\,
        \exp\left(
            \frac{mc^2}{2}
            -\frac{T-3K}{12H_{\mathrm{pos}}}
        \right)
    \right\}.
\end{aligned}
    \label{eq:htg-proof-positive-arm-bound}
\end{equation}

\medskip
\noindent
\textbf{Step 6: Combining the three bounds.}

Recall that
$h=\min\{H_{\mathrm{thr}},H_{\mathrm{pos}},QH_{2,\mathrm{cap}}\}$
and that we are considering $T>T_0$.

First suppose that $h=H_{\mathrm{thr}}$ or $h=QH_{2,\mathrm{cap}}$.
In the former case $H_{\mathrm{thr}}<\infty$, so the greedy-phase
argument applies. By~\eqref{eq:htg-proof-greedy-bound}
or~\eqref{eq:htg-proof-halving-bound}, respectively,
\[
    \mathcal E_T(R)\cap\mathrm{Succ}_T^c
    \quad\Longrightarrow\quad
    T\le48h(R+c)^2.
\]
Using $(R+c)^2\le2R^2+2c^2$ and $m\ge1$ gives
\[
    T
    \le96hR^2+96hc^2
    \le96hR^2+T_0.
\]
Therefore, for every radius satisfying
\[
    0<R<\sqrt{\frac{T-T_0}{96h}},
\]
the event $\mathcal E_T(R)\cap\mathrm{Succ}_T^c$ is empty.
By~\eqref{eq:htg-proof-concentration},
\[
    \mathbb P_\mu(\mathrm{Succ}_T^c)
    \le2KT\exp(-R^2/2).
\]
Taking the infimum over these radii yields
\begin{equation}
    \mathbb P_\mu(\mathrm{Succ}_T^c)
    \le
    2KT\exp\left(-\frac{T-T_0}{192h}\right).
    \label{eq:htg-proof-combined-rate}
\end{equation}

It remains to consider $h=H_{\mathrm{pos}}$. Since
$T_{\mathrm{cut}}\ge3K$,
\[
    T_0
    =
    T_{\mathrm{cut}}+96mc^2h
    \ge
    3K+6mc^2h.
\]
Consequently,~\eqref{eq:htg-proof-positive-arm-bound} gives
\[
\begin{aligned}
    \mathbb P_\mu(\mathrm{Succ}_T^c)
    &\le
    \exp\left(
        \frac{mc^2}{2}-\frac{T-3K}{12h}
    \right)\\
    &=
    \exp\left(
        -\frac{T-(3K+6mc^2h)}{12h}
    \right)\\
    &\le
    \exp\left(-\frac{T-T_0}{12h}\right)\\
    &\le
    2KT\exp\left(-\frac{T-T_0}{192h}\right).
\end{aligned}
\]
Thus~\eqref{eq:htg-proof-combined-rate} holds regardless of which
term attains the minimum defining $h$.

Finally, combining this bound with the trivial bound by one and the
case $T\le T_0$ treated at the start, we obtain
\[
    \mathbb P_\mu^{\mathrm{HTG}}(\mathrm{Succ}_T^c)
    \le
    \min\left\{
        1,\,
        2KT\exp\left(
            -\frac{(T-T_0)_+}
            {192H_{\mathrm{HTG}}(\mu)}
        \right)
    \right\},
\]
as claimed.
\end{proof}

\section{An Explicit Sufficient Budget}
\label{app:explicit-budget}

The sufficient-budget condition in~\eqref{eq:htg-sufficient-budget}
contains $T$ inside the logarithm. The following consequence of
Theorem~\ref{thm:htg-main} gives an explicit sufficient budget without
this dependence.

\begin{corollary}[Explicit sufficient budget]
\label{cor:htg-explicit-budget}
Under the assumptions of Theorem~\ref{thm:htg-main}, fix a target failure
probability $\eta\in(0,1)$ and write $h:=H_{\mathrm{HTG}}(\mu)$. Define
\begin{equation}
\begin{aligned}
    L_\eta
    &:=
    \log\left(
        \max\left\{
            e,\frac{768Kh}{\eta}
        \right\}
    \right),\\
    T_\eta
    &:=
    \left\lceil
        \max\left\{
            2T_0,\;768hL_\eta
        \right\}
    \right\rceil,
\end{aligned}
    \label{eq:htg-explicit-budget}
\end{equation}
where $T_0$ is defined in~\eqref{eq:T0} and $\log$ denotes the natural
logarithm. Then, for every integer $T\ge T_\eta$,
\[
    \mathbb P_\mu^{\mathrm{HTG}}
    \bigl(\mathrm{Succ}_T^c\bigr)
    \le \eta.
\]
\end{corollary}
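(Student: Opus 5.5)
The plan is to apply Theorem~\ref{thm:htg-main} directly. For $T\ge T_\eta$ the right-hand side of~\eqref{eq:htg-main-bound} should be at most $\eta$, which holds if
\[
    \frac{(T-T_0)_+}{192h}\ge\log\frac{2KT}{\eta}.
\]
Since $T\ge T_\eta\ge 2T_0$, we have $T-T_0\ge T/2$. It therefore suffices to show
\[
    \frac{T}{384h}\ge\log\frac{2KT}{\eta}
    =\log\frac{768Kh}{\eta}+\log\frac{T}{384h}.
\]
We introduce the rescaled variable $x:=T/(384h)$ and set $A:=768Kh/\eta$, so that $L_\eta=\log\max\{e,A\}\ge1$. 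The target inequality then becomes $x\ge\log A+\log x$. The assumption $T\ge 768hL_\eta$ gives $x\ge 2L_\eta$.

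The key elementary step is to show that $x\ge 2L$ with $L\ge1$ implies $x-\log x\ge L$. The function $x\mapsto x-\log x$ is increasing for $x\ge1$, so it suffices to check the point $x=2L$. There we need $2L-\log(2L)\ge L$, that is, $L\ge\log(2L)$. This holds for all $L\ge1$, since $e^{L}\ge 2L$ there; the minimum of $e^L-2L$ occurs at $L=\log 2$ with value $2-2\log2>0$.

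With this step in hand, $x-\log x\ge L_\eta\ge\log A$, which is exactly the required inequality. Hence $2KT\exp(-(T-T_0)/(192h))\le\eta$. The minimum with $1$ in~\eqref{eq:htg-main-bound} only helps, so the bound still holds after taking it.

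The main obstacle is keeping the constants aligned: $2K\cdot384=768K$ must match the definition of $L_\eta$. The factor $2$ lost from $T-T_0\ge T/2$ is absorbed into $384=2\cdot192$, and the remaining factor $2$ in $768hL_\eta=384h\cdot 2L_\eta$ is the slack needed for the $\log x$ term. We must also use $e$ in the max so that $L_\eta\ge 1$ even when $A<e$, and note that $T_0>0$ ensures $T>T_0$, so the positive part is inactive.
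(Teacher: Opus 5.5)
Your proof is correct and essentially the paper's own argument: you reduce to $2KTe^{-x}\le\eta$ with $x=T/(384h)$ via $T-T_0\ge T/2$, then use $x\ge 2L_\eta\ge 2$ to absorb the $\log x$ term. The only differences are cosmetic: the paper uses $\log x\le x/2$ for $x\ge 2$ together with $L_\eta\le x/2$, where you use monotonicity of $x-\log x$ and $e^L\ge 2L$; and you should state explicitly that $T\ge 2T_0\ge 2T_{\mathrm{cut}}\ge K$, so that the theorem applies.
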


\begin{proof}
Since $\mu_\star>0$, we have $0<h<\infty$. Moreover,
$T_0\ge T_{\mathrm{cut}}\ge3K$, so $T\ge T_\eta$ implies
$T\ge2T_0\ge K$. In particular, $T-T_0\ge T/2$, and
Theorem~\ref{thm:htg-main} gives
\begin{equation}
    \mathbb P_\mu^{\mathrm{HTG}}
    \bigl(\mathrm{Succ}_T^c\bigr)
    \le
    2KT\exp\left(-\frac{T}{384h}\right).
    \label{eq:htg-explicit-budget-tail}
\end{equation}

Set $x:=T/(384h)$. The second condition in the definition of $T_\eta$
ensures that
\[
    x\ge2L_\eta\ge2.
\]
For $x\ge2$, the elementary inequality $\log x\le x/2$ holds.
Also, by the definition of $L_\eta$,
$\log(768Kh/\eta)\le L_\eta$. Consequently,
\[
\begin{aligned}
    \log\frac{2KT}{\eta}
    &=
    \log\frac{768Kh}{\eta}+\log x\\
    &\le
    L_\eta+\frac{x}{2}\\
    &\le x.
\end{aligned}
\]
It follows that
\[
    2KT\exp\left(-\frac{T}{384h}\right)
    =
    2KT e^{-x}
    \le\eta.
\]
Combining this inequality with~\eqref{eq:htg-explicit-budget-tail}
proves the claim.
\end{proof}

\section{Illustrative Instance Calculations}
\label{app:htg-examples}

We give the calculations underlying the examples in
Section~\ref{sec:htg-interpretation}, followed by an additional example
illustrating the effect of a competing arm whose mean is close to zero.
Throughout this appendix, the best mean is $a>0$, and we write
$h:=\sigma^2/a^2$. We suppress the instance argument $\mu$ in the
complexities when there is no ambiguity.

For convenience, define
\[
    H_{\mathrm{thr}}
    :=
    \sum_{i=1}^K\frac{\sigma^2}{\mu_i^2},
    \qquad
    H_{2,\mathrm{cap}}
    :=
    \max_{1\le r\le K}
    \frac{r\sigma^2}
    {\max\{\mu_\star,\mu_\star-\mu_{(r)}\}^2}.
\]
Thus,
\[
    H_{\mathrm{HTG}}
    =
    \min\{H_{\mathrm{thr}},H_{\mathrm{pos}},
    QH_{2,\mathrm{cap}}\},
\]
where $Q=1+\lceil\log_2K\rceil$ and
$\sigma^2/0^2:=+\infty$.

\subsection{One positive arm with other arms strongly negative}
\label{app:htg-example-one-positive}

Let $K\ge2$ and consider
$\mu_1=a>0$ and $\mu_2=\cdots=\mu_K=-b$, where
$b\ge a\sqrt K$. Directly from the definitions,
\[
    H_{\mathrm{cap}}
    =
    h+\frac{(K-1)\sigma^2}{(a+b)^2},
    \qquad
    H_{\mathrm{pos}}=Kh.
\]
Every summand of $H_{\mathrm{cap}}$ is at most $h$, so
$H_{\mathrm{cap}}\le Kh=H_{\mathrm{pos}}$. Therefore,
\begin{equation}
    H_{\mathrm{cert}}
    =
    H_{\mathrm{cap}}
    =
    h+\frac{(K-1)\sigma^2}{(a+b)^2}.
    \label{eq:example-one-positive-cert}
\end{equation}

To evaluate $H_{\mathrm{HTG}}$, first note that
\begin{equation}
    H_{\mathrm{thr}}
    =
    h+\frac{(K-1)\sigma^2}{b^2}
    \le
    \left(2-\frac1K\right)h
    <2h.
    \label{eq:example-one-positive-thr}
\end{equation}
The inequality uses $b^2\ge Ka^2$.

The best arm contributes $h$ to the rank-weighted maximum defining
$H_{2,\mathrm{cap}}$. All other arms have the same denominator
$(a+b)^2$, so their largest rank-weighted contribution occurs at
rank $K$. Hence
\[
    H_{2,\mathrm{cap}}
    =
    \max\left\{
        h,\frac{K\sigma^2}{(a+b)^2}
    \right\}
    =
    h,
\]
where the last equality follows from $(a+b)^2\ge Ka^2$.

Since $K\ge2$ and $Q\ge2$, we have
$H_{\mathrm{pos}}=Kh\ge2h$ and
$QH_{2,\mathrm{cap}}=Qh\ge2h$. Combining this with
\eqref{eq:example-one-positive-thr} shows that the first term attains
the minimum defining $H_{\mathrm{HTG}}$:
\begin{equation}
    H_{\mathrm{HTG}}
    =
    H_{\mathrm{thr}}
    =
    h+\frac{(K-1)\sigma^2}{b^2}.
    \label{eq:example-one-positive-htg}
\end{equation}
In particular,
\begin{equation}
    h
    \le H_{\mathrm{cert}}
    \le H_{\mathrm{HTG}}
    <2h,
    \qquad
    H_{\mathrm{pos}}=Kh.
    \label{eq:example-one-positive-comparison}
\end{equation}
Thus both $H_{\mathrm{cert}}$ and $H_{\mathrm{HTG}}$ are within a
factor of two of $\sigma^2/a^2$, while
\[
    \frac K2
    \le
    \frac{H_{\mathrm{pos}}}{H_{\mathrm{HTG}}}
    \le K.
\]

This calculation also identifies the role of the final posterior-greedy
phase. Its analysis supplies the term $H_{\mathrm{thr}}<2h$, whereas
the term supplied by the halving analysis is $Qh$. Including the
posterior-greedy phase therefore avoids the additional factor $Q$
in the halving term on this family.

\subsection{Many positive arms}
\label{app:htg-example-many-positive}

Suppose $\mu_1=\cdots=\mu_K=a>0$. Every best-arm gap is zero, so
each denominator in $H_{\mathrm{cap}}$ equals $a^2$. Also,
$\sum_{i=1}^K(\mu_i)_+^2=Ka^2$. Consequently,
\[
    H_{\mathrm{cap}}=Kh,
    \qquad
    H_{\mathrm{pos}}=h,
    \qquad
    H_{\mathrm{cert}}=h.
\]
The other two terms in $H_{\mathrm{HTG}}$ satisfy
\[
    H_{\mathrm{thr}}=Kh,
    \qquad
    H_{2,\mathrm{cap}}
    =
    \max_{1\le r\le K}rh
    =
    Kh.
\]
It follows that
\begin{equation}
    H_{\mathrm{HTG}}
    =
    \min\{Kh,h,QKh\}
    =
    h
    =
    \frac{\sigma^2}{a^2}.
    \label{eq:example-many-positive-htg}
\end{equation}
Thus the positive-arm term makes $H_{\mathrm{HTG}}$ a factor $K$
smaller than $H_{\mathrm{cap}}$ on this family.

This comparison concerns the complexity in the denominator of the
exponential bound, rather than the entire sufficient budget. Here,
$m=K$, so the offset in~\eqref{eq:T0} is
\begin{equation}
    T_0
    =
    3K\max\{n_\lambda,Q\}
    +
    96Kc^2h.
    \label{eq:example-many-positive-offset}
\end{equation}
Although $H_{\mathrm{HTG}}=h$ does not depend on $K$ when $a$ and
$\sigma$ are fixed, the offset does. Hence this example does not give
a sufficient budget independent of $K$.

\subsection{A competing arm with mean close to zero}
\label{app:htg-example-near-zero}

Let $K\ge3$ and consider
\[
    \mu_1=a,\qquad
    \mu_2=-\varepsilon,\qquad
    \mu_3=\cdots=\mu_K=-b,
\]
where $0\le\varepsilon\le a/\sqrt K$ and $b\ge a\sqrt K$.
There is exactly one positive arm, so $H_{\mathrm{pos}}=Kh$ and
$H_{\mathrm{cert}}=H_{\mathrm{cap}}$. Moreover,
\begin{equation}
\begin{aligned}
    H_{\mathrm{cap}}
    &=
    h+\frac{\sigma^2}{(a+\varepsilon)^2}
      +\frac{(K-2)\sigma^2}{(a+b)^2}\\
    &\le
    h+h+\frac{K-2}{K}h
    <3h.
\end{aligned}
    \label{eq:example-near-zero-cap}
\end{equation}
Thus $H_{\mathrm{cert}}$ remains bounded as $\varepsilon$ approaches
zero.

In contrast,
\[
    H_{\mathrm{thr}}
    =
    h+\frac{\sigma^2}{\varepsilon^2}
      +\frac{(K-2)\sigma^2}{b^2}
\]
diverges as $\varepsilon\downarrow0$ and equals $+\infty$ when
$\varepsilon=0$. The assumption $\varepsilon\le a/\sqrt K$ gives
$H_{\mathrm{thr}}\ge Kh$, including at $\varepsilon=0$ under our
convention. Therefore,
\begin{equation}
    \min\{H_{\mathrm{thr}},H_{\mathrm{pos}}\}=Kh.
    \label{eq:example-near-zero-two-terms}
\end{equation}

The halving term gives a smaller bound when $K$ is sufficiently large.
Since the means are already ordered,
\[
\begin{aligned}
    H_{2,\mathrm{cap}}
    &=
    \max\left\{
        h,\frac{2\sigma^2}{(a+\varepsilon)^2},
        \frac{K\sigma^2}{(a+b)^2}
    \right\}\\
    &=
    \max\left\{
        h,\frac{2\sigma^2}{(a+\varepsilon)^2}
    \right\}.
\end{aligned}
\]
The second equality uses $K\sigma^2/(a+b)^2\le h$.
In particular, $h\le H_{2,\mathrm{cap}}\le2h$. Combining this
with~\eqref{eq:example-near-zero-two-terms} gives
\begin{equation}
    H_{\mathrm{HTG}}
    =
    \min\{Kh,QH_{2,\mathrm{cap}}\}
    \le
    \min\{K,2Q\}h.
    \label{eq:example-near-zero-htg}
\end{equation}
At $\varepsilon=0$, the rank-weighted maximum is exactly $2h$,
and hence
\[
    H_{\mathrm{HTG}}
    =
    \min\{K,2Q\}h.
\]

This example illustrates why the halving term is useful in addition to
the first two terms in~\eqref{eq:H-HTG}. A single negative mean close
to zero makes $H_{\mathrm{thr}}$ arbitrarily large, while
$H_{\mathrm{pos}}$ remains $Kh$. The halving term instead gives
$H_{\mathrm{HTG}}\le2Qh$, which remains bounded as
$\varepsilon\downarrow0$. Since $m=1$, its offset also satisfies
\[
    T_0
    \le
    T_{\mathrm{cut}}+192Qc^2h.
\]
Thus neither the complexity nor the offset in the HTG guarantee
diverges when this competing mean approaches zero.

\section{Proof of the lower bound}
\label{app:lower-bounds}

We prove Theorem~\ref{thm:one-positive-lower-bound} and its
sample-complexity consequence
\eqref{eq:one-positive-sample-lower-bound}.
Throughout this appendix, we consider the algorithms described in
Section~\ref{sec:one-positive-families}. Each algorithm collects exactly
$T$ observations, samples every arm at least once, and uses the
prescribed posterior and terminal recommendation. The algorithm may
be randomized and may depend on $T$.

At each round, the algorithm selects an arm using only previously
collected observations and its internal randomization. The same
sampling rule is used under the original and auxiliary instances.

Recall that
\[
    \lambda=\frac{\sigma^2}{\sigma_0^2},
    \qquad
    z=\Phi^{-1}(1-\delta)>0.
\]

\subsection{Finite-budget lower bounds}
\label{app:finite-budget-lower-bounds}

Define
\[
    b_{\mathrm{pr}}
    :=
    z\sqrt{1+\lambda}
    -
    \frac{\lambda\mu_0}{\sigma},
\]
and, for $x>0$, let
\[
    p(x)
    :=
    \Phi\left(b_{\mathrm{pr}}-1+\frac{x}{\sigma}\right)
    \left(1-\exp\left(-\frac{2x}{\sigma}\right)\right).
\]
Since $p(x)\in(0,1)$ for every $x>0$, the constants
\[
    C_0
    :=
    \sum_{i=1}^{K}\log\frac{1}{p(\varepsilon_i)}
\]
and
\[
    C_{a,\beta}
    :=
    \log\frac{1}{p(\beta)}
    +
    \sum_{i\ne a}\log\frac{1}{p(\varepsilon_i)},
    \qquad a\in[K],\ \beta>0,
\]
are finite. They depend on the fixed parameters, but not on $T$ or
the algorithm. We do not require $C_{a,\beta}$ to remain bounded as
$\beta\downarrow0$.

\begin{proposition}[Finite-budget search and certification bounds]
\label{prop:finite-budget-lower-bounds}
For every algorithm $\mathcal A$ described in
Section~\ref{sec:one-positive-families} and every integer $T\ge K$,
\begin{equation}
    \max_{a\in[K]}
    \mathbb P_{\mu^{(a)}}^{\mathcal A}
    \bigl(\mathrm{Succ}_T^c\bigr)
    \ge
    \exp\left[
        -\left(
            \sqrt{\frac{T}{2S}}+\sqrt{C_0}
        \right)^2
    \right].
    \label{eq:app-finite-search-bound}
\end{equation}
Moreover, for every $a\in[K]$ and every $\beta>0$,
\begin{equation}
    \mathbb P_{\mu^{(a)}}^{\mathcal A}
    \bigl(\mathrm{Succ}_T^c\bigr)
    \ge
    \exp\left[
        -\left(
            \frac{\Delta+\beta}{\sigma}\sqrt{\frac{T}{2}}
            +
            \sqrt{C_{a,\beta}}
        \right)^2
    \right].
    \label{eq:app-finite-certification-bound}
\end{equation}
\end{proposition}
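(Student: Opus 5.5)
The plan is a change of measure from an all-negative auxiliary instance, carried out conditionally on a fixed ``never-certified'' event $E$. For each arm I would use the same event, defined purely in terms of the observation sequences:
\[
E:=\bigcap_{i=1}^K\bigcap_{n\ge1}\{Z_{i,n}<z\}.
\]
Since every admissible algorithm samples every arm at least once, \eqref{eq:success-zmax} gives $E\subseteq\mathrm{Succ}_T^c$ under \emph{any} instance and for any algorithm. This is the key point: $E$ is a data event, so it certifies failure under $\mu^{(a)}$ as well as under the auxiliary instance $\nu$. It therefore suffices to lower bound $\mathbb P_{\mu^{(a)}}(E)$.

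\textbf{Step 1: lower bound $\mathbb P_\nu(E)$ under an all-negative $\nu$.}
The event $E$ factorizes over the independent arm sequences. For arm $i$ with mean $-x<0$, I would show
\[
\mathbb P(\forall n\ge1:\ Z_{i,n}<z)\ge p(x).
\]
Since $n\ge1$ and $z>0$, the boundary $z\sigma\sqrt{n+\lambda}-\lambda\mu_0$ is at least $\sigma b_{\mathrm{pr}}$. It therefore suffices that $\sum_{s\le n}X_{i,s}<\sigma b_{\mathrm{pr}}$ for all $n$, which I would secure in two pieces.
\begin{itemize}
\item The first sample satisfies $X_{i,1}<\sigma(b_{\mathrm{pr}}-1)$. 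This has probability $\Phi(b_{\mathrm{pr}}-1+x/\sigma)$.
\item The walk $\sum_{s=2}^n X_{i,s}$ has drift $-x$ and never exceeds $\sigma$. The standard exponential-martingale maximal inequality $\mathbb P(\sup_n S_n\ge y)\le e^{-2xy/\sigma^2}$, taken with $y=\sigma$, gives probability at least $1-e^{-2x/\sigma}$.
\end{itemize}
These two pieces are independent. Taking the product over arms gives $\mathbb P_\nu(E)\ge e^{-C_0}$ for $\nu=(-\varepsilon_i)_i$, and $\mathbb P_\nu(E)\ge e^{-C_{a,\beta}}$ for $\nu$ equal to $\mu^{(a)}$ with arm $a$ moved to $-\beta$.

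\textbf{Step 2: the conditional change-of-measure inequality.}
Let $Q:=\mathbb P_\nu(\cdot\mid E)$ on the algorithm's history, and let $\nu,\mu$ differ only in arm $a$, with mean gap $g$. By Jensen,
\[
\log\mathbb P_\mu(E)\ge -\mathrm{KL}(Q\,\|\,\mathbb P_\mu)=-\log\tfrac{1}{\mathbb P_\nu(E)}-\mathbb E_Q[L],
\]
where $L=\log d\mathbb P_\nu/d\mathbb P_\mu$ on the history. For Gaussians, $L=-M+\tfrac{g^2}{2\sigma^2}N_a(T)$, where $M$ is a martingale in the arm-$a$ noise with $\langle M\rangle_T=\tfrac{g^2}{\sigma^2}N_a(T)$. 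I expect this martingale term to be the main obstacle, because conditioning on $E$ destroys the zero-mean property.

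I would handle it with Donsker--Varadhan applied to $X=t(-M)-\tfrac{t^2}{2}\langle M\rangle$, which satisfies $\mathbb E_\nu e^{X}\le1$ by the supermartingale property. This gives
\[
\mathbb E_Q[-M]\le \frac{C}{t}+\frac{t}{2}\mathbb E_Q\langle M\rangle,
\qquad C:=\log\tfrac{1}{\mathbb P_\nu(E)}.
\]
Set $d:=\tfrac{g^2}{2\sigma^2}\mathbb E_Q[N_a(T)]$ and optimize over $t$. This yields $\mathbb E_Q[L]\le d+2\sqrt{Cd}$. Combining with the Jensen bound,
\[
\mathbb P_\mu(\mathrm{Succ}_T^c)\ge\mathbb P_\mu(E)\ge\exp\bigl[-(\sqrt d+\sqrt C)^2\bigr].
\]

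\textbf{Step 3: choosing $a$.}
For \eqref{eq:app-finite-search-bound}, take $\nu=(-\varepsilon_i)_i$. Since $\sum_a\mathbb E_Q[N_a(T)]=T$, weighted averaging with weights $\sigma^2/(\Delta+\varepsilon_a)^2$ yields some $a$ with
\[
\mathbb E_Q[N_a(T)]\,\frac{(\Delta+\varepsilon_a)^2}{\sigma^2}\le\frac{T}{S}.
\]
Then $g=\Delta+\varepsilon_a$ gives $d\le T/(2S)$ and $C\le C_0$.

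For \eqref{eq:app-finite-certification-bound}, fix $a$ and $\beta$, and take $g=\Delta+\beta$. The trivial bound $N_a(T)\le T$ gives $d\le(\Delta+\beta)^2T/(2\sigma^2)$, and $C\le C_{a,\beta}$.
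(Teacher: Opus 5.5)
The reduction you call the key point does not work: $\mathbb P_{\mu^{(a)}}(E)=0$. Under $\mu^{(a)}$, arm $a$ has drift $\Delta>0$. By the strong law, $\sum_{s\le n}X_{a,s}\approx n\Delta$ eventually exceeds $z\sigma\sqrt{n+\lambda}-\lambda\mu_0$ almost surely, so $Z_{a,n}\ge z$ for some $n$ with probability one. Your final claim $\mathbb P_\mu(\mathrm{Succ}_T^c)\ge\mathbb P_\mu(E)\ge\exp[-(\sqrt d+\sqrt C)^2]$ is therefore false, and no argument can lower bound the failure probability through $\mathbb P_{\mu^{(a)}}(E)$.

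The error is in the step $\log\mathbb P_\mu(E)\ge-\operatorname{KL}(Q\Vert\mathbb P_\mu)$, which mixes two probability spaces.
\begin{itemize}
\item Your likelihood ratio $L$, with only $N_a(T)$ Gaussian terms, lives on the observed history $\mathcal F_T$. But $E$ involves the entire infinite observation sequences, so it is not history-measurable.
\item On the full sequence space, where $E$ is measurable, $\mathbb P_\nu$ and $\mathbb P_\mu$ are mutually singular. There $\operatorname{KL}(Q\Vert\mathbb P_\mu)=\infty$, and your identity for it with the history ratio $L$ fails.
\end{itemize}

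The repair is what the paper does. Use $E$ only to construct $Q=\mathbb P_\nu(\cdot\mid E)$, viewed as a law on the history. Then apply the event inequality to the history-measurable event $\mathrm{Succ}_T^c$, which satisfies $Q(\mathrm{Succ}_T^c)=1$ because $E\subseteq\mathrm{Succ}_T^c$. This gives $\log\bigl(1/P^T_\mu(\mathrm{Succ}_T^c)\bigr)\le\operatorname{KL}(Q\Vert P^T_\mu)$, which is legitimate because $P^T_\nu$ and $P^T_\mu$ are mutually absolutely continuous on the finite history.

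Two smaller corrections are also needed.
\begin{itemize}
\item Since $E$ is not history-measurable, $\operatorname{KL}(Q\Vert P^T_\nu)$ does not equal $\log(1/\mathbb P_\nu(E))$. It is only bounded by it, via $dQ/dP^T_\nu=\mathbb P_\nu(E\mid\mathcal F_T)/\mathbb P_\nu(E)\le 1/\mathbb P_\nu(E)$. This is the direction you need.
\item The same density bound gives $\mathbb E_Q[M^2]\le T/\mathbb P_\nu(E)<\infty$, which justifies the Donsker--Varadhan step.
\end{itemize}

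With these changes, your Step~1, the martingale control, and the weighted averaging in Step~3 coincide with the paper's proof.
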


\begin{proof}
Fix an algorithm $\mathcal A$ and a budget $T\ge K$.
We represent the observations by independent infinite sequences
$(X_{i,n})_{n\ge1}$, one for each arm. Sampling arm $i$ reveals the
next unused observation in its sequence. The algorithm's internal
randomization is represented by a random seed independent of these
sequences.

We write $\mathbb P_\nu$ for the probability measure under mean
vector $\nu$, and $P_\nu^T$ for the distribution of the observed
history up to time $T$, including the algorithm's internal
randomization. We suppress the dependence on $\mathcal A$ in this
notation.

\paragraph{Step 1: An event that guarantees posterior failure.}

Consider an auxiliary instance
\[
    \nu=(-v_1,\ldots,-v_K),
    \qquad v_i>0.
\]
For each arm $i$, define
\[
    Y_{i,m}
    :=
    \sum_{s=2}^{m+1}\frac{X_{i,s}}{\sigma},
    \qquad m\ge0,
\]
where $Y_{i,0}=0$.
Under $\mathbb P_\nu$, the increments of $Y_{i,m}$ are independent
Gaussian variables with mean $-v_i/\sigma$ and variance one.
Therefore,
\[
    \left(
        \exp\left(\frac{2v_i}{\sigma}Y_{i,m}\right)
    \right)_{m\ge0}
\]
is a nonnegative martingale with initial value one.
The maximal inequality for nonnegative martingales gives
\[
    \mathbb P_\nu\left(
        \exists m\ge0:\ Y_{i,m}\ge1
    \right)
    \le
    \exp\left(-\frac{2v_i}{\sigma}\right).
\]
To obtain the bound over all $m\ge0$, apply the inequality over
each finite range and then take an increasing union.

Define
\[
    E
    :=
    \bigcap_{i=1}^{K}
    \left(
        \left\{
            \frac{X_{i,1}}{\sigma}\le b_{\mathrm{pr}}-1
        \right\}
        \cap
        \bigcap_{m\ge0}\{Y_{i,m}<1\}
    \right).
\]
The first observation of each arm is independent of its subsequent
observations, and
\[
    \mathbb P_\nu\left(
        \frac{X_{i,1}}{\sigma}\le b_{\mathrm{pr}}-1
    \right)
    =
    \Phi\left(b_{\mathrm{pr}}-1+\frac{v_i}{\sigma}\right).
\]
Combining the preceding bounds and using independence across arms,
we obtain
\[
    \mathbb P_\nu(E)
    \ge
    \prod_{i=1}^{K}p(v_i)
    >0.
\]
The event $E$ is defined using the full observation sequences,
without reference to the algorithm or the budget. Its probability
therefore does not depend on either.

On $E$, for every arm $i$ and every $n\ge1$,
\[
    \sum_{s=1}^{n}\frac{X_{i,s}}{\sigma}
    =
    \frac{X_{i,1}}{\sigma}+Y_{i,n-1}
    <
    b_{\mathrm{pr}}
    \le
    z\sqrt{n+\lambda}-\frac{\lambda\mu_0}{\sigma}.
\]
The last inequality follows from $z>0$ and $n\ge1$.
Hence $Z_{i,n}<z$ for every $i$ and every $n\ge1$.
Since the algorithm samples every arm at least once,
\eqref{eq:success-zmax} implies that posterior failure occurs on $E$,
regardless of how the algorithm allocates its observations.

We now consider the distribution of the observed history under
$\nu$, conditional on $E$. Specifically, for any event $B$ determined
by the observed history, define
\[
    \mathsf Q_T(B)
    :=
    \mathbb P_\nu(B\mid E)
    =
    \frac{\mathbb P_\nu(B\cap E)}{\mathbb P_\nu(E)}.
\]
Thus, $\mathsf Q_T$ describes the observed history when we restrict
attention to realizations in which $E$ occurs. The sampling rule
remains unchanged, and the algorithm is not given any information
about $E$. Although $E$ may involve observations that the algorithm
never collects, this conditional distribution is well defined
because $E$ and the observed history are defined on the same
probability space.

Under this conditional distribution,
\[
    \mathsf Q_T(\mathrm{Succ}_T^c)=1.
\]
Moreover,
\[
    \frac{d\mathsf Q_T}{dP_\nu^T}
    =
    \frac{\mathbb P_\nu(E\mid\mathcal F_T)}
         {\mathbb P_\nu(E)}
    \le
    \frac{1}{\mathbb P_\nu(E)}.
\]
Writing $\operatorname{KL}$ for relative entropy, it follows that
\begin{equation}
    D_T
    :=
    \operatorname{KL}(\mathsf Q_T\Vert P_\nu^T)
    \le
    \log\frac{1}{\mathbb P_\nu(E)}
    \le
    \sum_{i=1}^{K}\log\frac{1}{p(v_i)}.
    \label{eq:app-conditioning-entropy}
\end{equation}
Although $\mathsf Q_T$ depends on the algorithm and the budget, the
upper bound in~\eqref{eq:app-conditioning-entropy} does not.

\paragraph{Step 2: Changing the mean of one arm.}

Suppose that $\nu'$ differs from $\nu$ only at arm $a$, with
\[
    \nu'_a-\nu_a=g\sigma,
    \qquad g>0.
\]
Define
\[
    M_a(t)
    :=
    \frac{1}{\sigma}
    \sum_{s=1}^{N_a(t)}(X_{a,s}-\nu_a),
    \qquad
    n_a:=\mathbb E_{\mathsf Q_T}[N_a(T)].
\]
Because the same algorithm is used under $\nu$ and $\nu'$, the
sampling probabilities cancel in the likelihood ratio. The
Gaussian observation densities therefore give
\begin{equation}
    \log\frac{dP_\nu^T}{dP_{\nu'}^T}
    =
    \frac{g^2}{2}N_a(T)-gM_a(T).
    \label{eq:app-gaussian-history-likelihood}
\end{equation}

To control the expectation of $M_a(T)$ under $\mathsf Q_T$, first
work under the original probability measure $\mathbb P_\nu$.
For every $\theta>0$, the process
\[
    \left(
        \exp\left(
            -\theta M_a(t)
            -\frac{\theta^2}{2}N_a(t)
        \right)
    \right)_{t=0}^{T}
\]
is a mean-one martingale. Indeed, when arm $a$ is selected, its
next centered, standardized observation is conditionally standard
Gaussian; otherwise, the process does not change. Consequently,
\[
    \mathbb E_{P_\nu^T}\left[
        \exp\left(
            -\theta M_a(T)
            -\frac{\theta^2}{2}N_a(T)
        \right)
    \right]
    =1.
\]
This identity is used under $P_\nu^T$, not under $\mathsf Q_T$.
After conditioning on $E$, the observations need not remain
independent or Gaussian.

The expectations used below are finite. Indeed, under
$\mathbb P_\nu$, the processes $M_a(t)$ and
$M_a(t)^2-N_a(t)$ are martingales, so
\[
    \mathbb E_{P_\nu^T}[M_a(T)^2]
    =
    \mathbb E_{P_\nu^T}[N_a(T)]
    \le T.
\]
Together with the density bound from Step 1, this gives
\[
    \mathbb E_{\mathsf Q_T}[M_a(T)^2]
    \le
    \frac{T}{\mathbb P_\nu(E)}
    <\infty.
\]
Since $N_a(T)\le T$, all expectations in the following entropy
calculations are therefore finite.

The entropy inequality gives
\[
    \mathbb E_{\mathsf Q_T}[W]
    \le
    D_T+\log\mathbb E_{P_\nu^T}[e^W].
\]
For completeness, this follows from nonnegativity of relative
entropy with respect to the probability measure having density
$e^W/\mathbb E_{P_\nu^T}[e^W]$ relative to $P_\nu^T$.
Applying the inequality with
\[
    W
    =
    -\theta M_a(T)-\frac{\theta^2}{2}N_a(T)
\]
yields
\[
    -\mathbb E_{\mathsf Q_T}[M_a(T)]
    \le
    \frac{D_T}{\theta}+\frac{\theta n_a}{2}.
\]
Optimizing over $\theta>0$, we obtain
\[
    -\mathbb E_{\mathsf Q_T}[M_a(T)]
    \le
    \sqrt{2n_aD_T}.
\]
When $D_T=0$, the same conclusion follows by letting
$\theta\downarrow0$.

Combining this bound with
\eqref{eq:app-gaussian-history-likelihood}, we have
\begin{align*}
    \operatorname{KL}(\mathsf Q_T\Vert P_{\nu'}^T)
    &=
    D_T+
    \mathbb E_{\mathsf Q_T}\left[
        \log\frac{dP_\nu^T}{dP_{\nu'}^T}
    \right]\\
    &=
    D_T+\frac{g^2n_a}{2}
    -g\mathbb E_{\mathsf Q_T}[M_a(T)]\\
    &\le
    D_T+\frac{g^2n_a}{2}+g\sqrt{2n_aD_T}\\
    &=
    \left(
        g\sqrt{\frac{n_a}{2}}+\sqrt{D_T}
    \right)^2.
\end{align*}

On the other hand, $\mathsf Q_T(\mathrm{Succ}_T^c)=1$ implies
\[
    \operatorname{KL}(\mathsf Q_T\Vert P_{\nu'}^T)
    \ge
    \log\frac{1}{P_{\nu'}^T(\mathrm{Succ}_T^c)}.
\]
To see this, note that the finite-history distributions $P_\nu^T$
and $P_{\nu'}^T$ are mutually absolutely continuous, so
$P_{\nu'}^T(\mathrm{Succ}_T^c)>0$. We can therefore write
\begin{align*}
    \operatorname{KL}(\mathsf Q_T\Vert P_{\nu'}^T)
    &=
    \operatorname{KL}\left(
        \mathsf Q_T
        \,\middle\Vert\,
        P_{\nu'}^T(\,\cdot\,\mid\mathrm{Succ}_T^c)
    \right)\\
    &\quad+
    \log\frac{1}{P_{\nu'}^T(\mathrm{Succ}_T^c)},
\end{align*}
and the first term is nonnegative.

Combining these inequalities gives
\begin{equation}
    \mathbb P_{\nu'}^{\mathcal A}(\mathrm{Succ}_T^c)
    \ge
    \exp\left[
        -\left(
            g\sqrt{\frac{n_a}{2}}+\sqrt{D_T}
        \right)^2
    \right].
    \label{eq:app-one-arm-change-of-measure}
\end{equation}

\paragraph{Step 3: The search bound.}

Take
\[
    \nu=(-\varepsilon_1,\ldots,-\varepsilon_K).
\]
By~\eqref{eq:app-conditioning-entropy}, the distribution
$\mathsf Q_T$ constructed in Step 1 satisfies $D_T\le C_0$.
For each $i\in[K]$, let
\[
    n_i
    :=
    \mathbb E_{\mathsf Q_T}[N_i(T)]
    =
    \mathbb E_\nu[N_i(T)\mid E],
    \qquad
    w_i:=\frac{\sigma^2}{(\Delta+\varepsilon_i)^2}.
\]
The algorithm collects exactly $T$ observations on every
realization, including those in $E$. Hence
\[
    \sum_{i=1}^{K}n_i
    =
    \mathbb E_\nu\left[
        \sum_{i=1}^{K}N_i(T)
        \,\middle|\,
        E
    \right]
    =T.
\]
Since $\sum_iw_i=S$, we obtain
\[
    \sum_{i=1}^{K}
    \frac{w_i}{S}\frac{n_i}{w_i}
    =
    \frac{T}{S}.
\]
Therefore, at least one location $a\in[K]$ satisfies
\[
    \frac{n_a}{w_a}\le\frac{T}{S}.
\]
This location may depend on $\mathcal A$ and $T$.

Passing from $\nu$ to $\mu^{(a)}$ changes only the mean of arm $a$,
from $-\varepsilon_a$ to $\Delta$. Its standardized mean difference
is
\[
    g_a
    =
    \frac{\Delta+\varepsilon_a}{\sigma}
    =
    \frac{1}{\sqrt{w_a}}.
\]
Applying~\eqref{eq:app-one-arm-change-of-measure}, we obtain
\begin{align*}
    \mathbb P_{\mu^{(a)}}^{\mathcal A}
    (\mathrm{Succ}_T^c)
    &\ge
    \exp\left[
        -\left(
            \sqrt{\frac{n_a}{2w_a}}+\sqrt{D_T}
        \right)^2
    \right]\\
    &\ge
    \exp\left[
        -\left(
            \sqrt{\frac{T}{2S}}+\sqrt{C_0}
        \right)^2
    \right].
\end{align*}
Taking the maximum over locations proves
\eqref{eq:app-finite-search-bound}.

\paragraph{Step 4: The certification bound.}

Fix any $a\in[K]$ and $\beta>0$, and consider the auxiliary instance
\[
    \nu_i^{(a,\beta)}
    :=
    \begin{cases}
        -\beta, & i=a,\\
        -\varepsilon_i, & i\ne a.
    \end{cases}
\]
All its means are negative. The construction in Step 1 and
\eqref{eq:app-conditioning-entropy} therefore give a conditional
distribution $\mathsf Q_T$ satisfying
\[
    D_T\le C_{a,\beta}.
\]
Passing from $\nu^{(a,\beta)}$ to $\mu^{(a)}$ changes only the mean
of arm $a$, from $-\beta$ to $\Delta$. Thus,
\[
    g=\frac{\Delta+\beta}{\sigma},
    \qquad
    n_a=\mathbb E_{\mathsf Q_T}[N_a(T)]\le T.
\]
Substituting these bounds into
\eqref{eq:app-one-arm-change-of-measure} proves
\eqref{eq:app-finite-certification-bound}.
Unlike the search bound, this bound holds for every fixed
location $a$.
\end{proof}

\subsection{Proof of the asymptotic lower bound}
\label{app:asymptotic-lower-bound}

\begin{proof}[Proof of Theorem~\ref{thm:one-positive-lower-bound}]
The constants in
Proposition~\ref{prop:finite-budget-lower-bounds}
are independent of the algorithm and the budget. We may therefore
take the infimum over algorithms in the finite-budget inequalities
before taking any limit. In particular, the argument allows the
algorithm to depend on $T$.

The search bound~\eqref{eq:app-finite-search-bound} gives
\[
    r_T^\star(\Delta,\varepsilon)
    \ge
    \exp\left[
        -\left(
            \sqrt{\frac{T}{2S}}+\sqrt{C_0}
        \right)^2
    \right].
\]
It follows that
\[
    -\frac{1}{T}\log r_T^\star(\Delta,\varepsilon)
    \le
    \left(
        \frac{1}{\sqrt{2S}}+\sqrt{\frac{C_0}{T}}
    \right)^2.
\]
Since $C_0$ is fixed as $T\to\infty$,
\[
    \limsup_{T\to\infty}
    -\frac{1}{T}\log r_T^\star(\Delta,\varepsilon)
    \le
    \frac{1}{2S}.
\]

Next, fix any $a\in[K]$ and $\beta>0$.
For every algorithm, its maximum failure probability over the
family is at least its failure probability at $\mu^{(a)}$.
Taking the infimum over algorithms in
\eqref{eq:app-finite-certification-bound} therefore yields
\[
    r_T^\star(\Delta,\varepsilon)
    \ge
    \exp\left[
        -\left(
            \frac{\Delta+\beta}{\sigma}\sqrt{\frac{T}{2}}
            +
            \sqrt{C_{a,\beta}}
        \right)^2
    \right].
\]
For this fixed $\beta$, the constant $C_{a,\beta}$ is finite and
independent of $T$. Consequently,
\[
    \limsup_{T\to\infty}
    -\frac{1}{T}\log r_T^\star(\Delta,\varepsilon)
    \le
    \frac{(\Delta+\beta)^2}{2\sigma^2}.
\]
This inequality holds for every $\beta>0$. Letting
$\beta\downarrow0$ after taking the limit in $T$ gives
\[
    \limsup_{T\to\infty}
    -\frac{1}{T}\log r_T^\star(\Delta,\varepsilon)
    \le
    \frac{\Delta^2}{2\sigma^2}
    =
    \frac{1}{2h}.
\]
The order of limits ensures that no bound on $C_{a,\beta}$ uniform
in $\beta$ is needed.

Combining the search and certification bounds, we conclude that
\[
    \limsup_{T\to\infty}
    -\frac{1}{T}\log r_T^\star(\Delta,\varepsilon)
    \le
    \min\left\{\frac{1}{2S},\frac{1}{2h}\right\}
    =
    \frac{1}{2\max\{h,S\}}.
\]
This proves~\eqref{eq:one-positive-lower-bound}.
\end{proof}

\subsection{Sample-complexity consequence}
\label{app:lower-bound-sample-complexity}

We now prove~\eqref{eq:one-positive-sample-lower-bound}.
For each target failure probability $\eta$, suppose that an
algorithm $\mathcal A_\eta$ uses a budget $T_\eta\ge K$ and satisfies
\[
    \max_{a\in[K]}
    \mathbb P_{\mu^{(a)}}^{\mathcal A_\eta}
    \bigl(\mathrm{Succ}_{T_\eta}^c\bigr)
    \le
    \eta.
\]
Both the algorithm and its budget may depend on $\eta$.
By the definition of the minimax failure probability,
\[
    r_{T_\eta}^\star(\Delta,\varepsilon)\le\eta.
\]

First, this guarantee requires $T_\eta\to\infty$ as
$\eta\downarrow0$.
Indeed, for any fixed integer $M\ge K$,
\eqref{eq:app-finite-search-bound} implies
\[
    r_T^\star(\Delta,\varepsilon)
    \ge
    \exp\left[
        -\left(
            \sqrt{\frac{M}{2S}}+\sqrt{C_0}
        \right)^2
    \right]
    >0,
    \qquad K\le T\le M.
\]
Thus, for sufficiently small $\eta$, no budget at most $M$ can
achieve failure probability at most $\eta$ on every instance in
the family.

Fix $\xi>0$.
Theorem~\ref{thm:one-positive-lower-bound} implies that, for all
sufficiently large $T$,
\[
    -\frac{1}{T}\log r_T^\star(\Delta,\varepsilon)
    \le
    \frac{1+\xi}{2\max\{h,S\}}.
\]
Since $T_\eta\to\infty$, for all sufficiently small $\eta$ we have
\[
    \log\frac{1}{\eta}
    \le
    -\log r_{T_\eta}^\star(\Delta,\varepsilon)
    \le
    \frac{1+\xi}{2\max\{h,S\}}\,T_\eta.
\]
Therefore,
\[
    \liminf_{\eta\downarrow0}
    \frac{T_\eta}{\log(1/\eta)}
    \ge
    \frac{2\max\{h,S\}}{1+\xi}.
\]
Letting $\xi\downarrow0$ yields
\[
    \liminf_{\eta\downarrow0}
    \frac{T_\eta}{\log(1/\eta)}
    \ge
    2\max\{h,S\}.
\]
Equivalently, any such budget satisfies
\[
    T_\eta
    \ge
    \bigl(2-o_\eta(1)\bigr)\max\{h,S\}
    \log\frac{1}{\eta},
    \qquad \eta\downarrow0.
\]
Here $o_\eta(1)\to0$ as $\eta\downarrow0$, with $K$, $\Delta$,
$\varepsilon$, $\sigma^2$, $(\mu_0,\sigma_0^2)$, and $\delta$ held
fixed. This proves~\eqref{eq:one-positive-sample-lower-bound}.

\subsection{Proof of Lemma~\ref{lem:one-positive-complexity}}
\label{app:one-positive-complexity}

\begin{proof}
Fix $a\in[K]$. For the instance $\mu^{(a)}$, we have
$\mu_\star=\Delta$, and arm $a$ is the unique positive arm. Therefore,
\[
    H_{\mathrm{pos}}(\mu^{(a)})
    =
    \frac{K\sigma^2}{\Delta^2}
    =
    Kh.
\]

For $H_{\mathrm{cap}}$, the contribution of arm $a$ is
$\sigma^2/\Delta^2=h$. For every $i\ne a$,
\[
    \max\{\mu_\star,\mu_\star-\mu_i^{(a)}\}
    =
    \max\{\Delta,\Delta+\varepsilon_i\}
    =
    \Delta+\varepsilon_i.
\]
Hence
\[
    H_{\mathrm{cap}}(\mu^{(a)})
    =
    h
    +
    \sum_{i\ne a}
    \frac{\sigma^2}{(\Delta+\varepsilon_i)^2}
    =
    h+S-\frac{\sigma^2}{(\Delta+\varepsilon_a)^2}.
\]
Since $\Delta+\varepsilon_i>\Delta$ for every $i$,
each term in $H_{\mathrm{cap}}(\mu^{(a)})$ is at most $h$.
Thus
\[
    H_{\mathrm{cap}}(\mu^{(a)})
    \le
    Kh
    =
    H_{\mathrm{pos}}(\mu^{(a)}),
\]
and therefore
\[
    H_{\mathrm{cert}}(\mu^{(a)})
    =
    H_{\mathrm{cap}}(\mu^{(a)}).
\]

Now,
\[
    \frac{\sigma^2}{(\Delta+\varepsilon_a)^2}
    \le h
    \qquad\text{and}\qquad
    \frac{\sigma^2}{(\Delta+\varepsilon_a)^2}
    \le S.
\]
It follows that
\[
    H_{\mathrm{cert}}(\mu^{(a)})
    =
    h+S-\frac{\sigma^2}{(\Delta+\varepsilon_a)^2}
    \ge h
\]
and
\[
    H_{\mathrm{cert}}(\mu^{(a)})
    \ge S.
\]
Therefore,
\[
    H_{\mathrm{cert}}(\mu^{(a)})
    \ge
    \max\{h,S\}.
\]
The upper bound follows directly from
\[
    H_{\mathrm{cert}}(\mu^{(a)})
    \le
    h+S
    \le
    2\max\{h,S\}.
\]
This proves~\eqref{eq:one-positive-complexity-comparison}.
\end{proof}

\section{Equal positive means}
\label{app:equal-positive-exponent}

We complement the lower bound for one-positive-arm families with
a result for the ``many positive arms'' example from
Section~\ref{sec:htg-interpretation}, in which
\[
    \mu=(a,\ldots,a)\in\mathbb R^K,
    \qquad a>0.
\]
We show that every admissible algorithm, as defined in
Section~\ref{sec:one-positive-families}, has the same exact
asymptotic posterior-failure exponent on this instance.
This includes both HTG and uniform sampling.

Throughout this appendix, $K$, $a$, the observation variance
$\sigma^2$, the prior parameters $(\mu_0,\sigma_0^2)$, and the
posterior credibility tolerance $\delta$ are fixed.
Recall that
\[
    \lambda=\frac{\sigma^2}{\sigma_0^2},
    \qquad
    z=\Phi^{-1}(1-\delta)>0.
\]

\begin{proposition}[Exact exponent for equal positive means]
\label{prop:equal-positive-exponent}
Let $\mu=(a,\ldots,a)$ with $a>0$.
For every sequence of admissible algorithms
$(\mathcal A_T)_{T\ge K}$,
\begin{equation}
    \lim_{T\to\infty}
    -\frac{1}{T}
    \log
    \mathbb P_\mu^{\mathcal A_T}
    \bigl(\mathrm{Succ}_T^c\bigr)
    =
    \frac{a^2}{2\sigma^2}
    =
    \frac{1}{2H_{\mathrm{pos}}(\mu)}.
    \label{eq:equal-positive-exponent}
\end{equation}
The algorithms may be randomized and may depend on $T$.
\end{proposition}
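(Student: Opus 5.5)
The plan is to prove matching bounds: the failure exponent is at least $a^2/(2\sigma^2)$ for every admissible sequence (an upper bound on failure probability), and at most $a^2/(2\sigma^2)$ (a lower bound on failure probability). Since $H_{\mathrm{pos}}(\mu)=K\sigma^2/(Ka^2)=\sigma^2/a^2$, this gives~\eqref{eq:equal-positive-exponent}.

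\textbf{Lower bound on failure.} I would adapt Step~4 of Proposition~\ref{prop:finite-budget-lower-bounds}, changing all $K$ arms at once. Fix $\beta>0$ and take the auxiliary instance $\nu=(-\beta,\ldots,-\beta)$. Step~1 gives an event $E$ with $\mathbb P_\nu(E)\ge p(\beta)^K>0$ on which every admissible algorithm fails, and a conditional law $\mathsf Q_T$ with $D_T\le K\log(1/p(\beta))$.

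The likelihood ratio between $\nu$ and $\mu$ is now a sum over arms:
\[
\log\frac{dP_\nu^T}{dP_\mu^T}=\sum_i\left(\frac{g^2}{2}N_i(T)-gM_i(T)\right),\qquad g=\frac{a+\beta}{\sigma}.
\]
The process $\exp\bigl(-\theta\sum_iM_i(t)-\tfrac{\theta^2}{2}\sum_iN_i(t)\bigr)$ is still a mean-one martingale under $P_\nu^T$, because exactly one arm is revealed per round. The entropy inequality then gives $-\mathbb E_{\mathsf Q_T}\bigl[\sum_iM_i\bigr]\le\sqrt{2TD_T}$, using $\sum_iN_i(T)=T$. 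This yields
\[
\mathbb P_\mu(\mathrm{Succ}_T^c)\ge\exp\Bigl[-\Bigl(g\sqrt{T/2}+\sqrt{D_T}\Bigr)^2\Bigr].
\]
Taking $\limsup$ in $T$ and then $\beta\downarrow0$ gives $\limsup_T -\frac1T\log\mathbb P\le a^2/(2\sigma^2)$, uniformly over algorithms, so it holds for any sequence $\mathcal A_T$.

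\textbf{Upper bound on failure.} Failure means $Z_{i,N_i(T)}<z$ for all $i$. Since $\sum_iN_i(T)=T$, some arm has $N_i(T)\ge T/K$, but that only gives exponent $a^2/(2K\sigma^2)$, which is too weak. The key is to use all arms jointly. On failure, each arm satisfies
\[
W_{i,N_i}<z\sqrt{N_i+\lambda}-\frac{\lambda\mu_0}{\sigma}-\frac{a}{\sigma}N_i,
\]
and summing over arms gives
\[
\sum_iW_{i,N_i(T)}<-\frac{a}{\sigma}T+K\bigl(z\sqrt{T+\lambda}+b_0\bigr),
\]
which is $-\frac{a}{\sigma}T+O(\sqrt T)$. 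Now $\sum_iW_{i,N_i(t)}$ is exactly a standard Gaussian random walk indexed by $t$: each round adds an independent $\mathcal N(0,1)$ increment, whatever the adaptive choice. Hence at time $T$ it is $\mathcal N(0,T)$. The Gaussian tail then gives failure probability at most $\exp\bigl(-(aT/\sigma-O(\sqrt T))^2/(2T)\bigr)$, whose exponent tends to $a^2/(2\sigma^2)$. Formally, I would apply the mean-one martingale $\exp\bigl(-\theta\sum_iM_i(t)-\theta^2t/2\bigr)$ with $\theta=a/\sigma$ and Markov's inequality.

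\textbf{Main obstacle.} The only delicate point is checking that the summed process is a martingale under adaptive sampling, in both directions. The same martingale argument as in Step~2 of Proposition~\ref{prop:finite-budget-lower-bounds} handles it. Everything else is routine, and admissibility (each arm sampled at least once, $N_i\ge1$) is only needed for the failure event $E$.
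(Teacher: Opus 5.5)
Your proposal is correct and matches the paper's proof essentially step for step. The lower bound uses the same all-negative auxiliary instance $(-\beta,\ldots,-\beta)$, the same conditioning on $E$ with $D_T\le K\log(1/p(\beta))$, and the pooled likelihood ratio $g^2T/2-gM_T$, with $\beta\downarrow0$ taken after $T\to\infty$. The upper bound uses the same key fact: because all means equal $a$, the chronological observations are i.i.d.\ $\mathcal N(a,\sigma^2)$ whatever the adaptive allocation. The remaining differences are cosmetic and do not affect the exponent, since either correction is $O(\sqrt T)$. You bound $\sum_i\sqrt{N_i(T)+\lambda}$ by $K\sqrt{T+\lambda}$, where the paper uses the Cauchy--Schwarz bound $\sqrt{K(T+K\lambda)}$. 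Also, the prior term should be $K\lambda|\mu_0|/\sigma$ rather than $Kb_0$, because here it is not divided by $\sqrt{n}$.
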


\begin{proof}
Fix a budget $T\ge K$ and an admissible algorithm $\mathcal A_T$.
Write $\mathbb P_\theta$ for the probability measure under mean
vector $\theta$, and $P_\theta^T$ for the distribution of the
observed history up to time $T$, including the algorithm's internal
randomization. We suppress the dependence on $\mathcal A_T$
in this notation. Let $Y_t$ denote the observation collected at
round $t$.

We establish finite-budget lower and upper bounds that hold
uniformly over the choice of $\mathcal A_T$.

\paragraph{Lower bound on the failure probability.}

Fix $\beta>0$ and consider the auxiliary instance
\[
    \nu=(-\beta,\ldots,-\beta).
\]
The same algorithm, with the same parameter settings, is used
under $\nu$ and $\mu$.

Apply the event construction from the proof of
Proposition~\ref{prop:finite-budget-lower-bounds} to $\nu$.
It gives an event $E$, defined using the full observation sequences,
such that
\[
    \mathbb P_\nu(E)\ge p(\beta)^K>0
\]
and, on $E$,
\[
    Z_{i,n}<z
    \qquad
    \text{for every }i\in[K]\text{ and }n\ge1.
\]
Here $p$ is the function defined in
Appendix~\ref{app:finite-budget-lower-bounds}.
The event and its probability do not depend on $T$ or the algorithm.
Since every arm is sampled at least once,
\eqref{eq:success-zmax} implies that $E$ guarantees posterior failure.

Let $\mathsf Q_T$ be the distribution of the observed history
under $\nu$, conditional on $E$. Specifically, for any event $B$
determined by the observed history, define
\[
    \mathsf Q_T(B)
    :=
    \mathbb P_\nu(B\mid E).
\]
Thus, we restrict attention to realizations in which $E$ occurs,
without changing the sampling rule. The conditioning argument in
Appendix~\ref{app:finite-budget-lower-bounds} gives
\[
    \mathsf Q_T(\mathrm{Succ}_T^c)=1,
    \qquad
    D_T
    :=
    \operatorname{KL}(\mathsf Q_T\Vert P_\nu^T)
    \le C_\beta,
\]
where
\[
    C_\beta
    :=
    K\log\frac{1}{p(\beta)}
    <\infty.
\]
For each fixed $\beta>0$, the constant $C_\beta$ is independent
of $T$ and the algorithm. We also have the density bound
\[
    \frac{d\mathsf Q_T}{dP_\nu^T}
    \le
    \frac{1}{\mathbb P_\nu(E)}.
\]

Passing from $\nu$ to $\mu$ changes every arm mean by the same
amount $a+\beta$. Define
\[
    g:=\frac{a+\beta}{\sigma},
    \qquad
    M_T:=\sum_{t=1}^{T}\frac{Y_t+\beta}{\sigma}.
\]
Because the algorithm selects each arm using only previous
observations and its internal randomization, its sampling
probabilities cancel in the likelihood ratio. The Gaussian
observation densities give
\[
    \log\frac{dP_\nu^T}{dP_\mu^T}
    =
    \frac{g^2T}{2}-gM_T.
\]
Unlike the one-arm comparison in
Appendix~\ref{app:finite-budget-lower-bounds}, every observation
contributes the same mean difference, regardless of which arm
is selected.

Under $\mathbb P_\nu$, conditional on the preceding history and
the selected arm, each $(Y_t+\beta)/\sigma$ is standard Gaussian.
Consequently, for every $\theta>0$,
\[
    \mathbb E_{P_\nu^T}\left[
        \exp\left(
            -\theta M_T-\frac{\theta^2T}{2}
        \right)
    \right]
    =1.
\]
Also, $\mathbb E_{P_\nu^T}[M_T^2]=T$.
The density bound above therefore implies
\[
    \mathbb E_{\mathsf Q_T}[M_T^2]
    \le
    \frac{T}{\mathbb P_\nu(E)}
    <\infty.
\]
Thus, the expectations used below are finite.
The Gaussian identities are used under $P_\nu^T$, not under
the conditional distribution $\mathsf Q_T$.

Applying the entropy inequality from the proof of
Proposition~\ref{prop:finite-budget-lower-bounds} with
\[
    W=-\theta M_T-\frac{\theta^2T}{2}
\]
gives
\[
    -\mathbb E_{\mathsf Q_T}[M_T]
    \le
    \frac{D_T}{\theta}+\frac{\theta T}{2}.
\]
Optimizing over $\theta>0$ yields
\[
    -\mathbb E_{\mathsf Q_T}[M_T]
    \le
    \sqrt{2TD_T}.
\]
If $D_T=0$, the same conclusion follows by letting
$\theta\downarrow0$.

It follows that
\begin{align*}
    \operatorname{KL}(\mathsf Q_T\Vert P_\mu^T)
    &=
    D_T+
    \mathbb E_{\mathsf Q_T}\left[
        \log\frac{dP_\nu^T}{dP_\mu^T}
    \right]\\
    &=
    D_T+\frac{g^2T}{2}
    -g\mathbb E_{\mathsf Q_T}[M_T]\\
    &\le
    D_T+\frac{g^2T}{2}+g\sqrt{2TD_T}\\
    &=
    \left(
        g\sqrt{\frac{T}{2}}+\sqrt{D_T}
    \right)^2\\
    &\le
    \left(
        g\sqrt{\frac{T}{2}}+\sqrt{C_\beta}
    \right)^2.
\end{align*}
Since $\mathsf Q_T(\mathrm{Succ}_T^c)=1$, the event inequality
for relative entropy used in the preceding appendix gives
\[
    \log\frac{1}{P_\mu^T(\mathrm{Succ}_T^c)}
    \le
    \operatorname{KL}(\mathsf Q_T\Vert P_\mu^T).
\]
Therefore,
\begin{equation}
    \mathbb P_\mu^{\mathcal A_T}
    \bigl(\mathrm{Succ}_T^c\bigr)
    \ge
    \exp\left[
        -\left(
            \frac{a+\beta}{\sigma}\sqrt{\frac{T}{2}}
            +
            \sqrt{C_\beta}
        \right)^2
    \right].
    \label{eq:equal-positive-finite-lower-bound}
\end{equation}

For each fixed $\beta>0$, dividing the negative logarithm by $T$
and taking the limit superior gives
\[
    \limsup_{T\to\infty}
    -\frac{1}{T}
    \log
    \mathbb P_\mu^{\mathcal A_T}
    \bigl(\mathrm{Succ}_T^c\bigr)
    \le
    \frac{(a+\beta)^2}{2\sigma^2}.
\]
Letting $\beta\downarrow0$ afterwards yields
\[
    \limsup_{T\to\infty}
    -\frac{1}{T}
    \log
    \mathbb P_\mu^{\mathcal A_T}
    \bigl(\mathrm{Succ}_T^c\bigr)
    \le
    \frac{a^2}{2\sigma^2}.
\]
No bound on $C_\beta$ uniform in $\beta$ is needed.

\paragraph{Upper bound on the failure probability.}

Under $\mu=(a,\ldots,a)$, the observations collected in chronological
order satisfy
\[
    Y_1,\ldots,Y_T
    \stackrel{\mathrm{iid}}{\sim}
    \mathcal N(a,\sigma^2).
\]
Indeed, conditional on the preceding history and the selected arm,
the next observation always has distribution
$\mathcal N(a,\sigma^2)$. Thus, adaptive arm selection does not change
the joint distribution of the observations in chronological order.

On $\mathrm{Succ}_T^c$, equation~\eqref{eq:success-zmax} implies
\[
    \lambda\mu_0
    +
    \sum_{s=1}^{N_i(T)}X_{i,s}
    <
    z\sigma\sqrt{N_i(T)+\lambda}
    \qquad
    \text{for every }i\in[K].
\]
Summing over arms gives
\[
    \sum_{t=1}^{T}Y_t
    <
    z\sigma\sum_{i=1}^{K}\sqrt{N_i(T)+\lambda}
    -
    K\lambda\mu_0.
\]
By the Cauchy--Schwarz inequality and $\sum_iN_i(T)=T$,
\[
    \sum_{i=1}^{K}\sqrt{N_i(T)+\lambda}
    \le
    \sqrt{
        K\sum_{i=1}^{K}(N_i(T)+\lambda)
    }
    =
    \sqrt{K(T+K\lambda)}.
\]
Define
\[
    B_T
    :=
    z\sigma\sqrt{K(T+K\lambda)}-K\lambda\mu_0.
\]
We have shown that
\[
    \mathrm{Succ}_T^c
    \subseteq
    \left\{
        \sum_{t=1}^{T}Y_t<B_T
    \right\}.
\]
Since $\sum_{t=1}^{T}Y_t\sim\mathcal N(aT,\sigma^2T)$,
\begin{equation}
    \mathbb P_\mu^{\mathcal A_T}
    \bigl(\mathrm{Succ}_T^c\bigr)
    \le
    \Phi\left(
        \frac{B_T-aT}{\sigma\sqrt T}
    \right)
    \le
    \exp\left(
        -\frac{(aT-B_T)_+^2}{2\sigma^2T}
    \right).
    \label{eq:equal-positive-finite-upper-bound}
\end{equation}
With all parameters fixed, $B_T/T\to0$. Therefore,
\[
    \liminf_{T\to\infty}
    -\frac{1}{T}
    \log
    \mathbb P_\mu^{\mathcal A_T}
    \bigl(\mathrm{Succ}_T^c\bigr)
    \ge
    \frac{a^2}{2\sigma^2}.
\]
Combining this with the preceding upper bound on the limit superior
proves~\eqref{eq:equal-positive-exponent}.
\end{proof}

\section{Uniform Sampling, Sequential Halving, and Separations}
\label{app:baselines}

This appendix specifies the baseline algorithms and proves the guarantees
in Section~\ref{sec:baseline-comparisons}. We then establish
sample-complexity separations from HTG on a family with one positive arm.

Throughout, we use the Gaussian observation model and prior from
Section~\ref{sec:problem}, with $\sigma,\sigma_0>0$ and
$\delta\in(0,1/2)$. Recall
$\lambda=\sigma^2/\sigma_0^2$, $z=\Phi^{-1}(1-\delta)$, and
$c=\sqrt{2}\,z+\sqrt{\lambda}|\mu_0|/\sigma$.
Every algorithm uses all observations collected from each arm to update
its posterior and makes the terminal recommendation
$J_T\in\arg\max_{i\in[K]}Z_{i,N_i(T)}$.
Thus, by~\eqref{eq:success-zmax},
\[
    \mathrm{Succ}_T^c
    =
    \left\{
        Z_{i,N_i(T)}<z
        \text{ for every }i\in[K]
    \right\}.
\]
All ties are resolved by a fixed deterministic rule. The guarantees below
assume $\mu_\star>0$, and we use $\sigma^2/0^2=+\infty$.

For the proofs, regard each arm as having an infinite sequence of potential
observations, and define
\[
    W_{i,n}
    :=
    \frac{1}{\sigma}
    \sum_{s=1}^{n}(X_{i,s}-\mu_i).
\]
At every deterministic $n\ge1$, $W_{i,n}\sim\mathcal N(0,n)$, independently
across arms. In particular,
\begin{equation}
\label{eq:app-score-noise}
    Z_{i,n}
    =
    \frac{
        n\mu_i/\sigma+\lambda\mu_0/\sigma+W_{i,n}
    }{\sqrt{n+\lambda}}.
\end{equation}

\subsection{Uniform sampling}
\label{app:uniform-sampling}

Algorithm~\ref{alg:uniform-baseline} allocates observations round-robin.
Its sample counts are deterministic: writing $n=\lfloor T/K\rfloor$,
\[
    n_i(T)
    =
    n+\mathbf{1}\{i\le T-Kn\},
    \qquad i\in[K].
\]

\begin{algorithm}[t]
\caption{Uniform sampling with posterior-based recommendation}
\label{alg:uniform-baseline}
\begin{algorithmic}[1]
\REQUIRE Integer budget $T\ge K$
\STATE Initialize the sample count of every arm to zero.
\FOR{$t=0,\ldots,T-1$}
    \STATE Select $A_{t+1}=1+(t\bmod K)$.
    \STATE Sample arm $A_{t+1}$ and update its sample count and posterior
    using all its observations.
\ENDFOR
\STATE \textbf{Output}
$J_T\in\arg\max_{i\in[K]}Z_{i,N_i(T)}$.
\end{algorithmic}
\end{algorithm}

\begin{proposition}[Uniform-sampling guarantees]
\label{prop:app-uniform-sampling}
Fix a mean vector $\mu$ with $\mu_\star>0$.
For every integer $T\ge K$,
\begin{equation}
\label{eq:app-uniform-product}
    \mathbb P_\mu^{\mathrm{Uniform}}
    \bigl(\mathrm{Succ}_T^c\bigr)
    =
    \prod_{i=1}^{K}
    \Phi\!\left(
        z\sqrt{1+\frac{\lambda}{n_i(T)}}
        -
        \frac{\mu_i}{\sigma}\sqrt{n_i(T)}
        -
        \frac{\lambda\mu_0}{\sigma\sqrt{n_i(T)}}
    \right).
\end{equation}
If $n=\lfloor T/K\rfloor\ge\lambda$, then
\begin{equation}
\label{eq:app-uniform-finite-budget}
    \mathbb P_\mu^{\mathrm{Uniform}}
    \bigl(\mathrm{Succ}_T^c\bigr)
    \le
    \exp\left[
        -\frac12
        \left(
            \frac{\mu_\star}{\sigma}\sqrt n-c
        \right)_+^2
    \right].
\end{equation}
Moreover, with $K$, $\mu$, $\sigma^2$, the prior parameters, and $\delta$
held fixed,
\begin{equation}
\label{eq:app-uniform-exponent}
    \lim_{T\to\infty}
    -\frac1T
    \log
    \mathbb P_\mu^{\mathrm{Uniform}}
    \bigl(\mathrm{Succ}_T^c\bigr)
    =
    \frac{\sum_{i=1}^{K}(\mu_i)_+^2}{2K\sigma^2}
    =
    \frac{1}{2H_{\mathrm{pos}}(\mu)}.
\end{equation}
\end{proposition}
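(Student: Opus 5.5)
Since uniform sampling uses deterministic sample counts $n_i(T)$, the terminal scores $Z_{i,n_i(T)}$ are functions of disjoint blocks of independent observations. The plan is therefore to compute the failure probability exactly as a product and then bound each factor.

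\textbf{The product formula.} By~\eqref{eq:success-zmax}, $\mathrm{Succ}_T^c=\bigcap_i\{Z_{i,n_i(T)}<z\}$. Using the representation~\eqref{eq:app-score-noise}, the event $Z_{i,n}<z$ with $n=n_i(T)$ is equivalent to
\[
W_{i,n}/\sqrt n< z\sqrt{1+\lambda/n}-\frac{\mu_i}{\sigma}\sqrt n-\frac{\lambda\mu_0}{\sigma\sqrt n}.
\]
Because $W_{i,n}/\sqrt n\sim\mathcal N(0,1)$ independently across arms, each factor equals the stated $\Phi$ term. Multiplying over arms gives~\eqref{eq:app-uniform-product}.

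\textbf{The finite-budget bound~\eqref{eq:app-uniform-finite-budget}.} Bound every factor by one except the one for a best arm $i_\star$. For $n'=n_{i_\star}(T)\ge n\ge\lambda$, the argument of $\Phi$ is at most $\sqrt2 z+b_0-\frac{\mu_\star}{\sigma}\sqrt{n'}\le c-\frac{\mu_\star}{\sigma}\sqrt n$. The standard tail bound $\Phi(-x)\le e^{-x^2/2}$ for $x\ge0$, together with the trivial bound $1$ when the argument is nonnegative, then yields the claim with the $(\cdot)_+$.

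\textbf{The exact exponent~\eqref{eq:app-uniform-exponent}.} Take $-\frac1T\log$ of the product and treat arms by sign. For $\mu_i\le 0$, the argument of $\Phi$ is bounded below as $T\to\infty$: it tends to $+\infty$ if $\mu_i<0$ and to $z>0$ if $\mu_i=0$. The factor is therefore bounded away from zero and contributes $0$ to the exponent. For $\mu_i>0$, the argument equals $-x_i$ with $x_i=\frac{\mu_i}{\sigma}\sqrt{n_i(T)}+O(1)\to\infty$. By Mills-ratio bounds, $-\log\Phi(-x)=\frac{x^2}{2}+O(\log x)$. Since $n_i(T)/T\to1/K$, this factor contributes $\mu_i^2/(2K\sigma^2)$ to the limit. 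Summing over positive arms gives $\sum_i(\mu_i)_+^2/(2K\sigma^2)=1/(2H_{\mathrm{pos}})$.

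The only mildly delicate step is the two-sided Mills-ratio estimate, in particular the lower bound $\Phi(-x)\ge\frac{x}{1+x^2}\varphi(x)$ for the liminf direction. It is routine, and no real obstacle is expected.
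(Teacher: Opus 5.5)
Your proposal is correct and follows the paper's proof step for step. The product formula comes from deterministic counts and cross-arm independence. The finite-budget bound keeps only a best arm's factor and uses $z\sqrt{1+\lambda/m}-\lambda\mu_0/(\sigma\sqrt m)\le c$ for $m\ge\lambda$. The exponent follows from Gaussian tail asymptotics on the positive arms, with nonpositive arms contributing $o(T)$. Your explicit two-sided Mills-ratio estimate fills in the ``Gaussian tail asymptotics'' step that the paper states without detail.
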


\begin{proof}
By~\eqref{eq:app-score-noise}, for every deterministic count $n_i(T)$,
\[
    \mathbb P_\mu\!\left(Z_{i,n_i(T)}<z\right)
    =
    \Phi\!\left(
        z\sqrt{1+\frac{\lambda}{n_i(T)}}
        -
        \frac{\mu_i}{\sigma}\sqrt{n_i(T)}
        -
        \frac{\lambda\mu_0}{\sigma\sqrt{n_i(T)}}
    \right).
\]
The counts are deterministic and the observations are independent across
arms. Multiplying these probabilities proves
\eqref{eq:app-uniform-product}.

For the finite-budget bound, choose an arm $i_\star$ with
$\mu_{i_\star}=\mu_\star$. For every $m\ge\lambda$,
\[
    z\sqrt{1+\frac{\lambda}{m}}
    -
    \frac{\lambda\mu_0}{\sigma\sqrt m}
    \le
    \sqrt{2}\,z+\frac{\sqrt{\lambda}|\mu_0|}{\sigma}
    =
    c.
\]
Since $n_{i_\star}(T)\ge n\ge\lambda$, failure of every arm implies
\[
\begin{aligned}
    \mathbb P_\mu^{\mathrm{Uniform}}
    \bigl(\mathrm{Succ}_T^c\bigr)
    &\le
    \mathbb P_\mu\!\left(
        Z_{i_\star,n_{i_\star}(T)}<z
    \right)
    \\
    &\le
    \Phi\!\left(
        c-\frac{\mu_\star}{\sigma}\sqrt n
    \right).
\end{aligned}
\]
The inequality $\Phi(-x)\le\exp(-x_+^2/2)$ gives
\eqref{eq:app-uniform-finite-budget}.

For the asymptotic exponent, let $a_{i,T}$ denote the argument of $\Phi$
in the $i$th factor of~\eqref{eq:app-uniform-product}. Since
$n_i(T)/T\to1/K$,
\[
    a_{i,T}
    =
    -\frac{\mu_i}{\sigma}\sqrt{n_i(T)}
    +z+o(1).
\]
If $\mu_i>0$, Gaussian tail asymptotics give
\[
    -\log\Phi(a_{i,T})
    =
    \frac{\mu_i^2}{2\sigma^2}n_i(T)+o(T).
\]
If $\mu_i<0$, then $\Phi(a_{i,T})\to1$, whereas if $\mu_i=0$, then
$\Phi(a_{i,T})\to\Phi(z)>0$. Hence the nonpositive arms contribute
$o(T)$ to the negative logarithm of the product. Summing over the
positive arms and dividing by $T$ proves
\eqref{eq:app-uniform-exponent}.
\end{proof}

In particular, for any $\eta\in(0,1)$, a sufficient budget for
$\mathbb P_\mu^{\mathrm{Uniform}}(\mathrm{Succ}_T^c)\le\eta$ is
\begin{equation}
\label{eq:app-uniform-sufficient-budget}
    T
    \ge
    K\left\lceil
        \max\left\{
            1,\lambda,
            \frac{\sigma^2}{\mu_\star^2}
            \left(
                c+\sqrt{2\log\frac1\eta}
            \right)^2
        \right\}
    \right\rceil.
\end{equation}
This follows by substituting the resulting lower bound on
$n=\lfloor T/K\rfloor$ into
\eqref{eq:app-uniform-finite-budget}.

\subsection{Sequential halving}
\label{app:sequential-halving}

For $K\ge2$, we use the equal-stage-budget SH allocation of
\citet{karnin13almost}, with elimination based on cumulative posterior
$Z$-scores and the common all-arm terminal recommendation.
Recall that $Q-1=\lceil\log_2K\rceil$.
At each elimination stage, all active arms have equal cumulative sample
counts, so ranking their posterior $Z$-scores is equivalent to ranking
their cumulative empirical means.

Algorithm~\ref{alg:sh-baseline} specifies the allocation, including
rounding and leftover observations. For $K\le T<K(Q-1)$, it uses
round-robin sampling to ensure that every arm is observed at least once.
An arm eliminated during SH receives no further observations but remains
eligible for the terminal recommendation.

\begin{algorithm}[t]
\caption{Sequential halving with posterior-based elimination and recommendation}
\label{alg:sh-baseline}
\begin{algorithmic}[1]
\REQUIRE Integers $K\ge2$ and $T\ge K$
\STATE Initialize the sample count of every arm to zero.
\IF{$T<K(Q-1)$}
    \STATE Allocate all $T$ observations round-robin, updating each
    sampled arm's posterior using all its observations.
\ELSE
    \STATE Set $S_1=[K]$ and $t=0$.
    \FOR{$r=1,\ldots,Q-1$}
        \STATE Set $k_r=|S_r|$ and
        $q_r=\left\lfloor T/((Q-1)k_r)\right\rfloor$.
        \STATE Sample each arm in $S_r$ an additional $q_r$ times,
        in increasing arm order, updating its posterior after every
        observation.
        \STATE Set $t\leftarrow t+k_rq_r$.
        \STATE Let $S_{r+1}$ contain the $\lceil k_r/2\rceil$ arms in
        $S_r$ with the largest $Z_{i,N_i(t)}$.
    \ENDFOR
    \STATE Let $I$ be the unique arm in $S_Q$.
    \STATE Allocate the remaining $T-t$ observations to $I$, updating
    its posterior using all its observations.
\ENDIF
\STATE \textbf{Output}
$J_T\in\arg\max_{i\in[K]}Z_{i,N_i(T)}$.
\end{algorithmic}
\end{algorithm}

\paragraph{Budget accounting and possible sample counts.}
In the SH branch, the active-set sizes are deterministic:
$k_r=\lceil K/2^{r-1}\rceil$.
In particular, $k_{Q-1}=2$ and $k_Q=1$.
Since $k_rq_r\le T/(Q-1)$, the elimination stages use at most $T$
observations. Define
\begin{equation}
\label{eq:app-sh-counts}
    n_r:=\sum_{s=1}^{r}q_s,
    \qquad
    D:=T-\sum_{r=1}^{Q-1}k_rq_r.
\end{equation}
Here $D\ge0$ is the deterministic leftover budget, and $n_r$ is the
cumulative count of every arm active through stage $r$.
Consequently, every terminal count belongs to the deterministic set
\begin{equation}
\label{eq:app-sh-terminal-counts}
    \mathcal C_T
    :=
    \{n_1,\ldots,n_{Q-1},n_{Q-1}+D\},
    \qquad
    |\mathcal C_T|\le Q.
\end{equation}
The identity of the final survivor is random, but its terminal count
is always $n_{Q-1}+D$.

\begin{theorem}[Finite-budget guarantee for sequential halving]
\label{thm:app-sh-guarantee}
Let $K\ge2$ and $\mu_\star>0$, and recall $H_{\mathrm{SH}}(\mu)$
from~\eqref{eq:H-SH}. Suppose the integer budget satisfies
\begin{equation}
\label{eq:app-sh-initialization}
    T\ge2K(Q-1)\max\{1,\lambda\}.
\end{equation}
For every $R>0$ such that
\begin{equation}
\label{eq:app-sh-radius-budget}
    T>16H_{\mathrm{SH}}(\mu)(c+R)^2,
\end{equation}
Algorithm~\ref{alg:sh-baseline} satisfies
\begin{equation}
\label{eq:app-sh-failure-bound}
    \mathbb P_\mu^{\mathrm{SH}}
    \bigl(\mathrm{Succ}_T^c\bigr)
    \le
    \min\left\{
        1,\,
        6K(Q-1)e^{-R^2/2}
    \right\}.
\end{equation}
Consequently, for $\eta\in(0,1)$, conditions
\eqref{eq:app-sh-initialization} and
\begin{equation}
\label{eq:app-sh-confidence-budget}
    T
    >
    16H_{\mathrm{SH}}(\mu)
    \left(
        c+\sqrt{2\log\frac{6K(Q-1)}{\eta}}
    \right)^2
\end{equation}
imply
$\mathbb P_\mu^{\mathrm{SH}}(\mathrm{Succ}_T^c)\le\eta$.
\end{theorem}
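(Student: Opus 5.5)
The plan is to run the same concentration-then-counting argument as in the proof of Theorem~\ref{thm:htg-main}, but with a union bound over only the deterministic counts in $\mathcal C_T$ from~\eqref{eq:app-sh-terminal-counts}. These counts include every elimination count $n_r$ as well as every terminal count. Define
\[
    \mathcal E:=\bigcap_{i\in[K]}\bigcap_{n\in\mathcal C_T}\bigl\{|W_{i,n}|\le R\sqrt n\bigr\}.
\]
Since $|\mathcal C_T|\le Q\le 2(Q-1)$ for $K\ge2$, the Gaussian tail bound gives $\mathbb P_\mu(\mathcal E^c)\le 4K(Q-1)e^{-R^2/2}$, which lies within the stated $6K(Q-1)e^{-R^2/2}$. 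The goal is then to show that, under~\eqref{eq:app-sh-initialization} and~\eqref{eq:app-sh-radius-budget}, the event $\mathcal E\cap\mathrm{Succ}_T^c$ is empty. The consequence~\eqref{eq:app-sh-confidence-budget} will follow by choosing $R=\sqrt{2\log(6K(Q-1)/\eta)}$.

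\textbf{Preliminary facts.} First, \eqref{eq:app-sh-initialization} gives $T/((Q-1)k_r)\ge 2$ at every stage, so $q_r\ge T/(2(Q-1)k_r)$. In particular every count in $\mathcal C_T$ is at least $n_1\ge\lambda$. Second, exactly as in Step~2 of the HTG proof, on $\mathcal E$ the failure event forces every positive arm $i$ to satisfy $N_i(T)\le\sigma^2(R+c)^2/\mu_i^2$. Two branches of $H_{\mathrm{SH}}$ follow.

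\textbf{The $K/\mu_\star^2$ branch.} Every arm, and in particular a best arm $i_\star$, receives at least $n_1=q_1\ge T/(2(Q-1)K)$ samples. Combining this with the count bound yields $T\le 2(Q-1)K\sigma^2(R+c)^2/\mu_\star^2$.

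\textbf{The $\sum_i\sigma^2/\mu_i^2$ branch.} This branch is where the real work lies. Split on whether $i_\star$ survives all $Q-1$ eliminations.

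If $i_\star$ survives, it receives at least $q_{Q-1}\ge T/(4(Q-1))$ samples. The count bound then gives $T\le 4(Q-1)\sigma^2(R+c)^2/\mu_\star^2$.

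If instead $i_\star$ is eliminated at stage $r$, let $n=n_r\ge T/(2(Q-1)k_r)$ be the common count at that stage. Each of the $\lceil k_r/2\rceil\ge k_r/2$ retained arms $j$ satisfies $\overline X_{j,n}\ge\overline X_{i_\star,n}$, and I bound $n$ armwise for these arms:
\begin{itemize}[leftmargin=*]
\item If $\mu_j>0$, then $n\le N_j(T)\le\sigma^2(R+c)^2/\mu_j^2$, by the count bound.
\item If $\mu_j<0$, concentration at count $n$ gives $\mu_\star+|\mu_j|\le 2\sigma R/\sqrt n$. Hence $n\le 4\sigma^2R^2/\mu_j^2$.
\end{itemize}
Summing over the retained arms gives
\[
    \frac{k_r}{2}\,n\le 4(R+c)^2\sum_j\frac{\sigma^2}{\mu_j^2}.
\]
Therefore $T\le 2(Q-1)k_rn\le 16(Q-1)(R+c)^2\sum_i\sigma^2/\mu_i^2$. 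Zero-mean arms make this term infinite, so the branch is only needed when it is finite.

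\textbf{Conclusion and main obstacle.} In all cases, $\mathcal E\cap\mathrm{Succ}_T^c$ implies $T\le 16H_{\mathrm{SH}}(\mu)(R+c)^2$, which contradicts~\eqref{eq:app-sh-radius-budget}. The step I expect to be delicate is the eliminated-$i_\star$ case: it needs concentration at the elimination counts $n_r$, which is why $\mathcal C_T$ must contain them. It also needs the lower bound on $q_r$ (hence on $n_r$) to hold even with rounding and the leftover $D$; this is where the factor $2$ from~\eqref{eq:app-sh-initialization} enters.
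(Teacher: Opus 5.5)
Your proof is correct and follows the paper's argument. It uses the same concentration event over the deterministic counts in $\mathcal C_T$, the same lower bounds $q_1\ge T/(2K(Q-1))$ and $q_{Q-1}\ge T/(4(Q-1))$, and the same summation of $\sigma^2/\mu_j^2$ over the at least $k_r/2$ arms retained at an elimination stage.

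The difference is only in bookkeeping. The paper proves $\mathcal E_R\subseteq\mathrm{Succ}_T$ directly by showing that some positive arm survives every stage, so at a hypothetical bad stage all retained arms are negative. You instead argue by contradiction from terminal failure and track the best arm $i_\star$, handling retained positive arms with the HTG-style count bound $N_j(T)\le\sigma^2(R+c)^2/\mu_j^2$.
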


\begin{proof}
Condition~\eqref{eq:app-sh-initialization} ensures that the SH branch
is used and that
\begin{equation}
\label{eq:app-sh-batch-lower-bound}
    q_r
    \ge
    \frac{T}{2(Q-1)k_r},
    \qquad
    q_1
    \ge
    \frac{T}{2K(Q-1)}
    \ge
    \max\{1,\lambda\}.
\end{equation}
In particular, every count in $\mathcal C_T$ is at least $\lambda$.

Consider the event
\[
    \mathcal E_R
    :=
    \bigcap_{i=1}^{K}
    \bigcap_{n\in\mathcal C_T}
    \left\{
        |W_{i,n}|\le R\sqrt n
    \right\}.
\]
The set $\mathcal C_T$ is deterministic. Gaussian tail bounds and a
union bound therefore give
\begin{equation}
\label{eq:app-sh-concentration}
    \mathbb P_\mu(\mathcal E_R^c)
    \le
    2KQe^{-R^2/2}
    \le
    6K(Q-1)e^{-R^2/2}.
\end{equation}
The event is defined on the potential observation sequences, so this
argument does not condition on an adaptively selected sample count.

Work on $\mathcal E_R$. For $n\in\mathcal C_T$, we have $n\ge\lambda$
and
\[
    z\sqrt{1+\frac{\lambda}{n}}
    -
    \frac{\lambda\mu_0}{\sigma\sqrt n}
    \le c.
\]
Together with~\eqref{eq:app-score-noise} and
$W_{i,n}\ge-R\sqrt n$, this yields
\begin{equation}
\label{eq:app-sh-certification}
    \frac{\mu_i}{\sigma}\sqrt n\ge c+R
    \quad\Longrightarrow\quad
    Z_{i,n}\ge z.
\end{equation}
We show that at least one arm satisfies this condition at its terminal
count.

\emph{Case 1:}
$H_{\mathrm{SH}}(\mu)=(Q-1)K\sigma^2/\mu_\star^2$.
Choose $i_\star$ with mean $\mu_\star$.
Regardless of whether it is eliminated, $N_{i_\star}(T)\ge q_1$.
By~\eqref{eq:app-sh-batch-lower-bound},
\[
    \frac{\mu_\star}{\sigma}
    \sqrt{N_{i_\star}(T)}
    \ge
    \sqrt{
        \frac{T\mu_\star^2}{2K(Q-1)\sigma^2}
    }
    =
    \sqrt{\frac{T}{2H_{\mathrm{SH}}(\mu)}}
    >
    c+R.
\]
Its terminal count belongs to $\mathcal C_T$, so
\eqref{eq:app-sh-certification} proves posterior success.

\emph{Case 2:}
$H_{\mathrm{SH}}(\mu)
=(Q-1)\sum_{i=1}^{K}\sigma^2/\mu_i^2<\infty$.
In this case, no arm has mean zero.
We first show that a positive arm survives every elimination stage.

Suppose, to the contrary, that stage $r$ eliminates all remaining
positive arms. Choose a positive arm $i\in S_r$.
Every retained arm $j\in S_{r+1}$ is negative and has posterior
score at least that of $i$. Since both arms have count $n_r$,
$\overline X_{j,n_r}\ge\overline X_{i,n_r}$.
On $\mathcal E_R$, this implies
\[
    \mu_j
    \ge
    \mu_i-\frac{2\sigma R}{\sqrt{n_r}}
    >
    -\frac{2\sigma R}{\sqrt{n_r}}.
\]
Thus each retained arm satisfies
$\sigma^2/\mu_j^2>n_r/(4R^2)\ge q_r/(4R^2)$.
There are at least $k_r/2$ such arms. Hence
\[
    \sum_{j:\mu_j<0}\frac{\sigma^2}{\mu_j^2}
    >
    \frac{k_rq_r}{8R^2}
    \ge
    \frac{T}{16(Q-1)R^2}.
\]
It follows that
$H_{\mathrm{SH}}(\mu)>T/(16R^2)$, contradicting
\eqref{eq:app-sh-radius-budget}. Therefore the final survivor $I$
has positive mean.

Since the last elimination stage has two active arms,
\[
    N_I(T)\ge q_{Q-1}\ge\frac{T}{4(Q-1)}.
\]
Moreover,
$\sigma^2/\mu_I^2
\le\sum_{i=1}^{K}\sigma^2/\mu_i^2
=H_{\mathrm{SH}}(\mu)/(Q-1)$.
Consequently,
\[
    \frac{\mu_I}{\sigma}\sqrt{N_I(T)}
    \ge
    \sqrt{\frac{T}{4H_{\mathrm{SH}}(\mu)}}
    >
    c+R.
\]
Applying~\eqref{eq:app-sh-certification} at
$N_I(T)=n_{Q-1}+D$ proves posterior success.

At least one of the two cases applies, because $\mu_\star>0$ makes
$H_{\mathrm{SH}}(\mu)$ finite. We have therefore shown
$\mathcal E_R\subseteq\mathrm{Succ}_T$.
Combining this inclusion with~\eqref{eq:app-sh-concentration} proves
\eqref{eq:app-sh-failure-bound}. Substituting
$R=\sqrt{2\log(6K(Q-1)/\eta)}$ proves the final assertion.
\end{proof}

For completeness, the comparison
in~\eqref{eq:baseline-complexity-comparison} follows directly from
the definitions:
\[
\begin{aligned}
    H_{\mathrm{HTG}}(\mu)
    &\le
    \min\left\{
        \sum_{i=1}^{K}\frac{\sigma^2}{\mu_i^2},
        H_{\mathrm{pos}}(\mu)
    \right\}
    \\
    &\le
    \min\left\{
        \sum_{i=1}^{K}\frac{\sigma^2}{\mu_i^2},
        \frac{K\sigma^2}{\mu_\star^2}
    \right\}
    =
    \frac{H_{\mathrm{SH}}(\mu)}{Q-1}.
\end{aligned}
\]
The second inequality uses
$\sum_i(\mu_i)_+^2\ge\mu_\star^2$.
The same definitions also give
$H_{\mathrm{HTG}}(\mu)\le H_{\mathrm{pos}}(\mu)$.
These inequalities compare the leading complexities in the guarantees;
they do not assert a finite-budget ordering of the actual failure
probabilities.

\subsection{Strict separations}
\label{app:baseline-separation}

We now establish separations in the actual sample budgets required for
posterior success. Fix $\sigma,\sigma_0>0$, $\delta\in(0,1/2)$, and a
centered prior $\mu_0=0$. Consider the family
\begin{equation}
\label{eq:app-separation-family}
    \mu_1=\frac{\sigma}{K},
    \qquad
    \mu_2=\cdots=\mu_K=-\sigma,
    \qquad
    K=2^\ell,\quad \ell\ge1.
\end{equation}
For a sampling algorithm $\mathcal A$, define
\begin{equation}
\label{eq:app-sample-complexity}
    \mathcal T_{\mathcal A}(\eta;\mu)
    :=
    \min\left\{
        T\in\mathbb N:
        T\ge K,\;
        \mathbb P_\mu^{\mathcal A}
        \bigl(\mathrm{Succ}_T^c\bigr)
        \le\eta
    \right\}.
\end{equation}
We first let $\eta\downarrow0$ with $K$ and all model parameters
fixed, and then compare the dependence of the resulting coefficients
on $K$.

\begin{theorem}[Separation from uniform sampling and sequential halving]
\label{thm:app-baseline-separation}
On the family~\eqref{eq:app-separation-family},
\begin{equation}
\label{eq:app-separation-budgets}
\begin{aligned}
    \lim_{\eta\downarrow0}
    \frac{\mathcal T_{\mathrm{Uniform}}(\eta;\mu)}
         {\log(1/\eta)}
    &=
    2K^3,
    \\
    \lim_{\eta\downarrow0}
    \frac{\mathcal T_{\mathrm{SH}}(\eta;\mu)}
         {\log(1/\eta)}
    &=
    \frac{2K^3(Q-1)}{K-1},
    \\
    \limsup_{\eta\downarrow0}
    \frac{\mathcal T_{\mathrm{HTG}}(\eta;\mu)}
         {\log(1/\eta)}
    &\le
    192(K^2+K-1).
\end{aligned}
\end{equation}
Consequently, in this order of limits, uniform sampling and the specified
SH allocation can require factors $\Omega(K)$ and $\Omega(\log K)$
more observations, respectively, than HTG.
\end{theorem}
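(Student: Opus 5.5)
The plan is to treat the three methods separately. Each limit reduces to an exact or near-exact exponent for $\mathbb P_\mu^{\mathcal A}(\mathrm{Succ}_T^c)$ as $T\to\infty$, which we then invert into a sample-complexity statement. On the family~\eqref{eq:app-separation-family}, arm $1$ is the only positive arm, with $\mu_1^2/\sigma^2=1/K^2$, and every other arm sits at distance $\sigma$ below zero. For the inversion, we use the following fact. If $-\frac1T\log\mathbb P_\mu^{\mathcal A}(\mathrm{Succ}_T^c)\to\rho>0$, then $\mathcal T_{\mathcal A}(\eta;\mu)/\log(1/\eta)\to 1/\rho$.

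The lower direction of this fact holds because $\mathcal T_{\mathcal A}(\eta;\mu)\to\infty$. That follows since every fixed budget has failure probability bounded away from zero: apply Proposition~\ref{prop:finite-budget-lower-bounds}, or observe directly that all $Z$-scores can be simultaneously negative with positive probability. The upper direction follows because eventually $\mathbb P\le e^{-(\rho-\xi)T}$.

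\textbf{Uniform sampling.} We read off the exponent from Proposition~\ref{prop:app-uniform-sampling}, equation~\eqref{eq:app-uniform-exponent}. It gives $\rho=\sum_i(\mu_i)_+^2/(2K\sigma^2)=1/(2K^3)$, and hence the limit $2K^3$.

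\textbf{HTG.} We use only an upper bound. In the first term of~\eqref{eq:H-HTG}, $H_{\mathrm{HTG}}(\mu)\le\sum_i\sigma^2/\mu_i^2=K^2+(K-1)$. Theorem~\ref{thm:htg-main} then gives $\mathbb P\le 2KT\exp(-(T-T_0)/(192(K^2+K-1)))$, where $T_0$ is a constant. Hence $\liminf -\frac1T\log\mathbb P\ge 1/(192(K^2+K-1))$, and the upper-direction half of the inversion gives the stated $\limsup$. Alternatively, one can appeal directly to~\eqref{eq:htg-sufficient-budget}, noting that $\log(2KT/\eta)=(1+o(1))\log(1/\eta)$ along the relevant $T=O(\log(1/\eta))$.

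\textbf{Sequential halving.} This is the main work. First we do exact budget accounting for $K=2^\ell$, so that $Q-1=\ell$. Up to $O(1)$ rounding from the floors and the leftover $D$, we have $q_r\approx T/(\ell K2^{1-r})$ and $n_r\approx T(2^r-1)/(\ell K)$. Stage $r$ eliminates $K/2^r$ arms. Summing the eliminated arms' usage shows that the final survivor's count is $N^\star:=n_{Q-1}+D=T(K-1)/(K(Q-1))+O(K\log K)$, a deterministic quantity.

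For the upper bound on failure, we use a union bound over three events. The first is that arm $1$ fails at count $N^\star$, which has probability $\Phi(c'-\frac{1}{K}\sqrt{N^\star})$ and exponent $(K-1)/(2K^3(Q-1))$. The second is that arm $1$ is eliminated at some stage $r$. At that stage, some negative arm's empirical mean at the deterministic count $n_r\ge q_1\approx T/(\ell K)$ beats arm $1$'s, despite a mean gap of $\sigma(1+1/K)$. A union bound over the pair and over $r$ gives exponent at least of order $1/(4\ell K)$, which is strictly larger than $(K-1)/(2K^3\ell)$.

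For the matching lower bound on failure, we write $\mathbb P(\mathrm{Succ}^c)\ge\mathbb P(Z_{1,N^\star}<z)-\mathbb P(\text{arm }1\text{ eliminated})-\sum_{j\ge2}\sum_{n\in\mathcal C_T}\mathbb P(Z_{j,n}\ge z)$. On the event $\{\text{arm }1\text{ survives}\}$ its terminal count is exactly $N^\star$, and $\mathcal C_T$ is deterministic. The negative-arm certification terms have exponent at least of order $1/(2\ell K)$, again strictly larger. Hence the exact exponent is $\rho_{\mathrm{SH}}=(K-1)/(2K^3(Q-1))$, and inverting gives the limit $2K^3(Q-1)/(K-1)$.

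The step I expect to be the main obstacle is the SH sandwich. It requires carefully verifying that the competing exponents, namely early elimination of arm $1$ and spurious certification of a negative arm, are strictly larger than $(K-1)/(2K^3(Q-1))$ for every $K=2^\ell$ with $\ell\ge1$; the case $K=2$ needs a direct check. It also requires confirming that the floor and leftover rounding changes $N^\star$ only by $O(1)$ in $T$, so that it does not affect the exponent.

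For the final claim, we compare the coefficients: $2K^3/(192(K^2+K-1))=\Omega(K)$, and $\bigl[2K^3(Q-1)/(K-1)\bigr]/\bigl[192(K^2+K-1)\bigr]=\Omega(\log K)$.
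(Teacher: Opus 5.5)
Your proposal is correct and follows essentially the same route as the paper. Uniform sampling comes from the exact exponent in Proposition~\ref{prop:app-uniform-sampling}. HTG comes from $H_{\mathrm{HTG}}(\mu)\le\sum_i\sigma^2/\mu_i^2=K^2+K-1$ plugged into Theorem~\ref{thm:htg-main}. Sequential halving uses the same sandwich $a_T-d_T-b_T\le\mathbb P_\mu^{\mathrm{SH}}(\mathrm{Succ}_T^c)\le a_T+d_T$, built from the deterministic survivor count $n_{Q-1}+D$, the elimination event, and spurious negative certificates. All three are then converted to budgets by the same exponent-to-sample-complexity inversion. One small slip: your ``three events'' in the SH upper bound are really two, namely failure of arm~$1$ at $N^\star$ and elimination of arm~$1$, while the negative-certificate event is needed only for the lower bound.
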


\begin{proof}
\emph{Uniform sampling.}
There is one positive arm, with mean $\sigma/K$, so
$H_{\mathrm{pos}}(\mu)=K^3$.
Proposition~\ref{prop:app-uniform-sampling} gives
\begin{equation}
\label{eq:app-uniform-separation-exponent}
    \lim_{T\to\infty}
    -\frac1T
    \log
    \mathbb P_\mu^{\mathrm{Uniform}}
    \bigl(\mathrm{Succ}_T^c\bigr)
    =
    \frac{1}{2K^3}.
\end{equation}

\emph{Sequential halving.}
For all sufficiently large $T$, the SH branch is used.
Since $K=2^\ell$ and $Q-1=\ell$, its active-set sizes are
$k_r=K/2^{r-1}$.
Recall $n_r$ and $D$ from~\eqref{eq:app-sh-counts}, and set
$n_\star:=n_{Q-1}+D$, the count arm $1$ receives if it survives
every stage.

The rounding remainder satisfies
\[
    0\le D
    <
    \sum_{r=1}^{Q-1}k_r
    =
    2K-2.
\]
Therefore, for fixed $K$,
\begin{equation}
\label{eq:app-sh-count-limits}
    \frac{q_1}{T}
    \longrightarrow
    \frac{1}{K(Q-1)},
    \qquad
    \frac{n_\star}{T}
    \longrightarrow
    \frac{K-1}{K(Q-1)}.
\end{equation}

Define the deterministic-count failure probability
\[
    a_T
    :=
    \mathbb P_\mu(Z_{1,n_\star}<z)
    =
    \Phi\left(
        z\sqrt{1+\frac{\lambda}{n_\star}}
        -
        \frac{\sqrt{n_\star}}{K}
    \right).
\]
Gaussian tail asymptotics and~\eqref{eq:app-sh-count-limits} imply
\begin{equation}
\label{eq:app-sh-positive-tail-exponent}
    \lim_{T\to\infty}
    -\frac1T\log a_T
    =
    \frac{K-1}{2K^3(Q-1)}
    =:\alpha_K.
\end{equation}

We compare $a_T$ with the actual failure probability by bounding
elimination of arm $1$ and certification of a negative arm.

Let $E_T$ be the event that arm $1$ is eliminated.
If this happens at stage $r$, some negative arm $j$ has
$\overline X_{j,n_r}\ge\overline X_{1,n_r}$.
At the deterministic count $n_r$,
\[
    \overline X_{j,n_r}-\overline X_{1,n_r}
    \sim
    \mathcal N\left(
        -\sigma\left(1+\frac1K\right),
        \frac{2\sigma^2}{n_r}
    \right).
\]
A union bound over all negative arms and stages gives
\begin{equation}
\label{eq:app-sh-elimination-bound}
    \mathbb P_\mu^{\mathrm{SH}}(E_T)
    \le
    (K-1)(Q-1)
    \exp\left[
        -\frac{q_1(1+1/K)^2}{4}
    \right]
    =:d_T.
\end{equation}
This bound uses the potential observations at deterministic counts;
it does not condition on either arm being active.

Next, let $F_T$ be the event that some negative arm has a terminal
posterior score at least $z$.
For $j\ge2$ and a deterministic count $n$, the centered prior gives
\[
    Z_{j,n}\ge z
    \quad\Longrightarrow\quad
    W_{j,n}
    \ge
    n+z\sqrt{n+\lambda}
    \ge n.
\]
Hence $\mathbb P_\mu(Z_{j,n}\ge z)\le e^{-n/2}$.
Every terminal count belongs to $\mathcal C_T$, whose elements are
at least $q_1$. Thus
\begin{equation}
\label{eq:app-sh-negative-certificate-bound}
    \mathbb P_\mu^{\mathrm{SH}}(F_T)
    \le
    (K-1)Qe^{-q_1/2}
    =:b_T.
\end{equation}

On $E_T^c$, arm $1$ has terminal count $n_\star$.
Consequently,
\begin{equation}
\label{eq:app-sh-failure-sandwich}
    a_T-d_T-b_T
    \le
    \mathbb P_\mu^{\mathrm{SH}}
    \bigl(\mathrm{Succ}_T^c\bigr)
    \le
    a_T+d_T.
\end{equation}
The lower bound follows by requiring $Z_{1,n_\star}<z$ together
with $E_T^c\cap F_T^c$; the upper bound follows because, outside
$E_T$, terminal failure requires $Z_{1,n_\star}<z$.

By~\eqref{eq:app-sh-count-limits},
\[
    \lim_{T\to\infty}
    -\frac1T\log d_T
    =
    \frac{(1+1/K)^2}{4K(Q-1)},
    \qquad
    \lim_{T\to\infty}
    -\frac1T\log b_T
    =
    \frac{1}{2K(Q-1)}.
\]
Both exponents are strictly larger than $\alpha_K$ for every $K\ge2$.
Thus $d_T/a_T\to0$ and $b_T/a_T\to0$.
The sandwich~\eqref{eq:app-sh-failure-sandwich} implies
\begin{equation}
\label{eq:app-sh-separation-exponent}
    \lim_{T\to\infty}
    -\frac1T
    \log
    \mathbb P_\mu^{\mathrm{SH}}
    \bigl(\mathrm{Succ}_T^c\bigr)
    =
    \alpha_K
    =
    \frac{K-1}{2K^3(Q-1)}.
\end{equation}

\emph{Converting the baseline exponents into sample budgets.}
For any of the specified algorithms, the failure probability is strictly
positive at every finite budget. Indeed, under the centered prior, the
event that the first $T$ potential observations of every arm are negative
has positive probability and forces every sampled arm to have a negative
posterior score.

More generally, suppose $p_T>0$ for every finite $T$ and
$-T^{-1}\log p_T\to r>0$. For every $\varepsilon\in(0,r)$ and all
sufficiently large $T$,
\[
    e^{-(r+\varepsilon)T}
    \le p_T
    \le e^{-(r-\varepsilon)T}.
\]
The finitely many smaller budgets have failure probabilities bounded
away from zero. It follows that
\[
    \min\{T\ge K:p_T\le\eta\}
    \sim
    \frac{1}{r}\log\frac1\eta,
    \qquad \eta\downarrow0.
\]
This conclusion does not require $p_T$ to be monotone in $T$.
Applying it to~\eqref{eq:app-uniform-separation-exponent}
and~\eqref{eq:app-sh-separation-exponent} proves the first two
statements in~\eqref{eq:app-separation-budgets}.

\emph{HTG.}
On the family~\eqref{eq:app-separation-family},
\[
    \sum_{i=1}^{K}\frac{\sigma^2}{\mu_i^2}
    =
    K^2+K-1.
\]
The definition~\eqref{eq:H-HTG} therefore gives
$H_{\mathrm{HTG}}(\mu)\le K^2+K-1$.
For fixed $K$, the offset $T_0$ in
Theorem~\ref{thm:htg-main} is finite and independent of $\eta$.

Fix any $a>192H_{\mathrm{HTG}}(\mu)$ and choose
$T=\lceil a\log(1/\eta)\rceil$.
For sufficiently small $\eta$, Theorem~\ref{thm:htg-main} yields
\[
\begin{aligned}
    \mathbb P_\mu^{\mathrm{HTG}}
    \bigl(\mathrm{Succ}_T^c\bigr)
    &\le
    2KT
    \exp\left(
        \frac{T_0}{192H_{\mathrm{HTG}}(\mu)}
    \right)
    \eta^{a/(192H_{\mathrm{HTG}}(\mu))}
    \\
    &\le \eta.
\end{aligned}
\]
Here the exponent of $\eta$ is strictly larger than one, whereas
$T=O(\log(1/\eta))$.
Letting $a\downarrow192H_{\mathrm{HTG}}(\mu)$ gives
\[
    \limsup_{\eta\downarrow0}
    \frac{\mathcal T_{\mathrm{HTG}}(\eta;\mu)}
         {\log(1/\eta)}
    \le
    192H_{\mathrm{HTG}}(\mu)
    \le
    192(K^2+K-1).
\]
This proves the third statement.

Finally, these budget bounds imply
\[
\begin{aligned}
    \liminf_{\eta\downarrow0}
    \frac{\mathcal T_{\mathrm{Uniform}}(\eta;\mu)}
         {\mathcal T_{\mathrm{HTG}}(\eta;\mu)}
    &\ge
    \frac{K^3}{96(K^2+K-1)},
    \\
    \liminf_{\eta\downarrow0}
    \frac{\mathcal T_{\mathrm{SH}}(\eta;\mu)}
         {\mathcal T_{\mathrm{HTG}}(\eta;\mu)}
    &\ge
    \frac{K^3(Q-1)}
         {96(K-1)(K^2+K-1)}.
\end{aligned}
\]
The right-hand sides grow as $\Omega(K)$ and $\Omega(\log K)$,
respectively, as $K$ increases through powers of two.
\end{proof}

The two separations have different allocation explanations.
Uniform sampling devotes only a $1/K$ fraction of the budget to the unique
positive arm. For SH, even when this arm survives every stage, its terminal
sample count is asymptotic to $T(1-1/K)/(Q-1)$. Moreover, the probability of eliminating the positive arm is asymptotically
negligible compared with the probability that the surviving positive arm
fails to obtain a certificate. Thus, the additional $\log K$ cost of SH
arises from splitting the budget across $\Theta(\log K)$ elimination stages,
rather than from erroneous elimination.

The baseline asymptotics above are exact, whereas the HTG result is an
upper bound on its required budget.



\section{Flat-Prior Frequentist Interpretation}
\label{app:frequentist-interpretation}

We give the short derivation underlying
Section~\ref{sec:frequentist-interpretation}. Recall that
$\lambda=\sigma^2/\sigma_0^2$ and
\[
    Z_{i,n}
    =
    \frac{\lambda\mu_0+n\overline X_{i,n}}
         {\sigma\sqrt{n+\lambda}}.
\]
Letting $\sigma_0^2\to\infty$ with $\mu_0$ fixed gives
$\lambda\to0$, and hence
\[
    Z_{i,n}
    \longrightarrow
    \frac{\sqrt n\,\overline X_{i,n}}{\sigma}.
\]
This is the usual known-variance Gaussian test statistic for
\[
    H_{0,i}:\mu_i\le0
    \qquad\text{versus}\qquad
    H_{1,i}:\mu_i>0.
\]
Accordingly, define the one-sided $p$-value
\[
    p_{i,n}
    :=
    1-\Phi\left(
        \frac{\sqrt n\,\overline X_{i,n}}{\sigma}
    \right).
\]
Since the posterior positivity probability is $\Phi(Z_{i,n})$, the
flat-prior limit gives
\[
    \mathbb P(\mu_i>0\mid X_{i,1:n})
    =
    1-p_{i,n}.
\]

Because $\Phi$ is increasing, ranking arms by posterior $Z$-score is
equivalent to ranking them by increasing $p$-value. Thus HTG's halving
phase retains the active arms with the smallest current $p$-values, its
greedy phase samples an arm with the smallest current $p$-value, and its
terminal recommendation satisfies
\[
    J_T
    \in
    \arg\min_{i\in[K]}p_{i,N_i(T)}.
\]
Finally, since $z=\Phi^{-1}(1-\delta)$,
\[
    Z_{i,n}\ge z
    \quad\Longleftrightarrow\quad
    p_{i,n}\le\delta.
\]
Therefore,
\[
    \mathrm{Succ}_T
    =
    \left\{
        \min_{i\in[K]}p_{i,N_i(T)}
        \le\delta
    \right\}.
\]
Hence, in the flat-prior limit, HTG can be interpreted in terms
of one-sided Gaussian test statistics and $p$-values.

\section{Experimental Details and Additional Results}
\label{app:experiments}

We supplement Section~\ref{sec:experiments} with full parameter sweeps,
comparisons across the number of arms, and allocation trajectories.
All experiments compare HTG, uniform sampling, and posterior-based
sequential halving (SH), using observation variance $\sigma^2=1$,
working prior $\mathcal{N}(0,1)$, and credibility tolerance
$\delta=0.01$. Every posterior incorporates all observations collected
from the corresponding arm, and terminal recommendation maximizes the
posterior $Z$-score over all arms, including those removed during halving.
Success is evaluated after exactly $T$ observations; an earlier
posterior certificate does not terminate a run.

We use $10{,}000$ Monte Carlo repetitions per setting and budget.
Success-probability plots show pointwise $95\%$ Wilson confidence
intervals. The intervals for empirical budget thresholds and mean
allocation fractions are described separately below.
Throughout this appendix, the means are fixed within each setting:
the reported probabilities are fixed-instance success probabilities.

\subsection{Full parameter sweeps}
\label{app:exp-sweeps}

Figures~\ref{fig:app-negative-sweep}
and~\ref{fig:app-positive-sweep} extend the two instance families
in Section~\ref{sec:experiments} across the full tested parameter
ranges. We retain $K=32$ and positive mean $a=0.1$, and use the
same budget grid as in the main experiments: $16$ approximately
logarithmically spaced budgets from $576$ to $65{,}536$, rounded
to multiples of $K$. The value of the varied parameter is indicated
in each panel.

\paragraph{Negative competitors.}
Figure~\ref{fig:app-negative-sweep} considers
\[
    \mu=(a,-b,\ldots,-b),
    \qquad a=0.1.
\]
This family corresponds to the single-positive-arm example in
Section~\ref{sec:htg-interpretation}. Since $\sigma^2=1$,
\[
    H_{\mathrm{cap}}
    =
    \frac{1}{a^2}
    +
    \frac{K-1}{(a+b)^2},
    \qquad
    H_{\mathrm{pos}}
    =
    \frac{K}{a^2}.
\]
Increasing $b$ therefore reduces the separation component of
$H_{\mathrm{cap}}$ without changing the positive mean or
$H_{\mathrm{pos}}$. The sweep places the difficult-negative and
easy-negative settings from the main text within a broader range
of competitor means. In particular, the well-separated setting
exhibits a substantial advantage for HTG, consistent with its ability
to concentrate observations on the positive arm once the negative
competitors can be screened effectively.

\begin{figure*}[tp]
    \centering
    \includegraphics[
        width=0.90\textwidth,
        height=0.76\textheight,
        keepaspectratio
    ]{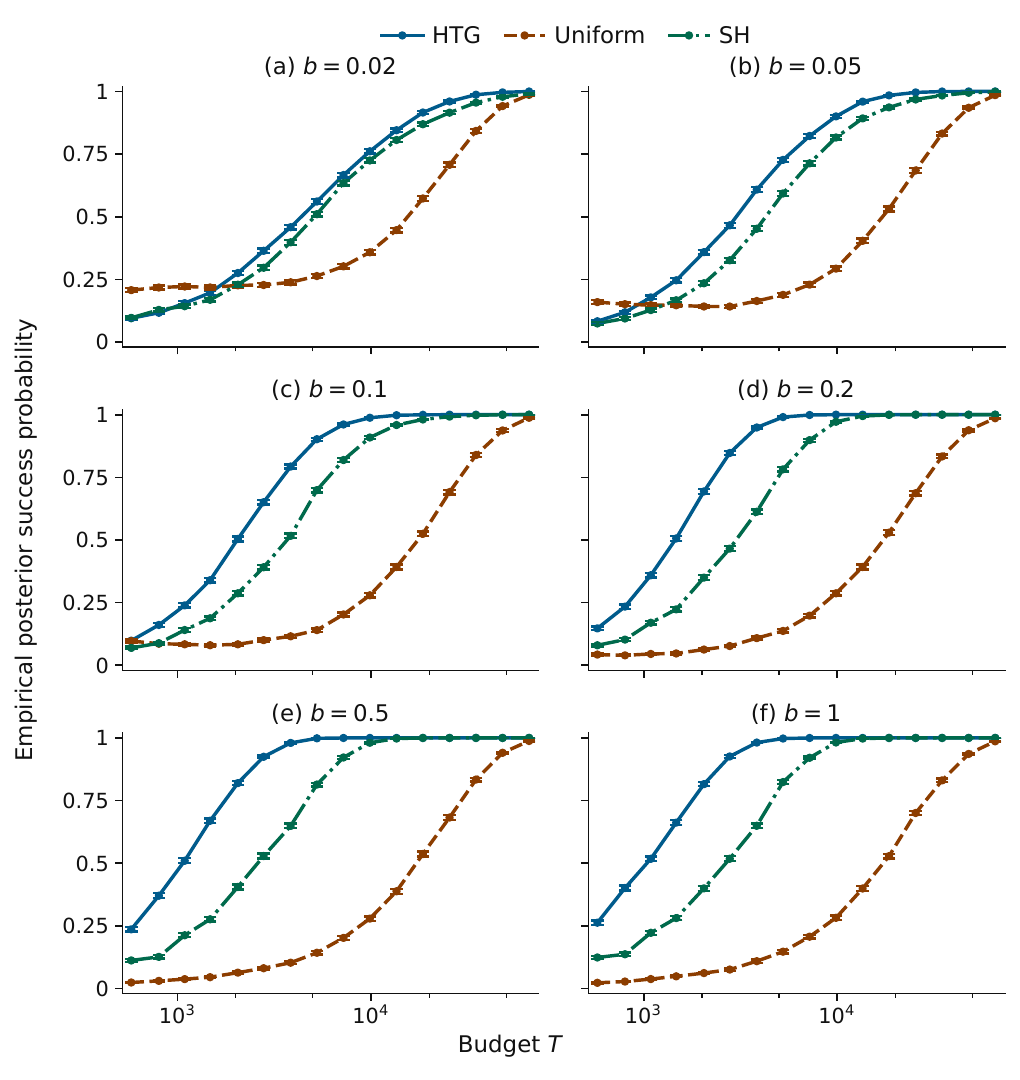}
    \caption{Full negative-competitor sweep.
    One arm has mean $0.1$, and the remaining $31$ arms have
    mean $-b$, with the value of $b$ indicated in each panel.
    Curves show terminal posterior success probability versus budget
    for HTG, uniform sampling, and SH.
    Error bars show pointwise $95\%$ Wilson Monte Carlo confidence
    intervals based on $10{,}000$ repetitions.}
    \label{fig:app-negative-sweep}
\end{figure*}

\paragraph{Certificates on negative recommendations.}
When $b$ is small, sampling noise can produce a posterior certificate
for an arm whose true mean is negative. Such an outcome is included
in the posterior success objective. More precisely,
\[
\begin{aligned}
    \mathbb{P}^{\mathcal A}_{\mu}(\mathrm{Succ}_T)
    ={}&
    \mathbb{P}^{\mathcal A}_{\mu}
    (\mathrm{Succ}_T,\mu_{J_T}>0)
    \\
    &+
    \mathbb{P}^{\mathcal A}_{\mu}
    (\mathrm{Succ}_T,\mu_{J_T}\le 0).
\end{aligned}
\]
This distinction helps interpret uniform sampling's initial lead
in the $b=0.02$ setting: a higher posterior success probability
need not correspond to a higher probability of certifying a truly
positive recommendation. The reported success curves retain both
terms, as required by the objective.

\paragraph{Multiple positive arms.}
Figure~\ref{fig:app-positive-sweep} considers
\[
    \mu=
    \bigl(
    \underbrace{a,\ldots,a}_{m\text{ arms}},
    \underbrace{-1,\ldots,-1}_{K-m\text{ arms}}
    \bigr),
    \qquad a=0.1.
\]
The sweep varies the number of positive arms without changing their
individual means. Here,
\[
    H_{\mathrm{pos}}
    =
    \frac{K}{ma^2}
    =
    \frac{3200}{m}.
\]
Additional positive arms provide more opportunities to obtain a
certificate, the same mechanism illustrated by the all-positive
example in Section~\ref{sec:htg-interpretation}.
Uniform sampling benefits substantially from this multiplicity,
and the main-text comparison between $m=4$ and $m=16$ shows a
narrowing of HTG's advantage over SH.
The full sweep documents how the methods compare across the
remaining tested values of $m$.

\begin{figure*}[tp]
    \centering
    \includegraphics[
        width=0.90\textwidth,
        height=0.76\textheight,
        keepaspectratio
    ]{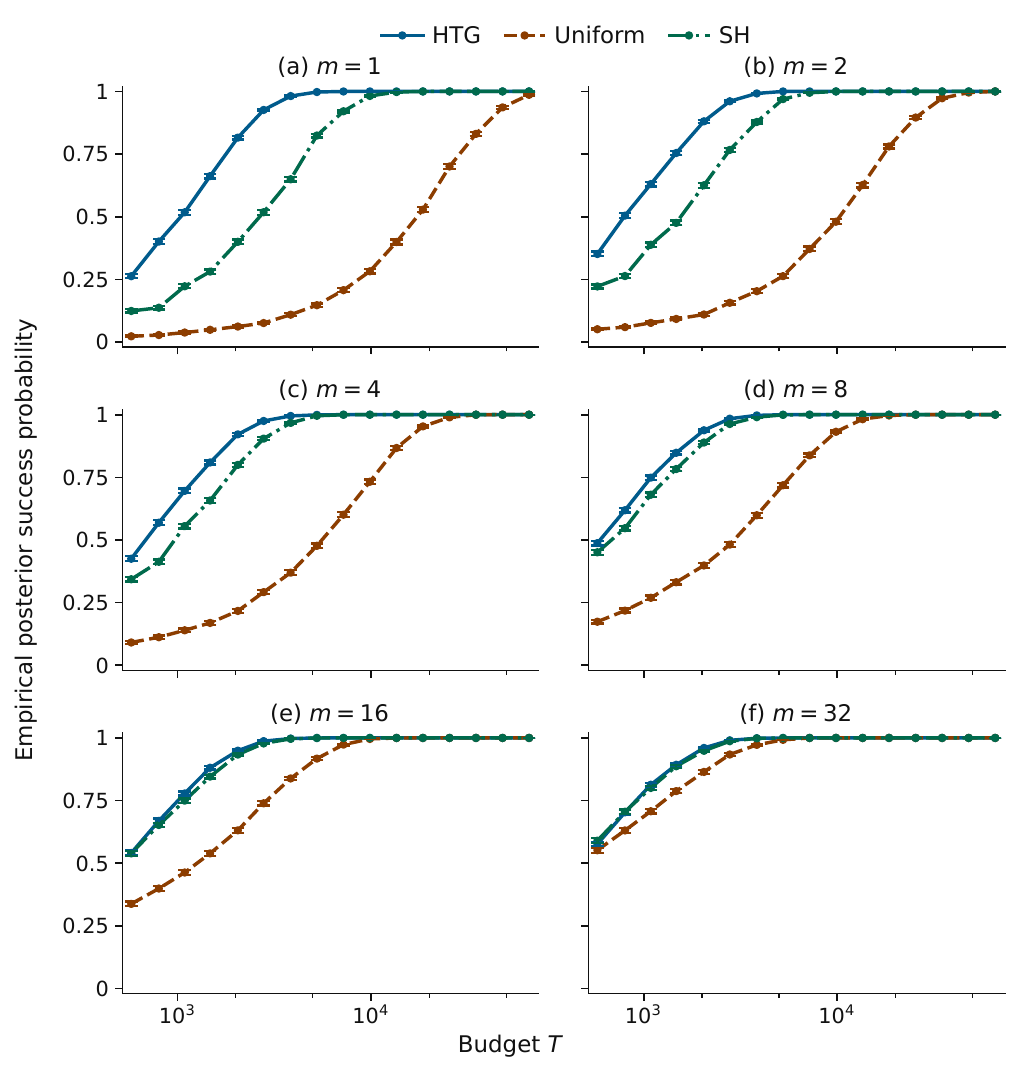}
    \caption{Full positive-arm-count sweep.
    Among $K=32$ arms, $m$ have mean $0.1$ and the remaining
    $K-m$ have mean $-1$, with $m$ indicated in each panel.
    Curves show terminal posterior success probability versus budget.
    Error bars show pointwise $95\%$ Wilson Monte Carlo confidence
    intervals based on $10{,}000$ repetitions.}
    \label{fig:app-positive-sweep}
\end{figure*}

\subsection{Scaling with the number of arms}
\label{app:exp-k-scaling}

We next examine how the budget required for successful certification
changes with $K$ on the instance family
\[
    \mu^{(K)}
    =
    (1/K,-1,\ldots,-1),
    \qquad
    K\in\{8,16,32,64\}.
\]
The positive mean decreases as the number of arms increases, while
the negative means remain fixed. This is the same family considered in
Appendix~\ref{app:baseline-separation}. In the high-confidence regime,
the analysis there gives the sample-complexity scalings
\[
    T_{\mathrm{HTG}}
    =
    O\!\left(K^2\log\frac{1}{\eta}\right),
    \qquad
    T_{\mathrm{SH}}
    =
    \Theta\!\left(K^2\log K\log\frac{1}{\eta}\right),
    \qquad
    T_{\mathrm{Uniform}}
    =
    \Theta\!\left(K^3\log\frac{1}{\eta}\right),
\]
where $1-\eta$ is the target posterior success probability. Thus,
suppressing the common dependence on $\eta$, the predicted dependence
on the number of arms is $K^2$, $K^2\log K$, and $K^3$, respectively.

Let $\widehat p_{\mathcal A,K}(T)$ denote the empirical terminal
success probability of method $\mathcal A$ at budget $T$, and let
$\mathcal G_{\mathcal A,K}$ be its tested budget grid. We report
\[
    \widehat T_{0.9}^{\mathcal A}(K)
    :=
    \min\left\{
        T\in\mathcal G_{\mathcal A,K}:
        \widehat p_{\mathcal A,K}(T)\ge 0.9
    \right\}.
\]
Thus, $\widehat T_{0.9}^{\mathcal A}(K)$ is the first tested budget
reaching $90\%$ empirical success.

Figure~\ref{fig:app-k-scaling} reports these thresholds together
with their normalized versions:
\[
    \frac{\widehat T_{0.9}^{\mathrm{HTG}}(K)}{K^2},
    \qquad
    \frac{\widehat T_{0.9}^{\mathrm{SH}}(K)}{K^2\log K},
    \qquad
    \frac{\widehat T_{0.9}^{\mathrm{Uniform}}(K)}{K^3}.
\]
These normalizations
correspond to the $K$-dependence predicted by the theoretical rates
above. Because the denominators differ across methods, the vertical
ordering in the normalized plot should not be interpreted as a direct
performance ranking.

HTG requires the smallest threshold budget at every tested value of
$K$, followed by SH and then uniform sampling. The normalized
thresholds range from $21.6$ to $27.3$ for HTG, from $19.1$ to $21.7$
for SH, and from $12.9$ to $13.0$ for uniform sampling. Their relatively
limited variation over the tested values of $K$ is consistent with the
predicted $K^2$, $K^2\log K$, and $K^3$ dependences, respectively.
The underlying success-versus-budget curves are provided in
Figure~\ref{fig:app-k-success-curves}.

\begin{figure*}[tp]
    \centering
    \includegraphics[width=0.49\textwidth]
        {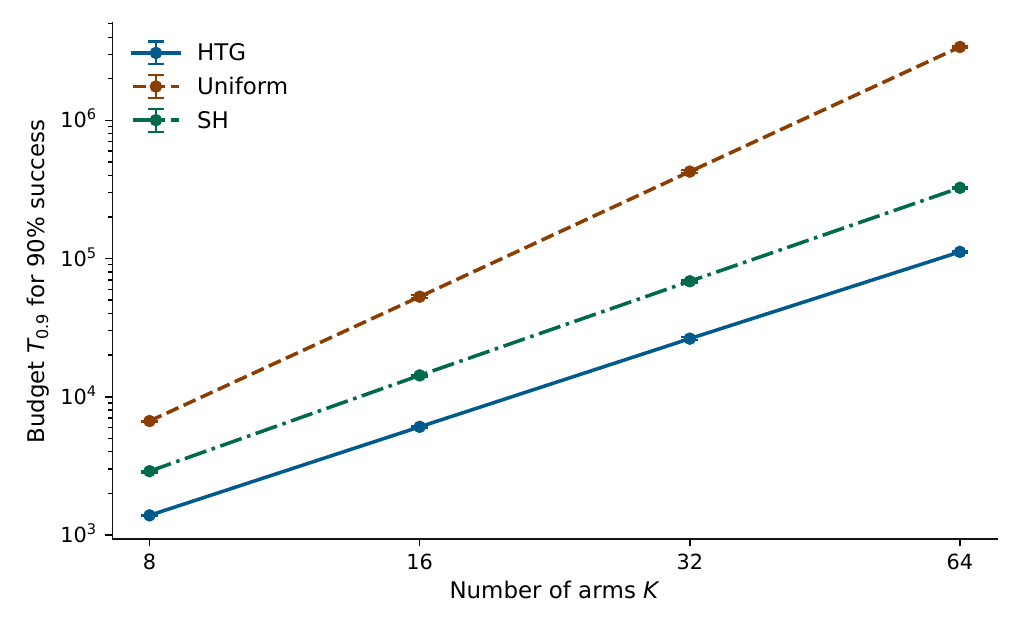}
    \hfill
    \includegraphics[width=0.49\textwidth]
        {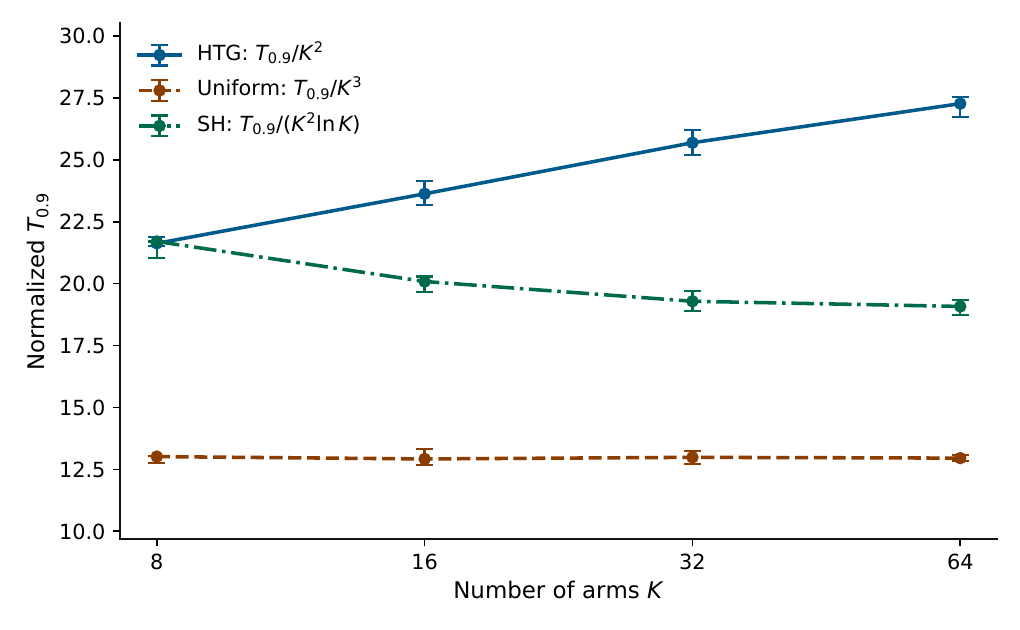}
    \caption{Scaling with the number of arms on
    $\mu^{(K)}=(1/K,-1,\ldots,-1)$.
    Left: first tested budget reaching $90\%$ empirical terminal
    success for each method.
    Right: the same thresholds divided by $K^2$ for HTG,
    $K^2\log K$ for SH, and $K^3$ for uniform sampling.
    The logarithm is natural.
    Error bars show $95\%$ bootstrap intervals.
    Thresholds are computed on the tested budget grids.}
    \label{fig:app-k-scaling}
\end{figure*}

\begin{figure*}[tp]
    \centering
    \includegraphics[
        width=0.90\textwidth,
        height=0.76\textheight,
        keepaspectratio
    ]{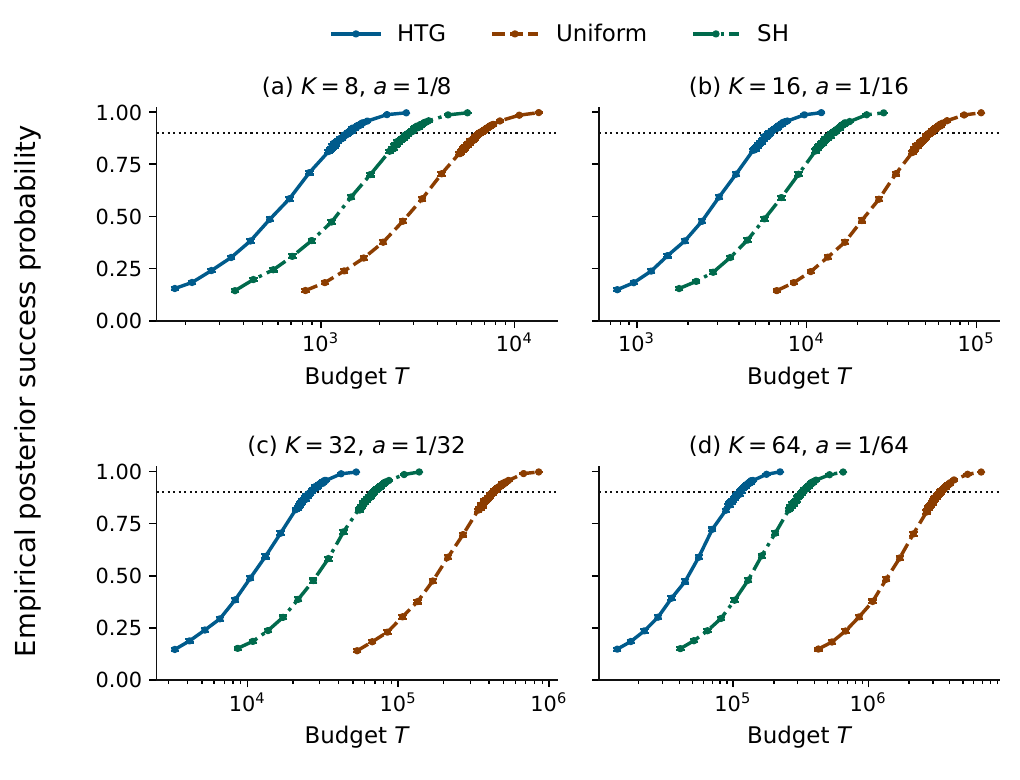}
    \caption{Terminal posterior success probability versus budget
    for the $K$-scaling study.
    Each setting uses $\mu^{(K)}=(1/K,-1,\ldots,-1)$, with
    $K\in\{8,16,32,64\}$.
    Error bars show pointwise $95\%$ Wilson Monte Carlo confidence
    intervals based on $10{,}000$ repetitions per method and budget.
    These curves underlie the threshold summaries in
    Figure~\ref{fig:app-k-scaling}.}
    \label{fig:app-k-success-curves}
\end{figure*}

\subsection{Allocation behavior}
\label{app:exp-allocation}

We examine allocation trajectories at budgets
$T=2{,}048$ and $T=7{,}200$ for the four representative settings
from the main experiments. The single-positive-arm settings use
$b\in\{0.02,1\}$, and the multiple-positive-arm settings use
$m\in\{4,16\}$. All have $K=32$ and positive mean $a=0.1$.

For each time $t\le T$, we consider the fraction of observations
allocated to positive arms collectively,
\[
    \frac{1}{t}\sum_{i:\mu_i>0}N_i(t),
\]
and the largest fraction allocated to any individual positive arm,
\[
    \frac{1}{t}\max_{i:\mu_i>0}N_i(t).
\]
With a single positive arm, both quantities reduce to $N_1(t)/t$.

The figures plot the Monte Carlo average of these allocation fractions
at each $t$. For the largest individual allocation, the maximum is
taken within each repetition before averaging, so the most-sampled
positive arm need not have the same label across repetitions.
Error bars show pointwise $95\%$ Monte Carlo confidence intervals
for the corresponding mean fractions. The dotted vertical line marks
the end of HTG's halving phase and the start of its greedy phase.

\paragraph{Negative competitors.}
Figure~\ref{fig:app-allocation-negative} shows how the separation of
negative competitors affects allocation to the single positive arm.
At $T=2{,}048$, HTG allocates an average of $51.5\%$ of observations
to the positive arm when $b=1$, compared with $15.4\%$ when $b=0.02$.
Uniform sampling allocates $1/K=3.125\%$ in either setting.
This allocation difference supports the interpretation of the
success curves: easier negative competitors allow HTG to direct
more of its budget toward collecting evidence for the positive arm.

\begin{figure*}[tp]
    \centering
    \includegraphics[
        width=0.90\textwidth,
        height=0.76\textheight,
        keepaspectratio
    ]{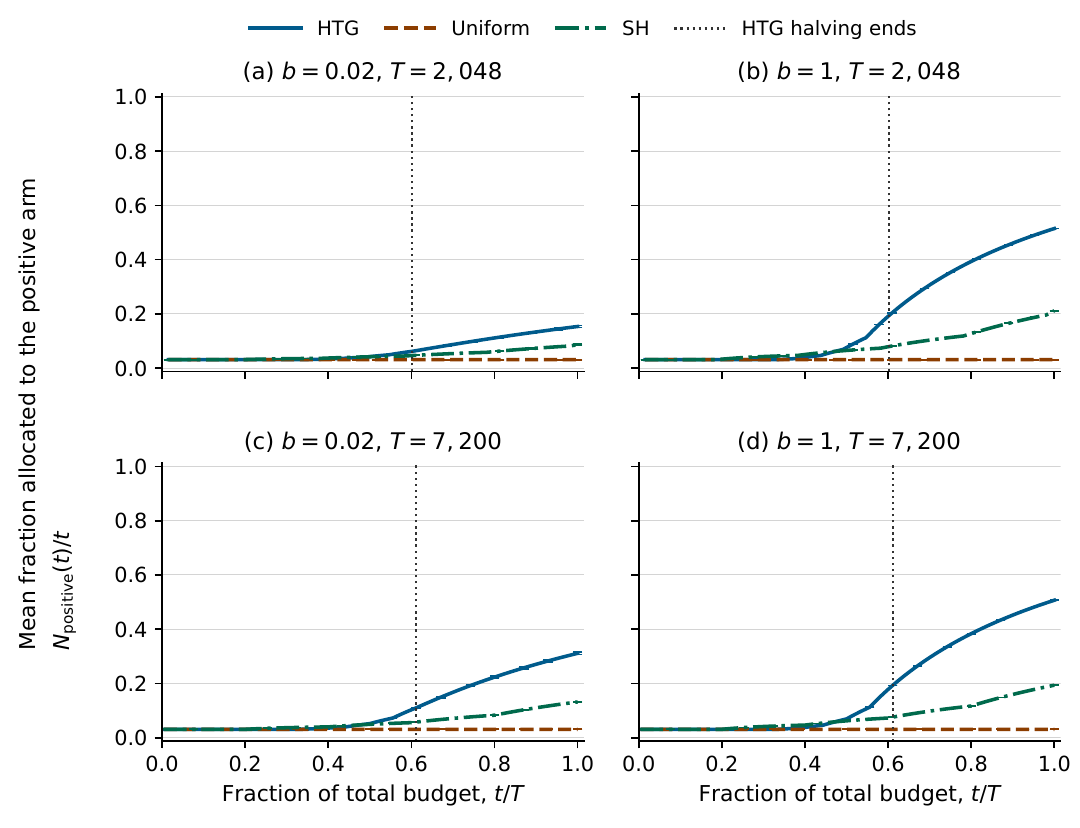}
    \caption{Allocation to the single positive arm for
    $\mu=(0.1,-b,\ldots,-b)$, with $K=32$ and
    $b\in\{0.02,1\}$.
    The two rows correspond to budgets $T=2{,}048$ and $T=7{,}200$.
    Curves show mean allocation fractions across $10{,}000$
    repetitions, with pointwise $95\%$ Monte Carlo confidence
    intervals. The dotted vertical line marks the end of HTG's
    halving phase.}
    \label{fig:app-allocation-negative}
\end{figure*}

\paragraph{Multiple positive arms.}
Figures~\ref{fig:app-allocation-positive-total}
and~\ref{fig:app-allocation-positive-max} separate total allocation
to positive arms from concentration on an individual positive arm.
These quantities can favor different methods.
For example, with $m=16$ and $T=2{,}048$, SH allocates more
observations to positive arms collectively than HTG:
$90.2\%$ versus $83.6\%$.
However, HTG's mean largest individual positive-arm allocation
is substantially greater: $48.4\%$ versus $21.2\%$ for SH.
Thus, the fraction of observations assigned to positive arms alone
does not explain the terminal success probabilities; the distribution
of those observations among the positive arms also matters.

\begin{figure*}[tp]
    \centering
    \includegraphics[
        width=0.90\textwidth,
        height=0.76\textheight,
        keepaspectratio
    ]{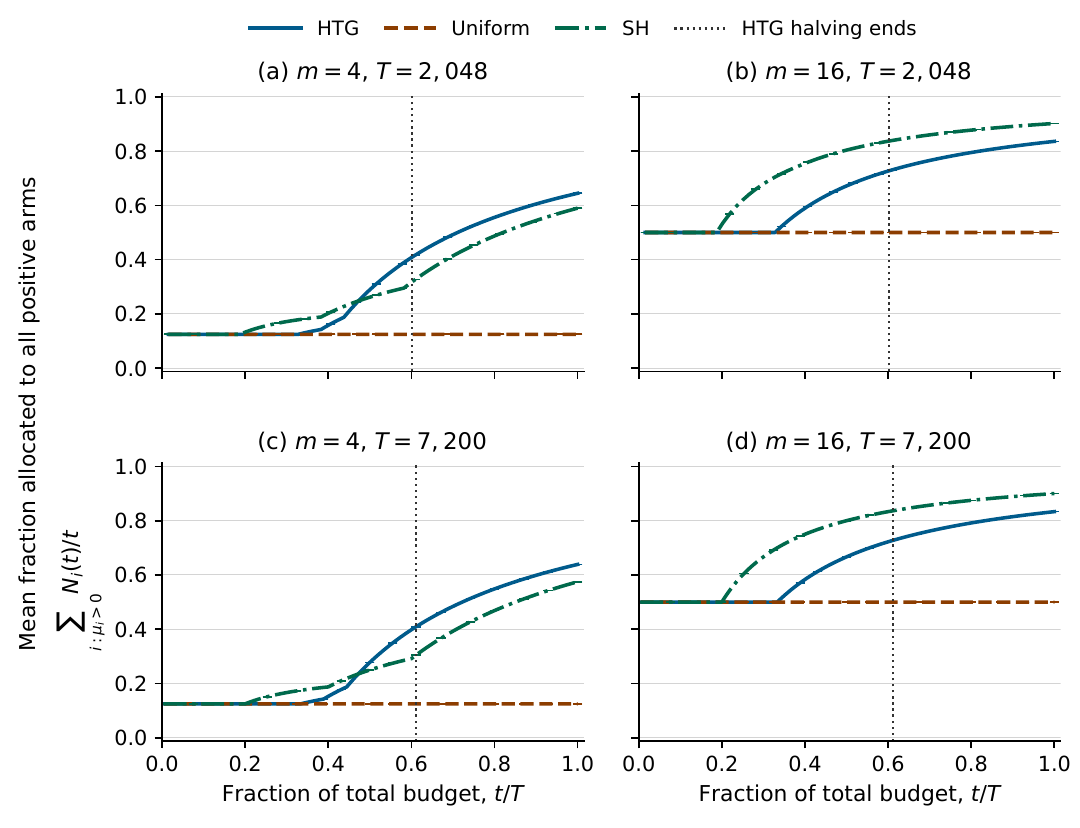}
    \caption{Total allocation to positive arms.
    Among $K=32$ arms, $m\in\{4,16\}$ have mean $0.1$ and the
    remaining arms have mean $-1$.
    The two rows correspond to budgets $T=2{,}048$ and $T=7{,}200$.
    Curves show mean total positive-arm allocation fractions across
    $10{,}000$ repetitions, with pointwise $95\%$ Monte Carlo
    confidence intervals. The dotted vertical line marks the end
    of HTG's halving phase.}
    \label{fig:app-allocation-positive-total}
\end{figure*}

\begin{figure*}[tp]
    \centering
    \includegraphics[
        width=0.90\textwidth,
        height=0.76\textheight,
        keepaspectratio
    ]{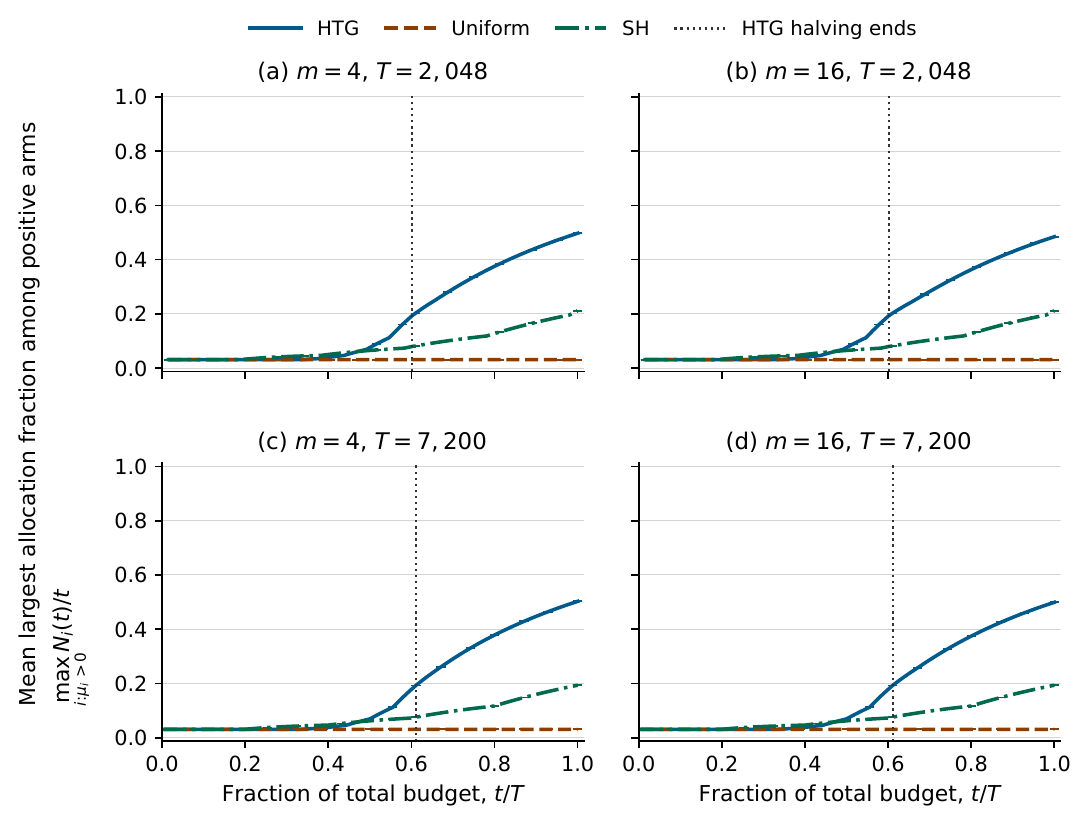}
    \caption{Largest individual positive-arm allocation.
    Settings and row organization match
    Figure~\ref{fig:app-allocation-positive-total}.
    The largest positive-arm allocation fraction is computed within
    each repetition and then averaged over $10{,}000$ repetitions.
    Error bars show pointwise $95\%$ Monte Carlo confidence intervals.
    The dotted vertical line marks the end of HTG's halving phase.}
    \label{fig:app-allocation-positive-max}
\end{figure*}



\end{document}